\newtheorem*{theorem*}{Theorem}
\newtheorem*{corollary*}{Corollary}
\newtheorem{definition}{Definition}
\newtheorem{lemma}{Lemma}
\newtheorem{assumption}{Assumption}
\newtheorem{corollary}{Corollary}
\newtheorem{remark}{Remark}
\newcommand\tab[1][1cm]{\hspace*{#1}}
\newcommand\St{\mathcal{S}}
\newcommand\A{\mathcal{A}}
\newcommand\Xc{\mathcal{X}}
\newcommand\E{\mathbb{E}}
\newcommand\N{\mathcal{N}}
\newcommand\M{\mathcal{M}}
\newcommand\indic{\mathbbm{1}}
\newcommand\R{\mathbb{R}}
\newcommand\Reg{\mathcal{R}}
\newcommand\xbar{\bar{x}}
\newcommand\pibar{\bar{\pi}}
\newcommand\pt{\tilde{p}}
\newcommand\ct{\tilde{c}}
\newcommand\dt{\tilde{d}}
\newcommand\Dt{\tilde{D}}
\newcommand\Ltk{\tilde{L}_{k}}
\newcommand\pb{\bar{p}}
\newcommand\cb{\bar{c}}
\newcommand\db{\bar{d}}
\newcommand\nkm{n_h^{k-1}(s,a) \vee 1}
\newcommand\curly[1]{\{#1\}}
\newcommand\Simplex[1]{\Delta\left(#1\right)}
\newcommand\AlgName{\textsc{OptAug-CMDP}}
\newcommand\kPrime{\max \left \{  \frac{S^2 A H^3}{(1-\nu) \gamma}, \frac{\mathcal{N}SAH^4}{(1-\nu)^2 \gamma^2} \right \}}
\title{Cancellation-Free Regret Bounds for Lagrangian Approaches in Constrained Markov Decision Processes}
\author{%
  Adrian Müller\\
  Department of Computer Science\\
  ETH Zürich\\
  \texttt{adrian.mueller@inf.ethz.ch} \\
  \And
  Pragnya Alatur \\
  Department of Computer Science \\
  ETH Zürich and ETH AI Center \\
  \texttt{pragnya.alatur@ai.ethz.ch} \\
  \AND
  Giorgia Ramponi \\
  Department of Computer Science \\
  ETH Zürich and ETH AI Center \\
  \texttt{giorgia.ramponi@inf.ethz.ch} \\
  \And
  Niao He \\
  Department of Computer Science \\
  ETH Zürich and ETH AI Center \\
  \texttt{niao.he@inf.ethz.ch} \\
}
\begin{document}

\maketitle

\begin{abstract}
    Constrained Markov Decision Processes (CMDPs) are one of the common ways to model safe reinforcement learning problems, where constraint functions model the safety objectives. Lagrangian-based dual or primal-dual algorithms provide efficient methods for learning in CMDPs. For these algorithms, the currently known regret bounds in the finite-horizon setting allow for a \textit{cancellation of errors}; one can compensate for a constraint violation in one episode with a strict constraint satisfaction in another. However, we do not consider such a behavior safe in practical applications.

    \looseness -1In this paper, we overcome this weakness by proposing a novel model-based dual algorithm \AlgName~for tabular finite-horizon CMDPs. Our algorithm is motivated by the augmented Lagrangian method and can be performed efficiently. We show that during $K$ episodes of exploring the CMDP, our algorithm obtains a regret of $\tilde{O}(\sqrt{K})$ for both the objective and the constraint violation. Unlike existing Lagrangian approaches, our algorithm achieves this regret without the need for the cancellation of errors.
\end{abstract}


\section{Introduction}

In classical reinforcement learning \citep[RL,][]{sutton2018reinforcement}, the goal is to learn an optimal policy when interacting with an unknown Markov decision process \citep[MDP,][]{bellman1957}. In MDPs, an agent aims to minimize the expected cumulative cost incurred during an episode. However, the learned policy must often adhere to certain safety constraints in practical scenarios. For example, when navigating a car on a race track, one would want to avoid crossing the boundaries of the track too often. Such safety requirements are commonly modeled via constrained Markov decision processes \citep[CMDPs,][]{altman1999constrained}. We consider the problem of learning an optimal feasible policy in a CMDP. That is, the goal of the agent is to minimize the cost while satisfying the constraints\footnote{i.e., being feasible for the CMDP, which we also refer to as being \textit{safe}}. Since the CMDP is unknown, we formalize these desiderata by considering the regret with respect to an optimal feasible solution for the cost and the constraint violation, respectively. 

\looseness -1Importantly, we do not consider it sufficient to provide an agent whose cumulative cost suboptimality and cumulative constraint violation are sublinear. This is because an agent can have a negative constraint violation (by being very safe but incurring a higher cost than an optimal safe policy) or a positive constraint violation (by being unsafe but incurring a lower cost than an optimal safe policy). Thus, terms from these two cases can cancel each other out, which we refer to as the so-called \textit{cancellation of errors} \citep{efroni2020exploration}. An agent for which these cumulative terms are sublinear may violate the safety constraints heavily during learning by oscillating around an optimal safe policy. While such a method converges on average to an optimal safe policy\footnote{Here, we refer to the value functions for the underlying CMDP.}, it neither allows for directly extracting an optimal feasible policy nor does it guarantee safety during learning. We consider a stronger notion of regret, which overcomes this issue by considering the sum of the \textit{positive parts} of the error terms instead. As pointed out by \citet{efroni2020exploration}, it is of major theoretical interest whether Lagrangian approaches can achieve sublinear bounds for this notion of regret.

\looseness -1The approaches to learning CMDPs are split into linear programming (LP) and Lagrangian approaches\footnote{i.e., dual and primal-dual algorithms} \citep{altman1999constrained,efroni2020exploration}. While LP-based algorithms generally allow for sublinear regret bounds without the need for cancellations \citep{efroni2020exploration}, they can be expensive when dealing with large state-action spaces. In contrast, in Lagrangian methods, we can solve the optimization problem arising in each episode using dynamic programming (DP), offering a computational benefit over solving LPs. However, the currently known bounds for Lagrangian approaches only concern a weaker form of regret that allows for the aforementioned cancellation of errors. As \citet{efroni2020exploration} pointed out, this is due to the underlying optimization methods rather than a weakness of the analysis. The main goal of this paper is to provide a Lagrangian-based algorithm that guarantees sublinear regret without the cancellation of errors. To achieve this, the key problem we solve is stopping the agent from oscillating around an optimal safe policy. Our contributions can be summarized as follows:
\begin{itemize}[leftmargin=*]
    \item We propose a novel model-based dual algorithm, \AlgName, for learning an optimal feasible policy in an unknown CMDP (\cref{sec:algos}). The algorithm is split into a model pre-training phase and an optimistic exploration phase motivated by the augmented Lagrangian method.
    \item We show that a sub-problem required for \AlgName~can be reformulated as a convex optimization problem. We provide an efficient algorithm to solve it (\cref{sec:algos}) despite the non-linearity introduced by considering the augmented Lagrangian.
    \item We prove that with high probability, during $K$ episodes \AlgName~achieves regrets for the cost and the constraint violations of $\tilde{O}(\sqrt{K})$ when only highlighting the dependency on $K$. Notably, we achieve this bound for the stronger notion of regret, which does not allow for the cancellation of errors. This partly settles the open problem posed by \citet{efroni2020exploration}.
\end{itemize}


\subsection{Related Work}

The most relevant foundation for our work is the work by \citet{efroni2020exploration}, which reviews model-based algorithms for CMDPs and establishes regret bounds for them. The authors analyze the LP-based algorithms \textsc{OptCMDP} and \textsc{OptCMDP-bonus} that achieve sublinear regret without cancellations. However, the Lagrangian-based algorithms they analyze, \textsc{OptDual-CMDP} and \textsc{OptPrimalDual-CMDP}, only achieve sublinear regret with cancellations. This is because the oscillatory behavior of dual and primal-dual descent methods prevents the individual iterates from being approximately feasible. Therefore, the authors pose the open question of whether one can devise Lagrangian-based algorithms that do not suffer from this issue.

The majority of relevant work providing guarantees for Lagrangian approaches to CMDPs is concerned with model-free primal-dual algorithms \citep{ding2020natural, bai2022achieving, ding2022convergencepol, ding2022convergencenat} or model-based dual algorithms \citep{liu2021learning, efroni2020exploration}. However, in both cases, the existing literature does not address the issue of the cancellation of errors when exploring the CMDP and thus does not provide a method for \textit{safely} finding an optimal feasible policy.\footnote{That is, there are no guarantees on the constraint violations of the individual iterates for these methods.} While there is work on analyzing different forms of regularization to the Lagrangian-based algorithms, their guarantees either require the cancellation of errors \citep{liu22global,li2021faster} or assume access to exact value functions \citep{ying2022dual}. \citet{moskovitz2023reload} propose a first approach to address the cancellation of errors by replacing gradients with their optimistic gradient counterparts in well-known Lagrangian-based RL algorithms. While they show the empirical success of their methods, their theoretical analysis only covers a hypothetical algorithm with implicit updates and requires full knowledge of the CMDP. \citet{stooke2020responsive} address the underlying problem of oscillations of Lagrangian methods for CMDPs via PID control in the context of deep RL, providing experimental successes but no guarantees. Thus, to the best of our knowledge, none of the existing works address the open question of \citet{efroni2020exploration} in the setup of an unknown CMDP. 

While there are mentions of using the augmented Lagrangian method for CMDPs \citep{li2021augmented, lu2022single, krishnamurthy2003self, krishnamurthy2011real, krishnamurthy2012gradient}, all such works are concerned with research questions rather different from ours. The only one similar to ours is that of \citet{li2021augmented}. The authors propose a surrogate reward inspired by the augmented Lagrangian to promote safety during learning. However, their method significantly differs from ours as it is concerned with \textit{instantaneous} constraints in an infinite-horizon CMDP. Moreover, their analysis only shows that an optimal policy for their surrogate MDP is optimal for the original CMDP (under certain assumptions).


\section{Background and Problem Formulation} \label{sec:notation}


\textbf{Notation:} For any $n\in\mathbb{N}$, we use the short-hand notation $[n]$ to refer to the set of integers $\curly{1, \dots ,n}$. For any finite set $X$, we denote by $\Simplex{X}$ the probability simplex over $X$, i.e., $\Simplex{X} = \{v \in [0,1]^{X} | \sum_{x \in X} v(x) = 1\}$. For $a\in\R$, we set $[a]_+ := \max\{0, a\}$ to be the positive part of $a$. For a vector $b\in\R^n$, we write $[b]_+$ for the vector whose entries are the positive parts of the corresponding entries of $b$. Similarly, for two vectors $a, b\in\R^n$, we write $a\leq b$ as a short-hand for $a_i \leq b_i$, for all $i\in[n]$. Throughout the paper, we denote the Euclidean norm by $\| \cdot \|$.

\looseness -1 We define a finite-horizon CMDP as a tuple $\mathcal{M} = (\St, \A, H, p, c, (d_i)_{i\in[I]}, (\alpha_i)_{i\in[I]}, s_1)$ with the following components. $\St$ and $\A$ are the state and action space, respectively, and $H>0$ denotes the horizon. Every episode consists of $H$ steps, starting from the initial state $s_1\in\St$. At every step $h\in[H]$, $p_h(s' | s,a)$ denotes the probability of transitioning to state $s'$ if the current state and action are $s$ and $a$. Moreover, $c_h \colon \St\times\A\rightarrow [0,1]$ denotes the objective cost function at step $h$. For $i\in [I]$, $d_{i,h} \colon \St\times\A\rightarrow [0,1]$ refers to the cost function of the $i$-th constraint at step $h$, and $\alpha_i\in [0,H]$ denotes the threshold for the $i$-th constraint. We assume the state and action space are finite, with cardinalities $S$ and $A$, respectively. Furthermore, we assume the agent does not know the transition probabilities, objective costs, or constraint costs beforehand. Whenever the agent takes an action $a$ in state $s$ at time $h$, it observes costs sampled from random variables $C_h(s,a) \in [0,1]$ and $( D_{i,h}(s,a))_{i\in [I]} \in [0,1]^I$ such that $\E[C_h(s,a)] = c_h(s,a)$ and $\E[D_{i,h}(s,a)] = d_{i,h}(s,a)$, for all $i\in [I]$. The agent interacts with the CMDP by playing a policy $\pi=(\pi_h)_{h\in[H]}$, meaning that if in state $s$ at step $h \in [H]$, the agent samples its next action from $\pi_{h}(\cdot|s) \in \Simplex{\A}$. For an arbitrary cost function $l = (l_h)_{h\in [H]}$ and transition probabilities $p' = (p_h')_{h\in [H]}$, the expected cumulative cost under policy $\pi$ is measured by the value function defined as follows:
\begin{align*}
    V^{\pi}(l, p') :=& \E \bigg[ \sum_{h=1}^H l_h(s_h, a_h)  \mid s_1, \pi, p' \bigg],
\end{align*}
where $(s_h,a_h)$ denotes the state-action pair at step $h$ under transitions $p'$ and policy $\pi$. We fix an optimal solution of the CMDP, given by a policy $\pi^*$, defined as follows:
\begin{equation}
    \pi^* \in \arg\min_{\pi \in \Pi} ~~~~ V^{\pi}(c, p)  ~~~~
        \text{s.t.}  ~~~~ V^{\pi}(d_i, p) \leq \alpha_i ~~~~(\forall i \in [I]). \label{opt:cmdp}
\end{equation}
For brevity, we write $V^{\pi}((l_i)_{i\in[I]}, p') := (V^{\pi}(l_1, p'), \dots, V^{\pi}(l_I, p'))^T \in \R^{I}$ in the presence of $I$ different cost functions $l_i = (l_{i,h})_{h\in[H]}$ ($i \in [I]$), and $\alpha := (\alpha_1, \dots, \alpha_I)^T \in \R^I$. Furthermore, we denote by $\Pi:= \{\pi=(\pi_h)_{h\in[H]}|\pi_h:\St\rightarrow \Simplex{\A}\}$ the entire policy space.


\textbf{Strong duality and dual methods:} \citet{altman1999constrained,paternain2019strongduality} proved that CMDPs possess the \textit{strong duality} property; i.e., given a feasible CMDP $\M$, the following relation holds:
\begin{equation}
\label{eq:minmaxformulation}
    V^{\pi^*}(c,p) = 
    \underbrace{\min_{\pi\in \Pi} \max_{\lambda\in \R_+^I} ~ \mathcal{L}(\pi, \lambda)}_{\text{Primal problem}} = \underbrace{\max_{\lambda\in \R_+^I} \min_{\pi\in \Pi} ~ \mathcal{L}(\pi, \lambda)}_{\text{Dual problem}},
\end{equation}
where $\mathcal{L}(\pi,\lambda) := V^{\pi}(c,p) + \lambda^T \left(V^{\pi}((d_i)_{i\in[I]}, p) - \alpha \right)$ denotes the Lagrangian. The strong duality property gives theoretical justification to dual methods \citep{altman1999constrained,efroni2020exploration, paternain2019strongduality}. These methods are popular, as the dual problem can be solved via a sequence of (extended\footnote{If the CMDP is unknown, backward induction involves an extra optimization step over the possible transitions \citep{jin2019learning}.}) \textit{unconstrained} MDPs, each of which can be solved efficiently via DP (as opposed to using LPs for solving a sequence of CMDPs, for which the Bellman optimality principle does not hold). 

In this work, we solve the min-max problem in \cref{eq:minmaxformulation} using the augmented Lagrangian method. This is beneficial since the analysis of the augmented Lagrangian method allows for convergence guarantees concerning the last iterate and not just the averaged iterates. Since the occurring sub-problems are not MDPs anymore, we justify in \cref{sec:algos} how they can still be solved efficiently by leveraging a Frank-Wolfe scheme and DP. We are now ready to state the main problem formulation of our work.


\textbf{Problem formulation:} In our setting, the agent interacts with the unknown CMDP over a fixed number of $K>0$ episodes. In every episode $k\in [K]$, the agent plays a policy $\pi_k\in\Pi$ and its goal is to (simultaneously) minimize its regrets, defined as follows:
\begin{align*}
    \Reg(K; c) &:= \sum_{k\in[K]} [ V^{\pi_k}( c, p) - V^{\pi^*}(c,p) ]_+, \tag{Objective strong regret}\\
    \Reg(K; d) &:= \max_{i\in [I]} \sum_{k\in[K]} \left[ V^{\pi_k}(d_i, p) - \alpha_i \right]_+. \tag{Constraint strong regret}
\end{align*}
For simplicity, we will write \textit{regret} when referring to the strong regret throughout the paper. As we pointed out, existing works on Lagrangian-based algorithms \citep{liu2021learning, efroni2020exploration, bai2022achieving, ding2022convergencepol, ding2022convergencenat} only prove sublinear guarantees on a \textit{weaker} notion of regret, defined as follows:
\begin{align*}
        \Reg_{\pm}(K; c) &:= \sum_{k\in[K]} ( V^{\pi_k}(c,p) - V^{\pi^*}(c,p) ), \tag{Objective weak regret}\\ \Reg_{\pm}(K; d) &:= \max_{i\in [I]} \sum_{k\in[K]} \left( V^{\pi_k}(d_i,p) - \alpha_i \right). \tag{Constraint weak regret}
\end{align*}
The weak regrets allow for the aforementioned cancellation of errors; i.e., even if they are sublinear in $K$, the agent can continue compensating for a constraint violation in one episode with strict constraint satisfaction in another. On the other hand, a sublinear bound on the stronger notion of regret guarantees that the agent achieves a low constraint violation in most episodes.\footnote{Indeed, fix $\epsilon > 0$ and suppose $\Reg(K;d) \leq \tilde{O}(\sqrt{K})$. Then there exist at most $\tilde{O}(\sqrt{K} / \epsilon)$ episodes with a constraint violation of at least $\epsilon$. In other words, only a small fraction $\tilde{O}(1/(\epsilon\sqrt{K}))$ of the iterates is not $\epsilon$-safe. In comparison, this is by no means guaranteed by a sublinear bound on $\Reg_{\pm}(K;d)$.} While this is crucial for practical applications, providing a bound for the strong regrets is strictly more challenging than for the weaker notion.


\section{Algorithm and Main Result} \label{sec:algos}

\looseness -1In this section, we introduce our algorithm \AlgName~(see \cref{algo:pp}) and state its regret guarantees in \cref{thm:optaugrlpp-regret}. In \AlgName, the agent interacts with the unknown CMDP over a fixed number of $K>0$ episodes. To encourage exploration of the CMDP, the agent follows the well-known \textit{optimism in the face of uncertainty} principle \citep{auer2008near} and builds an optimistic estimate of the CMDP in every episode $k\in [K]$. That is, in every episode $k\in [K]$, the agent builds optimistic estimates $\ct_k$ for the objective cost $c$, optimistic estimates $\dt_{i,k}$ for the constraint costs $d_i$, and a set of plausible transition probabilities $B_k^p$, which we specify in the following paragraph.


\textbf{Optimistic estimates:} Let $n_h^{k-1}(s,a) := \sum_{l=1}^{k-1} \indic_{\{ s_h^l=s,~ a_h^l = a \}}$ count the number of times that the state-action pair $(s,a)$ is visited at step $h$ before episode $k$. Here, ($s_h^l$, $a_h^l$) denotes the state-action pair visited at step $h$ in episode $l$. First, we compute the empirical averages of the cost and transition probabilities as follows:
\begin{equation*}
\begin{gathered}
\begin{aligned}
    \cb_h^{k-1}(s,a) :=& \frac{\sum_{l=1}^{k-1} C_h^{l}(s,a) \indic_{\{ s_h^l=s,~ a_h^l = a \}}}{\nkm}, \\
    \db_{i,h}^{k-1}(s,a) :=& \frac{\sum_{l=1}^{k-1} D_{i,h}^{l}(s,a) \indic_{\{ s_h^l=s,~ a_h^l = a \}}}{\nkm} \quad (\forall i\in [I]), \\
    \pb_h^{k-1}(s'|s,a) :=& \frac{\sum_{l=1}^{k-1}, \indic_{\{ s_h^l=s,~ a_h^l = a,~ s_{h+1}^l=s' \}}}{\nkm},
\end{aligned}
\end{gathered}
\end{equation*}
where $a\vee b := \max\{a,b\}$. With this, we define the optimistic costs and the set of plausible transition probabilities as
\begin{flalign}
    \ct_{k,h}(s,a) &:= \cb_h^{k-1}(s,a) - \beta^c_{k,h}(s,a), \nonumber\\
    \dt_{i,k,h}(s,a) &:= \db_{i,h}^{k-1}(s,a) - \beta_{i,k,h}^{d}(s,a) \label{eq:opt-c} \quad (\forall i\in [I]),\\
    B_{k,h}^{p}(s,a) &:= \{ \pt_{h}(\cdot | s,a) \in \Simplex{\St} \mid \forall s' \in \St \colon |\pt_h(s' | s,a) - \bar{p}_h^{k-1}(s' | s,a) | \leq \beta_{k,h}^{p}(s,a,s') \}, \nonumber\\
    B_k^p &:= \{ \pt \mid \forall s,a,h \colon \pt_h(\cdot | s,a) \in B_{k,h}^p(s,a) \} \nonumber.
\end{flalign}
Here, $\beta^c_{k,h}(s,a) = \beta_{i,k,h}^{d}(s,a) > 0$ denote the exploration bonus for the costs and $\beta_{k,h}^{p}(s,a,s') > 0$ denotes the confidence threshold for the transitions. For any $\delta \in (0,1)$, we specify the correct values for those quantities in \cref{sec:regret-prelim-omit} to obtain our regret guarantees with probability at least $1- \delta$. In the next paragraph, we describe how the agent computes its policy in episode $k$.


\textbf{Policy update:}
Given the optimistic CMDP at episode $k$, we derive the next policy $\pi_{k}$ using a scheme motivated by the augmented Lagrangian method (cf. \cref{eq:primal-step,eq:dual-step}). At the end of this section, we explain how we can perform the optimization step in \cref{eq:primal-step} efficiently, up to a specified accuracy $\epsilon_k$. For now, we treat this part of the algorithm as a black-box subroutine.

\looseness -1Optimistic exploration alone with the augmented Lagrangian, however, is insufficient to obtain sublinear regret guarantees for our algorithm. For technical reasons, our analysis also requires the optimistic CMDPs with costs $\ct_k$, $(\dt_{i,k})_{i\in [I]}$ and transitions $\pt_{k} \in B_k^p$ (cf. \cref{eq:primal-step}) to be \textit{strictly} feasible, in every episode $k\in [K]$. Our analysis in \cref{sec:term2} explains the need for this technical assumption. To address this issue, we propose a pre-training phase \textit{before} the optimistic exploration phase, which we describe in the following paragraph.


\textbf{Pre-training phase:} In this phase, the agent repeatedly executes a fixed policy $\pibar$ for $K'\leq K$ episodes. The policy $\pibar$ must be \textit{strictly} feasible for the true CMDP, which we formally state in the following assumption.
\begin{restatable}[Strictly feasible policy]{assumption}{assmildslater} \label{ass:mild-slater}
    We have access to a policy $\pibar$ such that $V^{\pibar}(d_i, p) < \alpha_i$ for all $i\in[I]$. Furthermore, we assume that the slack $\gamma$, defined below, is known\footnote{In other words, there is a Slater point for the constraint set of the true CMDP. Note that knowing a lower bound instead of the exact slack $\gamma$ is sufficient as well.}:
    \begin{align*}
        \gamma := \min_{i \in [I]} ~ (\alpha_i - V^{\pibar}(d_i, p)) \in (0, H].
    \end{align*}
\end{restatable}
\looseness -1Note that this is stronger than only assuming the \textit{existence} of a strictly feasible policy. However, making this assumption is realistic in many practical setups \citep{liu2021learning, bura2022dope}. For example, in the case of a race car that should not exceed the boundary of a track, it would be sufficient to have access to the policy of a car that strictly stays within the boundaries but may be arbitrarily slow. To address the technical issue mentioned earlier, we need to set $K'$ such that the following condition holds for some $\nu \in (0,1)$, with high probability:
\begin{align*}
    V^{\pibar}(\dt_{i,k}, \pt_{k}) \leq \alpha_i - \nu \gamma  ~~~~ (\forall i\in [I] ~ \forall k\in \left\{K', \dots ,K \right\}),
\end{align*}
where $\pt_{k}$ is defined by the update in \cref{eq:primal-step}.\footnote{For $k=K'$, we just take any $\pt_{K'} \in B_{K'}^p$} In particular, the fixed policy $\pibar$ is strictly feasible for the optimistic CMDP at every episode $k \geq K'$. Indeed, if the agent plays $\pibar$ for the first $K'$ episodes of the algorithm with a large enough constant $K'$, then for all future episodes, the constraint value function of $\pibar$ under the estimated model is close to the constraint value function of $\pibar$  under the true model. Thus, we can ensure the above condition. Leveraging an adaption of an on-policy error bound (see \cref{sec:onp-bounds}), we prove that it is sufficient to set $K'$ as follows:
\begin{restatable}[]{lemma}{proppretraining} \label{prop:pretraining}
    Suppose that Assumption \ref{ass:mild-slater} holds, i.e., the agent has access to a strictly feasible $\pibar$ and its slack $\gamma>0$. Fix any $\nu \in (0,1)$, and suppose the agent executes $\pibar$ for $K' = \tilde{O} \left( \kPrime \right)$ episodes, where $\N := \max_{s,a,h} |\{ s' \mid p_h(s' | s,a) > 0 \}|$ denotes the maximum number of transitions. Then, if the agent updates the optimistic CMDP based on the observations from those episodes (cf. \cref{eq:opt-c}), with probability at least $1-\delta$ the following condition is satisfied for every $k\in \left\{K', \dots ,K\right\}$:
    \begin{align*}
        V^{\pibar}(\dt_{i,k}, \pt_{k}) \leq \alpha_i-\nu\gamma ~~~~ (\forall i\in [I]).
    \end{align*}
\end{restatable}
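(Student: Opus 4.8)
The plan is to reduce the claim to an \emph{on-policy} model-estimation error and then argue that executing $\pibar$ for $K'$ episodes drives this error below $(1-\nu)\gamma$. By the definition of the slack in Assumption~\ref{ass:mild-slater} we have $V^{\pibar}(d_i, p) \le \alpha_i - \gamma$ for every $i\in[I]$. Writing $\alpha_i - \nu\gamma = (\alpha_i - \gamma) + (1-\nu)\gamma$, it therefore suffices to show that for every $k\in\{K',\dots,K\}$ and every $i$,
\begin{align*}
    V^{\pibar}(\dt_{i,k}, \pt_{k}) - V^{\pibar}(d_i, p) \le (1-\nu)\gamma .
\end{align*}
Since $\pt_k \in B_k^p$, it is in fact enough to bound the supremum of the left-hand side over all plausible transitions $\pt \in B_k^p$, which decouples the argument from the (unspecified) choice made in \cref{eq:primal-step}.

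First I would invoke the adapted on-policy error bound of \cref{sec:onp-bounds} with the fixed policy $\pibar$. Applying the value-difference (simulation) decomposition to the two models $(\dt_{i,k}, \pt)$ and $(d_i, p)$ gives
\begin{align*}
    V^{\pibar}(\dt_{i,k}, \pt) - V^{\pibar}(d_i, p)
    = \E_{(p,\pibar)}\!\Big[ \sum_{h=1}^H (\dt_{i,k,h} - d_{i,h})(s_h,a_h) \Big]
    + \E_{(p,\pibar)}\!\Big[ \sum_{h=1}^H (\pt_h - p_h)(\cdot\,|s_h,a_h)^\top V^{\pibar}_{h+1}(\dt_{i,k},\pt) \Big],
\end{align*}
where the expectation runs over the trajectory distribution that $\pibar$ induces under the \emph{true} transitions $p$. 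Crucially, these are exactly the state-action pairs that $\pibar$ visits during pre-training, so they are the pairs whose visit counts $n_h^{k-1}(s,a)$ grow. Folding the cost and transition concentration into a single high-probability event gives $\dt_{i,k,h} \le d_{i,h}$ on all such pairs, so the first (cost) term is non-positive, and $|\pt_h - p_h| \le 2\beta^p_{k,h}$ for every $\pt \in B_k^p$; together with $\|V^{\pibar}_{h+1}(\dt_{i,k},\pt)\|_\infty \le H$ this bounds the second term by $\E_{(p,\pibar)}\!\big[\sum_h \sum_{s'} 2\beta^p_{k,h}(s_h,a_h,s')\,H\big]$, and it remains to control this residual.

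The remaining step is to show the residual shrinks at the advertised rate. Using a high-probability lower bound on the counts of the form $n_h^{k-1}(s,a) \gtrsim (k-1)\, q_h^{\pibar}(s,a)$ for $k \ge K'$ (valid because $\pibar$ is fixed, so the per-episode visits concentrate around the occupancy measure $q_h^{\pibar}$ of $\pibar$ under $p$), the leading $\sqrt{1/n}$ part of $\beta^p_{k,h}$ combines with Cauchy--Schwarz over $s'$ (at most $\N$ successors) and over $(s,a)$ (weighted by $q_h^{\pibar}$, yielding a $\sqrt{SA}$ factor) to produce a term of order $H^2\sqrt{\N SA / K'}$, whereas the lower-order $1/n$ part of $\beta^p_{k,h}$ produces a term of order $S^2 A H^3 / K'$. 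Forcing each contribution to be at most $\tfrac12(1-\nu)\gamma$ gives exactly
\begin{align*}
    K' \ge \tilde{O}\!\left( \kPrime \right),
\end{align*}
after which a union bound over $i\in[I]$, over $k\in\{K',\dots,K\}$, and over the $SAH$ triples (all absorbed into $\tilde{O}$) makes the whole event hold with probability at least $1-\delta$.

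I expect the main obstacle to be establishing the count lower bound $n_h^{k-1}(s,a) \gtrsim (k-1)\, q_h^{\pibar}(s,a)$ \emph{uniformly} over all $k\in\{K',\dots,K\}$ and all state-action pairs, while simultaneously handling pairs with tiny occupancy $q_h^{\pibar}(s,a)$, for which the empirical count may be zero and the bonus does not shrink. The clean fix is that such rarely visited pairs contribute negligibly to the occupancy-weighted expectation $\E_{(p,\pibar)}[\,\cdot\,]$ in the first place, so one splits the state-action space at an occupancy threshold and bounds the two pieces separately. Coupling this thresholding with the count concentration, so that a single event underlies both the cost concentration and the count lower bound, is where the care is needed; the rest is routine bookkeeping of the bonus terms.
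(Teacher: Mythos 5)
Your proposal is correct and follows essentially the same route as the paper's proof: reduce the claim via the slack $\gamma$ to showing $|V^{\pibar}(\dt_{i,k},\pt_{k}) - V^{\pibar}(d_i,p)| \le (1-\nu)\gamma$ for all $k\ge K'$, apply the value-difference decomposition for the fixed policy $\pibar$ under the true versus optimistic model, and control the resulting occupancy-weighted bonus sums using the fact that after $K'$ pre-training episodes the counts satisfy $n_h^{k-1}(s,a)\gtrsim K'\,q_h^{\pibar}(s,a)$, which is exactly what the paper's \cref{lem:onp-error-last} delivers. The only divergence is the low-occupancy obstacle you flag at the end: rather than thresholding the state-action space, the paper sidesteps it by using monotonicity of the counters to bound the episode-$k$ expectation by the average over the first $K'$ episodes and then invoking the cumulative on-policy bounds of \citet{efroni2020exploration} (\cref{lem:l36-efroni,lem:37efr}), which absorb rarely visited pairs internally.
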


We present the resulting \AlgName~algorithm in \cref{algo:pp}.
\begin{algorithm}[H]
	\begin{algorithmic}
        \REQUIRE $K$ (total number of episodes), $K'\leq K$ (number of pre-training episodes), $(\eta_k)_{k\geq K'+1}$ (step sizes), $(\epsilon_k)_{k\geq K'+1}$ (accuracies), $\pibar$ (strictly feasible policy), $\alpha$ (constraint thresholds), $\lambda_{K'+1}:=0 \in \R^I$ \vspace{0.2cm}
        \STATE \textit{// Phase 1: Pre-training the model}
        \FOR{$k = 1, \dots, K'$} 
            \STATE Play policy $\pi_{k} = \pibar$, update estimates of the costs $\ct_{k+1}$, $(\dt_{i,k+1})_{i\in[I]}$ and transitions $B_{k+1}^p$ (\cref{eq:opt-c}).
        \ENDFOR \vspace{0.2cm}
        \STATE \textit{// Phase 2: Optimistic exploration with pre-trained model}
        \FOR{$k=K'+1,\dots, K$} 
            \STATE{Update policy (by finding $\pi_{k}$, $\tilde{p}_{k}$ such that the objective is $\epsilon_k$-close to the minimum):}
            \begin{align}
                \pi_{k}, \tilde{p}_{k} := \arg\min_{\substack{\pi \in \Pi \\ p' \in B_k^p}} \left( V^{\pi}(\ct_k, p') + \frac{1}{2\eta_k}  \| [ \lambda_k + \eta_k ( V^{\pi}((\dt_{i,k})_{i\in[I]}, p') - \alpha ) ]_+  \|^2 \right) \label{eq:primal-step}
            \end{align}
            \STATE{Update dual variables:}
            \begin{align}
                \lambda_{k+1} := [ \lambda_{k} + \eta_k (  V^{\pi_{k}}((\dt_{i,k})_{i\in[I]}, \pt_{k}) - \alpha ) ]_+ \label{eq:dual-step}
            \end{align}
            \STATE Play $\pi_{k}$, update estimates of the costs $\ct_{k+1}$, $(\dt_{i,k+1})_{i\in[I]}$ and transitions $B_{k+1}^p$ (\cref{eq:opt-c}).
        \ENDFOR
		\caption{\AlgName}
    \label{algo:pp}
	\end{algorithmic}	
\end{algorithm}

We are now ready to state the regret guarantees for \AlgName.
\begin{restatable}{theorem}{thmoptaugrlppregret}\label{thm:optaugrlpp-regret}
    Suppose that Assumption \ref{ass:mild-slater} holds, let $\delta \in (0,1)$ and $\nu > 0$. Then there exist $K' = \tilde{O} \left( \kPrime \right)$ and $\eta_k$, $\epsilon_k$ such that with probability at least $1 - \delta$, \AlgName~achieves a total regret of
    \begin{align*}
        \Reg(K; c) &= \tilde{O} \left( \sqrt{\mathcal{N}SAH^4 K} + S^2 A H^3  + K'H \right),\\
        \Reg(K;d) &= \tilde{O} \left( \sqrt{\mathcal{N}SAH^4 K} + S^2 A H^3 \right).
    \end{align*}
\end{restatable}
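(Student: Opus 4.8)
The plan is to condition on the high-probability event on which optimism holds (the true model lies in every $B_k^p$ and $\ct_k\le c$, $\dt_{i,k}\le d_i$ pointwise), and to treat the two phases of \AlgName~separately. The pre-training phase contributes $0$ to $\Reg(K;d)$ (since $\pibar$ is feasible, so each $[V^{\pibar}(d_i,p)-\alpha_i]_+=0$) and at most $K'H$ to $\Reg(K;c)$, which is exactly the $K'H$ term. For the exploration phase I would pass, in each episode $k>K'$, to the occupancy-measure reformulation so that $V^{\pi}(\cdot,p')$ becomes linear and \cref{eq:primal-step} is the convex program announced before the theorem; write $x_k$ for the occupancy measure of $(\pi_k,\pt_k)$, $f_k(x):=V^{\pi}(\ct_k,p')$ and $g_k(x):=V^{\pi}((\dt_{i,k})_{i\in[I]},p')-\alpha$. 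The true per-episode gap splits as $V^{\pi_k}(c,p)-V^{\pi^*}(c,p)=(V^{\pi_k}(c,p)-f_k(x_k))+(f_k(x_k)-V^{\pi^*}(c,p))$, and similarly for each constraint. The first summand is an estimation error; summing these via the on-policy bounds of \cref{sec:onp-bounds} is where the $\tilde O(\sqrt{\N SAH^4K}+S^2AH^3)$ terms enter.

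Next I would remove the positive parts without incurring cancellation. Fix the optimal dual $\lambda^*$ of the \emph{true} CMDP; by strong duality \cref{eq:minmaxformulation}, $\min_{\pi}\mathcal L(\pi,\lambda^*)=V^{\pi^*}(c,p)$, and evaluating at $\pi_k$ gives $V^{\pi^*}(c,p)-V^{\pi_k}(c,p)\le\sum_i\lambda^*_i(V^{\pi_k}(d_i,p)-\alpha_i)\le\|\lambda^*\|_1\max_i[V^{\pi_k}(d_i,p)-\alpha_i]_+$, while \cref{ass:mild-slater} yields $\|\lambda^*\|_1\le H/\gamma$. Thus the \emph{negative} part of the objective gap is dominated by the constraint violation, so $\Reg(K;c)\le K'H+\sum_{k>K'}(V^{\pi_k}(c,p)-V^{\pi^*}(c,p))+\tfrac{H}{\gamma}\Reg(K;d)$. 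The remaining signed objective sum I would control with the \emph{fixed} comparator $x^\star$ given by $(\pi^*,p)$, which is feasible for every optimistic CMDP because $p\in B_k^p$ and $\dt_{i,k}\le d_i$ force $g_k(x^\star)\le V^{\pi^*}((d_i)_{i\in[I]},p)-\alpha\le 0$. The method-of-multipliers descent inequality with comparator $(x^\star,0)$ then telescopes $\|\lambda_k\|^2$ (recall $\lambda_{K'+1}=0$) and gives $\sum_{k>K'}(f_k(x_k)-V^{\pi^*}(c,p))\le\sum_k\epsilon_k$. Hence the entire objective regret reduces to $\Reg(K;d)$ plus lower-order terms.

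The core is therefore the constraint regret, and here the no-cancellation bound comes per episode directly from the dual step: since $\lambda_{k+1}\ge\lambda_k+\eta_k g_k(x_k)$ componentwise, $[g_{k,i}(x_k)]_+\le\|\lambda_{k+1}-\lambda_k\|/\eta_k$, so by Cauchy--Schwarz $\sum_{k>K'}[g_{k,i}(x_k)]_+\le\tfrac{\sqrt{K}}{\eta}\big(\sum_{k>K'}\|\lambda_{k+1}-\lambda_k\|^2\big)^{1/2}$. Everything hinges on $\sum_k\|\lambda_{k+1}-\lambda_k\|^2=\tilde O(1)$. To obtain it, I would read \cref{eq:primal-step}--\cref{eq:dual-step} as the saddle point of $\mathcal L_k(x,\mu)-\tfrac{1}{2\eta_k}\|\mu-\lambda_k\|^2$, which yields the three-point inequality
\[
\mathcal L_k(x_k,\mu)-\mathcal L_k(x,\lambda_{k+1})\le\tfrac{1}{2\eta_k}\big(\|\mu-\lambda_k\|^2-\|\mu-\lambda_{k+1}\|^2-\|\lambda_{k+1}-\lambda_k\|^2\big)+\epsilon_k .
\]
Plugging in the optimistic optimum $\hat{x}_k$ and its dual $\lambda_k^\circ$, strong duality of the convex, strictly feasible optimistic CMDP makes the left-hand side nonnegative, leaving $\tfrac{1}{2\eta}\|\lambda_{k+1}-\lambda_k\|^2\le\tfrac{1}{2\eta}(\|\lambda_k^\circ-\lambda_k\|^2-\|\lambda_k^\circ-\lambda_{k+1}\|^2)+\epsilon_k$; the Slater slack $\nu\gamma$ guaranteed by \cref{prop:pretraining} bounds $\|\lambda_k^\circ\|_1,\|\lambda_k\|_1\le H/(\nu\gamma)$ uniformly.

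I expect the main obstacle to be summing this last inequality, since the optimistic CMDP (costs $\ct_k,\dt_{i,k}$ and set $B_k^p$) drifts every episode, so its optimal dual $\lambda_k^\circ$ moves and the telescope $\sum_k(\|\lambda_k^\circ-\lambda_k\|^2-\|\lambda_k^\circ-\lambda_{k+1}\|^2)$ no longer collapses. I would split it into a genuine telescope plus a drift term $\lesssim\tfrac{H}{\nu\gamma}\sum_k\|\lambda_{k+1}^\circ-\lambda_k^\circ\|$ and bound the total variation of $\lambda_k^\circ$: the empirical model changes by only $O(1/(\nkm))$ per episode, so its aggregate variation over all $(s,a,h)$ is $\tilde O(1)$, and under the uniform slack $\nu\gamma$ the optimal dual is Lipschitz (constant $\sim 1/(\nu\gamma)$) in the problem data, which should give $\sum_k\|\lambda_{k+1}^\circ-\lambda_k^\circ\|=\tilde O(1)$ and hence $\sum_k\|\lambda_{k+1}-\lambda_k\|^2=\tilde O(1)$. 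Finally I would set $\eta_k\equiv\eta=\Theta(1)$, choose $(\epsilon_k)$ with $\sum_k\epsilon_k=\tilde O(\sqrt{K})$, take $K'$ as in \cref{prop:pretraining}, and collect the estimation error $\tilde O(\sqrt{\N SAH^4K}+S^2AH^3)$, the pre-training term $K'H$, and the $\tilde O(\sqrt{K})$ from the feasibility analysis to conclude. The delicate point is making the dual-stability estimate tight enough that the $1/\eta$ factor in the feasibility bound does not inflate the $\sqrt{K}$ rate.
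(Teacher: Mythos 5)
Your overall decomposition (pre-training contributes $K'H$ and $0$; exploration phase split into estimation errors handled by the on-policy bounds and optimization errors handled through the dual iterates) matches the paper, and your reduction of the strong objective regret to the weak one plus $\tfrac{H}{\gamma}\Reg(K;d)$ via the true CMDP's optimal dual is a valid and rather elegant shortcut that the paper does not use. However, the core of your optimization-error analysis contains a genuine gap. Everything hinges on $\sum_k\|\lambda_{k+1}-\lambda_k\|^2=\tilde O(1)$ with a \emph{constant} step size $\eta$, which you propose to obtain by telescoping against the optimistic CMDPs' dual optima $\lambda_k^\circ$ and controlling the drift $\sum_k\|\lambda_{k+1}^\circ-\lambda_k^\circ\|$ via a claimed Lipschitz dependence of $\lambda_k^\circ$ on the problem data with constant $\sim 1/(\nu\gamma)$. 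This claim is not justified and is false in general: for an LP (which is what \cref{eq:opt-KKT} becomes in occupancy-measure form), the dual optimal set need not be a singleton, and any selection from it can jump discontinuously under arbitrarily small perturbations of the cost vector or of the feasible region $B_k^p$ (e.g.\ when the active constraint set at the optimum changes, a multiplier can move from $0$ to order $H/(\nu\gamma)$ in one episode). The uniform Slater slack $\nu\gamma$ established in \cref{prop:pretraining} only bounds the \emph{norm} of the dual optima (\cref{thm:lstar-bound}); it gives no continuity, let alone a total-variation bound of $\tilde O(1)$ over $K$ episodes. There is also a circularity lurking in your drift decomposition: the cross term in $\sum_k\bigl(\|\lambda_k^\circ-\lambda_k\|^2-\|\lambda_k^\circ-\lambda_{k+1}\|^2\bigr)$ is multiplied by $\|\lambda_{k+1}-\lambda_k^\circ\|+\|\lambda_{k+1}-\lambda_{k+1}^\circ\|$, whose uniform boundedness is exactly what you are trying to prove.

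The paper's proof deliberately avoids this step. Because the optimistic problem changes every episode, the telescoping against a moving $\lambda_k^*$ does \emph{not} collapse, and the paper accepts the resulting accumulation: \cref{cor:dual-bound} together with \cref{thm:lstar-bound} only yields $\|\lambda_{k+1}\|\le 2\sum_{t=K'}^{k}\|\lambda_t^*\|+\sum_{t}\sqrt{2\eta_t\epsilon_t}=O(\sigma k)$, i.e.\ the dual iterates may grow \emph{linearly} in $k$. The sublinear optimization error is then recovered in \cref{lemma:opt-error-k} not by dual stability but by taking rapidly increasing step sizes $\eta_{K'+k}=\Theta(k^{2.5})$ and $\epsilon_{K'+k}=\Theta(1/\eta_{K'+k})$, so that $\|\lambda_k\|^2/(2\eta_k)=O(k^{-1/2})$ and $\|\lambda_{k+1}\|/\eta_k=O(k^{-3/2})$ are summable to $O(\sqrt K)$. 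To repair your argument you would either need to prove a total-variation bound on the dual optima of the sequence of optimistic CMDPs (which would be a substantive new result, not a routine perturbation bound), or abandon the constant-$\eta$ telescoping and adopt the paper's growing-step-size schedule.
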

We remark that to achieve this bound, using step sizes $\eta_{K'+k} = \Theta(k^{2.5})$ and accuracies $\epsilon_{K'+k} = \Theta(1/\eta_{K'+k})$ (when only highlighting the dependency on $k$) is sufficient, as we discuss in \cref{sec:main-res-omit}. 


\textbf{Comparison with \textsc{OptDual-CMDP}:} 
Crucially, our bound holds for the stronger notion of regret. In contrast, the one for the related \textsc{OptDual-CMDP} algorithm (see \cref{sec:optdual}) by \citet{efroni2020exploration} only concerns the weak regret, which allows for the cancellation of errors. Apart from this, the bound we obtain is similar in spirit. However, our regret bound does not depend on the number $I$ of constraints up to polylogarithmic factors. Moreover, we get slightly different (but not worse) constants and the constant extra term $K' H$ due to the model pre-training phase. In addition, it is important to note that we can choose $\eta_k$, $\epsilon_k$ in terms of $\gamma$ (Assumption \ref{ass:mild-slater}) such that in the leading term of the regret bound, there is no dependency on $\min_{i\in[I]} ( \alpha_i - V^{\pibar}(d_i, p))$, as opposed to \textsc{OptDual-CMDP}.


\textbf{Solving the inner problem:} 
We now elaborate on the subroutine for solving the optimization problem in \cref{eq:primal-step} that defines the policy update in \cref{algo:pp}. Importantly, we can reformulate \cref{eq:primal-step} as a constrained optimization problem that is convex in the state-action-state occupancy measure (see \cref{sec:fw-inner}). However, the resulting problem is neither an LP nor an extended MDP (due to the nonlinear objective), which prevents solving it with a single DP or LP solver call. Albeit related, it is not a standard convex RL problem either (due to the additional optimization over the constraint set $B_k^p$). Moreover, computing projections onto the high-dimensional domain is prohibitive, making it impossible to run projected gradient descent. 

The projection-free method we propose in \cref{sec:pseudo-inner} overcomes this difficulty by combining a Frank-Wolfe scheme with DP in a sequence of (extended) MDPs. In every iteration of this inner method, we consider the linear minimization step needed for a Frank-Wolfe iteration. When switching back to optimization over $\Pi$ and $B_k^p$, we can then perform this minimization step by solving an extended but unconstrained MDP via DP.\footnote{Formally, this is because a version of the Bellman optimality principle applies after dualizing the constraints, even if we need to optimize over the confidence intervals for the transitions during backward induction.} The smoothness properties of the objective of \cref{eq:primal-step} then determine the iteration complexity of the Frank-Wolfe scheme. 
Formally, we have the following.
\begin{restatable}{proposition}{lemmainnerroutine} \label{lemma:inner-routine}
    In episode $k$, fix any accuracy of $\epsilon_k > 0$. There exists an algorithm for solving \cref{eq:primal-step} such that the objective at its output $(\pi_{k}, \pt_{k})$ is $\epsilon_k$-close to the optimum of \cref{eq:primal-step}, by solving $O\left( \frac{\eta_k I S^2 A H }{\epsilon_k} \right)$ (extended) MDPs via DP.
\end{restatable}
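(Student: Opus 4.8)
<br />

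The plan is to prove Proposition~\ref{lemma:inner-routine} by establishing that the inner problem in \cref{eq:primal-step}, after reformulation as a convex problem over a suitable occupancy-measure polytope, admits a Frank-Wolfe scheme whose linear minimization oracle (LMO) is exactly one (extended) MDP solved via dynamic programming, and then bounding the number of Frank-Wolfe iterations by the smoothness of the objective. First I would fix the reformulation promised in \cref{sec:fw-inner}: rewrite the joint optimization over $\pi \in \Pi$ and $p' \in B_k^p$ in terms of the state-action-state occupancy measure $x = (x_h(s,a,s'))$, so that the value functions $V^{\pi}(\ct_k, p')$ and $V^{\pi}((\dt_{i,k})_{i\in[I]}, p')$ become \emph{linear} in $x$, while the feasible set becomes a polytope $\Xc_k$ (Bellman flow constraints intersected with the confidence set $B_k^p$). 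The objective then has the form $F(x) = \langle \ct_k, x\rangle + \frac{1}{2\eta_k}\|[\lambda_k + \eta_k(D x - \alpha)]_+\|^2$, where $D$ is the linear map sending $x$ to the constraint value vector; this is convex in $x$ because it is a nonnegative quadratic composed with an affine map, plus a linear term.

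Next I would analyze the Frank-Wolfe iteration. The LMO requires minimizing $\langle \nabla F(x^t), x\rangle$ over $x \in \Xc_k$; since $\Xc_k$ is the occupancy polytope of an extended MDP and the gradient is a fixed cost vector at each step, this linear minimization is precisely the problem of finding an optimal policy-and-transition pair in an extended unconstrained MDP, solvable by backward-induction DP in one pass (this is the point flagged in the footnote about the Bellman optimality principle after dualization). Each LMO call thus costs one extended-MDP solve. For the iteration count, I would invoke the standard Frank-Wolfe convergence guarantee: after $T$ iterations the optimality gap is $O(L_F \, \mathrm{diam}(\Xc_k)^2 / T)$, where $L_F$ is the smoothness constant of $F$. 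Setting this below $\epsilon_k$ gives $T = O(L_F \, \mathrm{diam}(\Xc_k)^2 / \epsilon_k)$, and I must show this equals $O(\eta_k I S^2 A H / \epsilon_k)$.

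The main obstacle I expect is the smoothness bookkeeping: I need to show $L_F \cdot \mathrm{diam}(\Xc_k)^2 = O(\eta_k I S^2 A H)$. The Hessian of the quadratic term is $\eta_k D^T P D$ (where $P$ projects onto the active coordinates where $\lambda_k + \eta_k(Dx-\alpha) > 0$), so $L_F$ scales like $\eta_k \|D\|^2$. The crucial estimates are: each constraint value $V^{\pi}(d_{i}, p')$ ranges in $[0,H]$ and is a sum of $H$ terms, so the relevant operator norm of $D$ contributes factors of $H$ and $\sqrt{I}$ across the $I$ constraints; the occupancy polytope has dimension $O(S^2 A H)$ and diameter controlled by the fact that occupancy measures at each step sum to $1$ over $H$ steps. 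I would carefully track how the $I$ constraints, the horizon $H$, and the $S^2 A H$ coordinates of the occupancy measure combine so that the product $L_F \, \mathrm{diam}(\Xc_k)^2$ collapses to $\eta_k I S^2 A H$ without stray factors; getting the exact polynomial dependence (rather than a loose bound) is the delicate part. Finally I would confirm that feeding the FW output $x$ back through its induced $(\pi_k,\pt_k)$ preserves the $\epsilon_k$ objective guarantee, completing the claim.
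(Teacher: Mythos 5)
Your proposal follows the paper's proof essentially step for step: the reformulation over the state-action-state occupancy polytope $Z$ (\cref{sec:fw-inner}), the observation that the Frank--Wolfe linear minimization oracle is a single extended unconstrained MDP solvable by backward induction (\cref{eq:bw-ind}), the standard $O(C_f/T)$ Frank--Wolfe rate with the curvature constant bounded by smoothness times squared diameter (\cref{thm:jaggi}), and the retrieval of $(\pi_k,\pt_k)$ from the final occupancy iterate via \cref{eq:retrieve-pi,eq:retrieve-p}. The one step you explicitly defer --- showing $L_F\cdot\mathrm{diam}(\Xc_k)^2 = O(\eta_k I S^2 A H)$ --- is, however, exactly where the stated constant is earned, and the Euclidean/Hessian framing you sketch (with $\|D\|$ contributing $\sqrt{I}$ and the diameter controlled by the per-step normalization of occupancy measures) does not obviously land there: in the Euclidean norm the squared diameter of $Z$ is of order $H$ (since $\|z\|_1=H$ while $\|z\|_\infty\le 1$), and the Frobenius bound gives $\|D\|^2\lesssim I S^2 A H$, so the product picks up an extra factor of $H$ relative to the claim. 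The paper (\cref{cor:set-T}) sidesteps this by measuring smoothness in the $\ell_\infty$ norm: there $\mathrm{diam}_{\|\cdot\|_\infty}(Z)\le 1$, and the chain $\|\nabla f(y)-\nabla f(z)\|_\infty \le \|\Dt_k^T\|_\infty\,\eta_k\,\|\Dt_k\|_\infty\, S\,\|y-z\|_\infty \le \eta_k I S^2 A H\,\|y-z\|_\infty$ (using $1$-Lipschitzness of $[\cdot]_+$ and boundedness of the entries of $\Dt_k$) yields $C_f\le \eta_k I S^2 A H$ directly. To recover the bound exactly as stated you should therefore commit to the $\ell_\infty$ analysis (or otherwise sharpen the operator-norm/diameter interplay); everything else in your plan matches the paper's argument.
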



\section{Sketch of the Regret Analysis} \label{sec:regret}

In this section, we outline the key steps in our proof of \cref{thm:optaugrlpp-regret} and defer the detailed proofs to \cref{sec:regret-omit}. We will condition our regret analysis on a \textit{success event} $G$, which we formally define in \cref{sec:regret-prelim-omit}. $G$ ensures that (a) the optimistic cost estimates in \cref{eq:opt-c} are, in fact, optimistic and (b) the true transitions are contained in the set of plausible models from \cref{eq:opt-c}, i.e.:
\begin{align*}
    \ct_k \leq c, \quad \dt_{i,k} \leq d_i \;(\forall i\in [I]),\quad p\in B_k^p,
\end{align*}
for every episode $k\in [K]$. In the following lemma, we prove that $G$ occurs with high probability.
\begin{restatable}[]{lemma}{lemgwhp} \label{lem:g-whp}
    Fix $\delta \in (0,1)$ and define the optimistic model in \cref{eq:opt-c} accordingly. Then, the success event $G$ occurs with probability at least $1-\delta$, i.e., $P[G]\geq 1-\delta$.
\end{restatable}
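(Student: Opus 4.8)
The plan is to write $G$ as the intersection of three families of concentration events — one for the objective cost $c$, one for each constraint cost $d_i$, and one for the transition kernel $p$ — and to show that each family holds with high probability, combining the appropriate concentration inequality with a union bound over the state-action-step tuples, the constraints, the next-states, and the random sample counts. First I would fix an arbitrary triple $(s,a,h)$ together with a constraint index $i$, and observe that, conditioned on the history up to a visit, the per-visit observations $C_h^l(s,a)$ and $D_{i,h}^l(s,a)$ are bounded in $[0,1]$ with conditional means $c_h(s,a)$ and $d_{i,h}(s,a)$, respectively. Hence the empirical averages $\bar{c}_h^{k-1}(s,a)$ and $\bar{d}_{i,h}^{k-1}(s,a)$ are martingale-type means of bounded observations, and a Hoeffding (or empirical-Bernstein) tail bound controls the one-sided deviations $\bar{c}_h^{k-1}(s,a)-c_h(s,a)$ and $\bar{d}_{i,h}^{k-1}(s,a)-d_{i,h}(s,a)$.

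The key subtlety is that the visit count $n_h^{k-1}(s,a)$ is itself random, since it is driven by the agent's adaptive policy, so I cannot apply a concentration inequality at a single fixed sample size. I would resolve this with a standard anytime/peeling argument: for the fixed triple $(s,a,h)$ I consider the hypothetical sequence of observations indexed by the visit count $n=1,\dots,K$ and union-bound the deviation over all $n$. This costs only an extra $\log K$ factor inside the bonus and guarantees that, with $\beta^c_{k,h}(s,a)$ and $\beta^d_{i,k,h}(s,a)$ chosen as in \cref{sec:regret-prelim-omit}, the events $\bar{c}_h^{k-1}(s,a)-c_h(s,a)\le \beta^c_{k,h}(s,a)$ and the analogous ones for each $d_i$ hold simultaneously for every episode $k$. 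Subtracting the bonus then yields exactly $\ct_{k,h}(s,a)\le c_h(s,a)$ and $\dt_{i,k,h}(s,a)\le d_{i,h}(s,a)$, i.e.\ the optimism required by $G$.

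For the transitions I would bound, for each $(s,a,h)$ and each next-state $s'$, the deviation $|\bar{p}_h^{k-1}(s'|s,a)-p_h(s'|s,a)|$ by treating $\indic_{\{s_{h+1}^l=s'\}}$ as a Bernoulli observation and applying the same per-visit concentration — Hoeffding, or a Bernstein bound capturing the variance $p_h(s'|s,a)(1-p_h(s'|s,a))$, which is what ultimately yields the $\mathcal{N}$-dependence in the final regret — again made uniform over the random visit count via the peeling argument. With $\beta^p_{k,h}(s,a,s')$ set as in \cref{sec:regret-prelim-omit}, this shows $p_h(\cdot|s,a)\in B_{k,h}^p(s,a)$ for all $k$, and hence $p\in B_k^p$.

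Finally I would collect everything with a single union bound over the $SAH$ triples, the $I+1$ cost functions, the $S$ next-states for the transition part, and the $\log K$ peeling factor; allocating each individual failure probability as $\delta$ divided by this total count makes the bonuses $\beta^c$, $\beta^d_i$, $\beta^p$ coincide with those specified in \cref{sec:regret-prelim-omit} and yields $P[G]\ge 1-\delta$. The main obstacle is purely the adaptivity of the sampling process: once the anytime concentration is set up so that the deviation bounds hold uniformly over the random visit counts, the remainder is routine union-bound bookkeeping.
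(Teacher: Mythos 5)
Your outline for the cost and transition events matches the standard argument (and the one the paper itself defers to: the paper's proof of this lemma is a one-line citation to Appendix A.1 of \citet{efroni2020exploration}, with the remark that it only uses the i.i.d.-ness of the samples collected at each state--action--step triple, not any property of the adaptive policies). However, there is a genuine gap: the success event $G$ as formally defined in \cref{sec:regret-prelim-omit} is the complement of $F^p \cup F^c \cup F^d \cup F^N$, and you never address $F^N$. That event requires that, for all $s,a,h,k$,
\begin{align*}
n_h^{k-1}(s,a) > \frac{1}{2}\sum_{j<k} q_h^{\pi_j}(s,a;p) - H\log\left(\frac{SAH}{\delta'}\right),
\end{align*}
i.e., a high-probability \emph{lower bound on the visit counts} in terms of the cumulative expected occupancies. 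This does not follow from the cost/transition concentration you set up; it is a separate martingale (multiplicative Chernoff/Bernstein-type) bound on the visit indicators $\indic_{\{s_h^k=s,\ a_h^k=a\}}$, in the spirit of \citet{dann2017unifying}, and it is essential downstream: \cref{lem:l36-efroni,lem:37efr} (and hence the estimation-error and pre-training bounds) take exactly the complement of $F^N$ as a hypothesis. Without bounding $P[F^N]$ your union bound does not yield $P[G]\ge 1-\delta$; the allocation $\delta'=\delta/3$ in the paper is precisely reserving a share of the failure probability for this event.

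A second, minor point: for the transitions you hedge between Hoeffding and Bernstein, but with $\beta^p_{k,h}(s,a,s')$ as actually specified (an empirical-Bernstein radius scaling with $\sqrt{\bar{p}_h^{k-1}(1-\bar{p}_h^{k-1})}$), plain Hoeffding does not certify the event, since $\beta^p$ can be much smaller than $\sqrt{L^p_\delta/(n_h^{k-1}\vee 1)}$; you must commit to the empirical Bernstein inequality there. Your peeling over the random visit count (equivalently, the union bound over the $K$ possible values of the counter, which is where the factor $K$ inside $L_\delta$ and $L^p_\delta$ comes from) is the right way to handle adaptivity and is consistent with the cited proof.
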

We proceed with the regret analysis and first split the regrets between the two phases of the algorithm:
\begin{align*}
    \Reg (K; c) &= \underbrace{\sum_{k=1}^{K'} [ V^{\pi_{k}}( c, p) - V^{\pi^*}(c,p) ]_+ }_{\text{Pre-Training}} + \underbrace{\sum_{k=K'+1}^{K} [ V^{\pi_{k}}( c, p) - V^{\pi^*}(c,p) ]_+ }_{\text{Optimistic Exploration}}, \\
    \Reg (K; d) &\leq  \underbrace{\max_{i \in [I]} \sum_{k=1}^{K'} [V^{\pi_{k}}( d_i, p) - \alpha_i]_+}_{\text{Pre-Training}} + \underbrace{\max_{i \in [I]} \sum_{k=K'+1}^{K} [V^{\pi_{k}}( d_i, p) - \alpha_i]_+}_{\text{Optimistic Exploration}}.
\end{align*}
Then, applying \cref{prop:pretraining}, we can trivially bound the objective regret during the pre-training phase by $K' H$. Since $\pibar$ is strictly feasible, there is no constraint regret during pre-training. We now focus on the regrets incurred in the optimistic exploration phase. For this, we further decompose the regrets as follows (see \cref{sec:regret-prelim-omit}):
\begin{align*}
    \Reg (K; c) &\leq K'H {+} \underbrace{\sum_{k=K'+1}^{K} [ V^{\pi_{k}}(c, p) {-} V^{\pi_{k}}( \ct_k, \pt_{k}) ]_{+}}_{\text{Estimation Error}} {+} \underbrace{\sum_{k=K'+1}^{K} [ V^{\pi_{k}}( \ct_k, \pt_{k}) {-} V^{\pi^*}(c, p) ]_{+}}_{\text{Optimization Error}}, \\
    \Reg (K; d) &\leq \underbrace{\max_{i \in [I]} \sum_{k=K'+1}^{K} [ V^{\pi_{k}}( d_i, p) {-} V^{\pi_{k}}(\dt_{i,k}, \pt_{k}) ]_{+}}_{\text{Estimation Error}} {+} \underbrace{\max_{i \in [I]} \sum_{k=K'+1}^{K} [ V^{\pi_{k}}(\dt_{i,k}, \pt_{k}) {-} \alpha_i ]_{+}}_{\text{Optimization Error}} .
\end{align*}

We have thus decomposed the regrets into
\begin{enumerate}[]
    \item[(A)] \textit{estimation errors} that are due to the estimated model, and
    \item[(B)] \textit{optimization errors} that we can analyze via the underlying optimization method.
\end{enumerate}
Conditioning on the success event $G$, we will obtain bounds sublinear in $K$ for both parts of the decomposition. Note that we cannot adapt the analysis by \citet{efroni2020exploration} to achieve this goal since it only allows for bounds on the averages of the \textit{signed} optimization errors. We proceed with bounding the estimation errors in the next section. 


\subsection{Estimation Errors (Optimistic Exploration)} \label{sec:term1}

Leveraging on-policy error bounds for optimistic exploration in MDPs (\cref{sec:onp-bounds}), we establish the desired bound on the estimation errors.
\begin{restatable}[Estimation errors]{lemma}{propestregret} \label{prop:est-regret}
    Let $(\pi_k)_{k=K'+1}^{K}$ be the sequence of policies obtained by \AlgName. Then, conditioned on $G$, we can bound the estimation errors as follows:
    \begin{align*}
        \sum_{k=K'+1}^{K} [ V^{\pi_{k}}(c, p) - V^{\pi_{k}}( \ct_k, \pt_{k}) ]_{+} \leq& \tilde{O} \left(  \sqrt{\mathcal{N}SAH^4 K} + S^2 A H^3 \right),\\
        \max_{i \in [I]} \sum_{k=K'+1}^{K} [ V^{\pi_{k}}( d_i, p) - V^{\pi_{k}}(\dt_{i,k}, \pt_{k}) ]_{+} \leq&\tilde{O} \left( \sqrt{\mathcal{N}SAH^4 K} + S^2 A H^3 \right).
    \end{align*}
\end{restatable}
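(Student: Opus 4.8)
The plan is to recognize both quantities as \emph{on-policy estimation errors}: each summand compares the value of the \emph{same} policy $\pi_{k}$ under the true model $(c,p)$ (resp.\ $(d_i,p)$) against its value under the optimistic model $(\ct_k,\pt_{k})$ (resp.\ $(\dt_{i,k},\pt_{k})$). Conditioning on $G$ gives $\ct_k\le c$, $\dt_{i,k}\le d_i$ and $p\in B_k^p$, so both gaps are controlled by the exploration bonuses of \cref{eq:opt-c}. Since the two statements have identical structure, I would prove a single on-policy bound and apply it with $l=c$ and with $l=d_i$ for each $i\in[I]$; because the resulting bound is uniform in $i$, the maximum over $i$ in the constraint term is harmless.

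First I would separate the cost error from the transition error by inserting $V^{\pi_{k}}(\ct_k,p)$ and using linearity of the value function in the cost:
\begin{align*}
    V^{\pi_{k}}(c,p) - V^{\pi_{k}}(\ct_k,\pt_{k}) = \underbrace{V^{\pi_{k}}(c-\ct_k,\,p)}_{\text{cost error}} + \underbrace{\big(V^{\pi_{k}}(\ct_k,p) - V^{\pi_{k}}(\ct_k,\pt_{k})\big)}_{\text{transition error}}.
\end{align*}
Under $G$ the bonus construction gives $0\le c-\ct_k\le 2\beta^c_{k}$ pointwise, so the cost error equals the expected sum of $2\beta^c_{k}$ along trajectories drawn from $(\pi_{k},p)$. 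For the transition error I would invoke a value-difference (simulation) lemma, writing it as $\sum_{h}\E_{\pi_{k},p}\big[(p_h-\pt_{k,h})(\cdot\mid s_h,a_h)^{\top}V^{\pi_{k}}_{h+1}(\ct_k,\pt_{k})\big]$; since both $p$ and $\pt_{k}$ lie in $B_k^p$, each coordinate difference is at most $2\beta^p_{k,h}$ and $V^{\pi_{k}}_{h+1}\in[0,H]$. The positive part $[\cdot]_+$ only helps here, since each gap is dominated by the (nonnegative) sum of bonuses.

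The crux is summing these bonuses across episodes $k=K'+1,\dots,K$. I would first pass from the expected sums under $(\pi_{k},p)$ to the sums realized along the actually-visited trajectories by a martingale (Azuma–Hoeffding) argument, contributing only lower-order terms. Then the standard pigeonhole/counting bound $\sum_{k}1/\sqrt{\nkm}=\tilde{O}(\sqrt{SAK})$ per step $h$ handles the visitation counts in the bonuses. To obtain the \emph{leading} term $\sqrt{\N SAH^4 K}$ rather than a worse $S$-dependence, I would use Bernstein-type transition bonuses together with the law of total variance across the horizon, which collapses the per-step value-function variances into a single $H$-factor; summing the per-coordinate transition bonus over the at most $\N$ reachable successors yields the $\sqrt{\N}$ factor. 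The residual $S^2 A H^3$ term collects the Hoeffding/bias-correction contributions and the cost-estimation part.

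I expect the main obstacle to be exactly this last step: orchestrating the Bernstein bonuses, the law-of-total-variance telescoping over $h$, and the counting argument so that the transition error yields the tight $\sqrt{\N SAH^4 K}$ leading term with $S^2 A H^3$ as the only lower-order price, while keeping all constants uniform over the $I$ constraints. This is where I would lean on the on-policy error machinery of \cref{sec:onp-bounds}; the remaining reductions (linearity, the simulation lemma, and optimism under $G$) are routine.
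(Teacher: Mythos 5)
Your proposal matches the paper's proof in all essentials: bound the positive part by the absolute value, apply the value-difference decomposition into a cost-error term and a transition-error term, use optimism under $G$ together with the Bernstein-type transition bounds (giving $\sum_{s'}\sqrt{p_h(s'\mid s,a)}\le\sqrt{\N}$ over the at most $\N$ reachable successors) and the visitation-count summation lemmas of \cref{sec:onp-bounds}, and observe that the resulting bound is uniform over $i\in[I]$. The only deviation is cosmetic and harmless: you invoke the law of total variance to control the $H$-dependence, whereas the paper simply uses the crude bound $|V^{\pi_k}_{h+1}(\cdot\,;\ct_k,\pt_k)|\lesssim H$, which already yields the stated $\sqrt{\N SAH^4K}$ leading term, and it routes the episode summation through the good event on the counters (the $F^N$ component of $G$) rather than an explicit Azuma step.
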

We refer to \cref{sec:term1-omit} for the proof. \cref{prop:est-regret} proves that the estimation errors for both the objective and constraints can indeed be bounded by a term that is sublinear in $K$. In the next section, we provide a bound for the optimization errors.


\subsection{Optimization Errors (Optimistic Exploration)} \label{sec:term2}

Recall that by \cref{lemma:inner-routine}, our solver for the inner problem (\cref{eq:primal-step}) has the following guarantee, for every episode $k \geq K'+1$:
\begin{align*}
    \mathcal{L}_k(\pi_{k}, \pt_{k}) &\leq \min_{\substack{\pi \in \Pi \\ p' \in B_k^p}}\mathcal{L}_k(\pi, p') + \epsilon_k,
\end{align*}
where $\mathcal{L}_k(\pi, p') := V^{\pi}(\ct_k, p') + \frac{1}{2\eta_k}  \| [ \lambda_k + \eta_k ( V^{\pi}((\dt_{i,k})_{i\in[I]}, p') - \alpha ) ]_+  \|^2$ denotes the objective at episode $k$. If the true CMDP is known, i.e., no exploration is required, then \citet{xu2021iteration} proves a sublinear regret bound for the optimization error if the step sizes $\eta_k$ and accuracies $\epsilon_k$ are chosen suitably.\footnote{That is, such that $\sum_{k=K'+1}^K 1/\eta_k=  o(K)$ and $\sum_{k=K'+1}^K \epsilon_k = o(K)$, see \citet[Remark 7]{xu2021iteration}.} They obtain this result by bounding the dual variables $\lambda_k$ across the iterations $k$. In our setting, however, since the objective and constraint set of the optimization problem (\cref{eq:primal-step}) \textit{change} in every episode, we require a novel type of analysis.

\looseness=-1As a first step, we show that we can bound the optimization errors in episode $k$ by expressions that depend on the dual variables $\lambda_k$ and $\lambda_{k+1}$.
\begin{restatable}[]{lemma}{lemmafdiff} \label{lemma:fdiff} \label{lemma:gdiff}
    Conditioned on G, for each $k \in \{K'+1, \dots, K\}$, in \AlgName~we have 
    \begin{align*}
        V^{\pi_{k}}( \ct_k, \pt_{k}) - V^{\pi^*}(c, p) &\leq \epsilon_k + \frac{\| \lambda_{k} \|^2 - \| \lambda_{k+1} \|^2}{2\eta_k},\\
        V^{\pi_{k}}(\dt_{i,k}, \pt_{k}) - \alpha_i &\leq \frac{\lambda_{k+1}(i) - \lambda_{k}(i)}{\eta_k} ~~~~ (\forall i \in [I]).
    \end{align*}
\end{restatable}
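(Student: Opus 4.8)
The plan is to prove both inequalities by exploiting the structure of the augmented Lagrangian objective $\mathcal{L}_k$ together with the $\epsilon_k$-optimality guarantee from \cref{lemma:inner-routine} and the dual update in \cref{eq:dual-step}. The starting observation is that $\mathcal{L}_k$ is built precisely so that the inner quadratic term $\frac{1}{2\eta_k}\|[\lambda_k + \eta_k(V^{\pi}((\dt_{i,k})_i, p') - \alpha)]_+\|^2$ couples to the dual variable. Note that by \cref{eq:dual-step}, $\lambda_{k+1} = [\lambda_k + \eta_k(V^{\pi_k}((\dt_{i,k})_i, \pt_k) - \alpha)]_+$, so evaluated at the algorithm's output $(\pi_k, \pt_k)$ the quadratic term equals exactly $\frac{1}{2\eta_k}\|\lambda_{k+1}\|^2$. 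Hence
\begin{align*}
    \mathcal{L}_k(\pi_k, \pt_k) = V^{\pi_k}(\ct_k, \pt_k) + \tfrac{1}{2\eta_k}\|\lambda_{k+1}\|^2.
\end{align*}

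For the \emph{objective} inequality I would upper-bound $\mathcal{L}_k(\pi_k,\pt_k)$ by comparing against a good feasible reference point. Conditioned on $G$ we have $p \in B_k^p$ and $\dt_{i,k} \le d_i$, so $\pi^*$ paired with the true transitions $p$ is feasible for the inner optimization and satisfies $V^{\pi^*}(\dt_{i,k}, p) \le V^{\pi^*}(d_i, p) \le \alpha_i$. Plugging $(\pi^*, p)$ into $\mathcal{L}_k$, the inner positive-part term is bounded by $\frac{1}{2\eta_k}\|[\lambda_k]_+\|^2 = \frac{1}{2\eta_k}\|\lambda_k\|^2$, because adding a nonpositive quantity inside $[\cdot]_+$ cannot increase the norm (this uses that $\lambda_k \ge 0$ and that $[\cdot]_+$ is componentwise nonexpansive toward the feasible side). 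Using optimism $\ct_k \le c$ gives $V^{\pi^*}(\ct_k, p) \le V^{\pi^*}(c, p)$, so $\mathcal{L}_k(\pi^*, p) \le V^{\pi^*}(c,p) + \frac{1}{2\eta_k}\|\lambda_k\|^2$. Combining with the $\epsilon_k$-optimality $\mathcal{L}_k(\pi_k, \pt_k) \le \min \mathcal{L}_k + \epsilon_k \le \mathcal{L}_k(\pi^*, p) + \epsilon_k$ and the identity above, the $\frac{1}{2\eta_k}\|\lambda_{k+1}\|^2$ terms move to the right-hand side and yield exactly $V^{\pi_k}(\ct_k,\pt_k) - V^{\pi^*}(c,p) \le \epsilon_k + \frac{\|\lambda_k\|^2 - \|\lambda_{k+1}\|^2}{2\eta_k}$.

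For the \emph{constraint} inequality I would argue coordinatewise. From the dual update, for each $i$ we have $\lambda_{k+1}(i) = [\lambda_k(i) + \eta_k(V^{\pi_k}(\dt_{i,k},\pt_k) - \alpha_i)]_+ \ge \lambda_k(i) + \eta_k(V^{\pi_k}(\dt_{i,k},\pt_k) - \alpha_i)$, since $[a]_+ \ge a$ always. Rearranging directly gives $V^{\pi_k}(\dt_{i,k},\pt_k) - \alpha_i \le \frac{\lambda_{k+1}(i) - \lambda_k(i)}{\eta_k}$, which is the claimed bound; this part is essentially immediate from the nonnegativity of $[\cdot]_+$.

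The main obstacle I anticipate is making the first inequality fully rigorous rather than the second: specifically, justifying the chain of bounds on the quadratic term $\frac{1}{2\eta_k}\|[\lambda_k + \eta_k(V^{\pi^*}(\dt_{i,k}, p) - \alpha)]_+\|^2 \le \frac{1}{2\eta_k}\|\lambda_k\|^2$ requires care. One must verify coordinatewise that since $\lambda_k(i) \ge 0$ and the added term $\eta_k(V^{\pi^*}(\dt_{i,k},p) - \alpha_i) \le 0$ (using feasibility of $\pi^*$ under the true model together with optimism $\dt_{i,k} \le d_i$ on $G$), we have $0 \le [\lambda_k(i) + \eta_k(\cdots)]_+ \le \lambda_k(i)$, so the norm does not increase. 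The rest is bookkeeping, but this monotonicity step, together with correctly invoking the success event $G$ to license both $p \in B_k^p$ and the optimism inequalities, is where the argument genuinely uses the augmented-Lagrangian construction and must be stated precisely.
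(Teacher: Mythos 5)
Your proposal is correct and follows essentially the same route as the paper: identify the quadratic term at the output with $\frac{1}{2\eta_k}\|\lambda_{k+1}\|^2$ via the dual update, compare $\mathcal{L}_k(\pi_k,\pt_k)$ against the reference point $(\pi^*,p)$ using $\epsilon_k$-optimality together with optimism and primal feasibility on $G$, and derive the constraint bound from the elementary property of $[\cdot]_+$. The only cosmetic differences are that you merge the optimism and feasibility steps into one application of the monotonicity of $\|[\cdot]_+\|^2$ (the paper applies it twice), and your one-line $[a]_+\geq a$ argument replaces the paper's two-case analysis for the second inequality.
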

To further bound the norm of the dual iterates, for each episode $k\geq K'$, we consider the $k$-th optimistic CMDP, which we define as follows:
\begin{align}
     \min_{\substack{\pi \in \Pi}} ~~~~ V^{\pi}(\ct_k, \pt_{k}) ~~~~ \text{s.t.} ~~~~  V^{\pi}(\dt_{i,k}, \pt_{k}) \leq \alpha_i ~~~~(\forall i\in[I]). \label{eq:opt-KKT}
\end{align}
Note that \cref{eq:opt-KKT} indeed is a CMDP. By \cref{prop:pretraining}, $\pibar$ is strictly feasible for \cref{eq:opt-KKT} for all $k\geq K'$, with a slack of $\geq \nu \gamma$ uniformly bounded away from zero. By strong duality \citep{paternain2019strongduality}, there exist primal-dual pairs $(\pi_k^*, \lambda_k^*)$ satisfying
\begin{equation*}
    V^{\pi_k^*}(\ct_k, \pt_{k}) = \min_{\substack{\pi \in \Pi}} \left( V^{\pi}(\ct_k, \pt_{k}) + (\lambda_k^*)^T (V^{\pi}((\dt_{i,k})_{i\in[I]}, \pt_{k}) - \alpha)\right).
\end{equation*}
We formalize this with \cref{lemma:aux-ineq} in \cref{sec:dual-iter-bound} by using the fact that we can formulate \cref{eq:opt-KKT} as a convex optimization problem using the LP formulation of CMDPs (\cref{sec:occ-notation,sec:conv-prelim}). With this, we can establish the following bound on the dual iterates.
\begin{restatable}[]{lemma}{cordualbound}\label{cor:dual-bound}
    Let $k \in \curly{K'+1, \dots, K}$ and suppose \cref{eq:opt-KKT} is strictly feasible for every $k' \in \curly{K', \dots, K}$. Let $(\pi_{k'}^*, \lambda_{k'}^*)$ be pairs of primal-optimal and dual-optimal solutions for \cref{eq:opt-KKT}. Then the iterates of \AlgName~satisfy
    \begin{align*}
        \| \lambda_{k+1}\| \leq& ~2 \sum_{t=K'}^{k} \| \lambda_t^*\| + \sum_{t=K'+1}^{k} \sqrt{2\eta_t \epsilon_t}.
    \end{align*}
\end{restatable}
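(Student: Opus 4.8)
The plan is to derive a self-contained one-step recursion of the form $\| \lambda_{k+1}\| \leq \| \lambda_{k}\| + 2\| \lambda_k^*\| + \sqrt{2\eta_k \epsilon_k}$ and then unroll it using the initialization $\lambda_{K'+1} = 0$. Throughout, write $g_k := V^{\pi_{k}}((\dt_{i,k})_{i\in[I]}, \pt_{k}) - \alpha$ for the optimistic constraint-slack vector of the played policy, so that the dual update \cref{eq:dual-step} reads $\lambda_{k+1} = [\lambda_k + \eta_k g_k]_+$, and in particular $\lambda_k \geq 0$ and $\lambda_{k+1} \geq \lambda_k + \eta_k g_k$ entrywise.

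First I would exploit the $\epsilon_k$-optimality of the inner solver (\cref{lemma:inner-routine}). Since $(\pi_k^*, \pt_{k})$ is feasible for the joint minimization in \cref{eq:primal-step} (as $\pi_k^* \in \Pi$ and $\pt_{k} \in B_k^p$), comparing $\mathcal{L}_k(\pi_{k}, \pt_{k})$ against $\mathcal{L}_k(\pi_k^*, \pt_{k})$ gives
\begin{align*}
    V^{\pi_{k}}(\ct_k, \pt_{k}) + \tfrac{1}{2\eta_k}\| \lambda_{k+1}\|^2 \leq V^{\pi_k^*}(\ct_k, \pt_{k}) + \tfrac{1}{2\eta_k}\big\| [ \lambda_k + \eta_k ( V^{\pi_k^*}((\dt_{i,k})_{i\in[I]}, \pt_{k}) - \alpha ) ]_+ \big\|^2 + \epsilon_k.
\end{align*}
The crucial simplification is that $\pi_k^*$ is feasible for \cref{eq:opt-KKT}, i.e.\ $V^{\pi_k^*}((\dt_{i,k})_{i\in[I]}, \pt_{k}) - \alpha \leq 0$ componentwise; together with $\lambda_k \geq 0$ this forces $[ \lambda_k + \eta_k(\cdots) ]_+ \leq \lambda_k$ entrywise, so the augmented penalty term is at most $\| \lambda_k\|^2$.

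Next I would eliminate the objective gap $V^{\pi_k^*}(\ct_k, \pt_{k}) - V^{\pi_{k}}(\ct_k, \pt_{k})$ via the Lagrangian lower bound for \cref{eq:opt-KKT} supplied by strong duality (this is where the convex/LP reformulation and \cref{lemma:aux-ineq} enter): evaluating the Lagrangian identity at $\pi = \pi_{k}$ gives $V^{\pi_k^*}(\ct_k, \pt_{k}) \leq V^{\pi_{k}}(\ct_k, \pt_{k}) + (\lambda_k^*)^T g_k$. Substituting this, multiplying through by $\eta_k$, and bounding the cross term through the dual update ($\eta_k(\lambda_k^*)^T g_k \leq (\lambda_k^*)^T(\lambda_{k+1} - \lambda_k)$, using $\lambda_k^* \geq 0$ and $\eta_k g_k \leq \lambda_{k+1} - \lambda_k$) yields
\begin{align*}
    \tfrac{1}{2}\| \lambda_{k+1}\|^2 - \tfrac{1}{2}\| \lambda_k\|^2 - (\lambda_k^*)^T(\lambda_{k+1} - \lambda_k) \leq \eta_k \epsilon_k.
\end{align*}
The key algebraic observation is that the left-hand side equals $\tfrac12\| \lambda_{k+1} - \lambda_k^*\|^2 - \tfrac12\| \lambda_k - \lambda_k^*\|^2$, so the chain collapses to $\| \lambda_{k+1} - \lambda_k^*\|^2 \leq \| \lambda_k - \lambda_k^*\|^2 + 2\eta_k\epsilon_k$. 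Taking square roots (via $\sqrt{a+b}\leq\sqrt a + \sqrt b$) and applying the triangle inequality twice produces the desired recursion, which unrolled from $k$ down to $K'+1$ with $\lambda_{K'+1}=0$ gives $\| \lambda_{k+1}\| \leq 2\sum_{t=K'+1}^{k}\| \lambda_t^*\| + \sum_{t=K'+1}^{k}\sqrt{2\eta_t\epsilon_t}$, which is dominated by the claimed bound (whose $\lambda^*$-sum merely adds the nonnegative term $2\| \lambda_{K'}^*\|$).

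I expect the main obstacle to be conceptual rather than computational: because the optimistic CMDP — and hence its dual optimum $\lambda_k^*$ — changes in every episode, there is no single fixed comparison point $\lambda^*$ against which to run a standard dual-ascent argument. The resolution is precisely that the one-step inequality above is self-contained in $\| \lambda_k\|$ and uses the \emph{current} episode's $\lambda_k^*$, so the triangle-inequality unrolling accumulates $\sum_t \| \lambda_t^*\|$ instead of collapsing to one term; strict feasibility of \cref{eq:opt-KKT} for every $k' \in \{K',\dots,K\}$ (guaranteed by \cref{prop:pretraining}) is exactly what ensures each pair $(\pi_{k'}^*, \lambda_{k'}^*)$ exists. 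A secondary technical point is justifying the two reductions of the augmented penalty — the feasibility bound $[\lambda_k + \eta_k(\cdots)]_+ \leq \lambda_k$ and the subgradient-type inequality $\eta_k g_k \leq \lambda_{k+1} - \lambda_k$ — which are what let the augmented Lagrangian telescope as cleanly as ordinary dual ascent.
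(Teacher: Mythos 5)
Your proposal is correct and follows essentially the same route as the paper: both establish the one-step inequality $\| \lambda_{k+1} - \lambda_k^*\|^2 \leq \| \lambda_k - \lambda_k^*\|^2 + 2\eta_k\epsilon_k$ (the paper's \cref{prop:dual-bound}) from the $\epsilon_k$-optimality of the inner solver, feasibility of $\pi_k^*$, and the strong-duality inequality of \cref{lemma:aux-ineq}, and then unroll via square roots and triangle inequalities from $\lambda_{K'+1}=0$. Your derivation merely compresses the paper's intermediate steps (\cref{lemma:1,lemma:2,lemma:ialm-step}) by specializing immediately to $(\pi_k^*,\lambda_k^*)$ and using the polarization identity directly, and your resulting sum $2\sum_{t=K'+1}^{k}\|\lambda_t^*\|$ is in fact slightly tighter than (and dominated by) the stated bound.
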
\vspace{-0.5em}
Having achieved a bound on the dual iterates $\lambda_{k+1}$ in terms of the dual maximizers $\lambda_{k'}^*$  (${k'} \in \curly{K', \dots, k}$), we can now aim to provide bounds for the latter. Indeed, we can leverage results from constrained convex optimization (\cref{sec:conv-prelim}) to arrive at the following lemma.
\begin{restatable}[]{lemma}{thmlstarbound} \label{thm:lstar-bound}
    Suppose Assumption \ref{ass:mild-slater} holds. Let $\nu \in (0,1)$ and choose $K'$ as in \cref{prop:pretraining}. Let $k \in \curly{K', \dots, K}$, and let $(\pi_k^*, \lambda_k^*)$ be a pair of primal-optimal and dual-optimal solutions for \cref{eq:opt-KKT}. Then, conditioned on $G$, we have
    \begin{align*}
        \|\lambda_k^*\| \leq \| \lambda_k^* \|_1 \leq \frac{H}{\nu \gamma}.
    \end{align*}
\end{restatable}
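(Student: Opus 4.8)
The plan is to run the textbook Slater-point bound on optimal Lagrange multipliers, specialized to the optimistic CMDP in \cref{eq:opt-KKT}. The two facts that make this work are supplied by earlier results: conditioned on $G$ and for every $k \geq K'$, \cref{prop:pretraining} guarantees that $\pibar$ is strictly feasible for \cref{eq:opt-KKT} with a \emph{uniform} slack, namely $V^{\pibar}(\dt_{i,k}, \pt_{k}) \leq \alpha_i - \nu\gamma$ for all $i\in[I]$; and, because \cref{eq:opt-KKT} admits a convex (LP) reformulation in occupancy-measure space (\cref{sec:conv-prelim}) with a Slater point, strong duality holds and the pair $(\pi_k^*, \lambda_k^*)$ exists with $\lambda_k^* \geq 0$ and
\[
V^{\pi_k^*}(\ct_k, \pt_{k}) = \min_{\pi\in\Pi}\Big( V^{\pi}(\ct_k, \pt_{k}) + (\lambda_k^*)^T\big(V^{\pi}((\dt_{i,k})_{i\in[I]}, \pt_{k}) - \alpha\big)\Big).
\]

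Next I would evaluate the minimand at the specific policy $\pibar$, which can only increase the minimum, obtaining
\[
V^{\pi_k^*}(\ct_k, \pt_{k}) \leq V^{\pibar}(\ct_k, \pt_{k}) + (\lambda_k^*)^T\big(V^{\pibar}((\dt_{i,k})_{i\in[I]}, \pt_{k}) - \alpha\big).
\]
Since $\lambda_k^* \geq 0$ and each coordinate of $V^{\pibar}((\dt_{i,k})_{i\in[I]}, \pt_{k}) - \alpha$ is at most $-\nu\gamma$, the inner product is bounded above by $-\nu\gamma\sum_{i\in[I]}\lambda_k^*(i) = -\nu\gamma\,\|\lambda_k^*\|_1$. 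Rearranging gives $\nu\gamma\,\|\lambda_k^*\|_1 \leq V^{\pibar}(\ct_k, \pt_{k}) - V^{\pi_k^*}(\ct_k, \pt_{k})$.

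Finally I would bound the right-hand side by $H$: conditioned on $G$ we have $\ct_k \leq c \leq 1$ pointwise, so $V^{\pibar}(\ct_k, \pt_{k}) \leq H$, while $V^{\pi_k^*}(\ct_k, \pt_{k}) \geq 0$ since the (clipped) optimistic costs are nonnegative, whence the difference is at most $H$. Dividing by $\nu\gamma > 0$ yields $\|\lambda_k^*\|_1 \leq H/(\nu\gamma)$, and $\|\lambda_k^*\| \leq \|\lambda_k^*\|_1$ is just the elementary $\ell_2 \leq \ell_1$ inequality. I expect the only genuinely delicate point to be the \emph{uniformity} of the slack across all $k \geq K'$: a naive argument would control the multipliers only in terms of the slack of $\pibar$ measured at each optimistic CMDP, but since the estimated constraint values of $\pibar$ drift with $k$, one must invoke \cref{prop:pretraining} to certify the $\nu\gamma$ slack simultaneously for every episode $k\geq K'$ on the event $G$. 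The convexity/strong-duality justification and the $[0,H]$ value-function bound are routine by comparison.
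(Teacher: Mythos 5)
Your proposal is correct and follows essentially the same route as the paper: the paper verifies that Assumption \ref{ass:slater} holds with $\pi^0=\pibar$ and $\sigma = H/(\nu\gamma)$ (uniform slack via \cref{prop:pretraining} on $G$, value gap bounded by $H$) and then invokes the generic Slater-point bound of \cref{thm:conv-dual} applied to the occupancy-measure LP, whose proof is exactly your inlined chain of evaluating the dual function at $\pibar$ and rearranging. The only difference is presentational — you unfold the abstract lemma rather than citing it.
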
\vspace{-0.5em}
Plugging \cref{cor:dual-bound} into the bounds from \cref{lemma:fdiff} and replacing the norms of the $\lambda^*_k$ using the bound from \cref{thm:lstar-bound}, we obtain sublinear optimization errors when choosing $\eta_k$, $\epsilon_k$ correctly:
\begin{restatable}[Optimization errors]{lemma}{lemmaopterrork} \label{lemma:opt-error-k}
    Suppose Assumption \ref{ass:mild-slater} holds. Let $\nu \in (0,1)$ and choose $K'$ as in \cref{prop:pretraining}. Suppose that the event $G$ occurs. When using step sizes $\eta_{K'+k} = \Theta(k^{2.5})$ and $\epsilon_{K'+k} = \Theta(1/\eta_{K'+k})$, we have
    \begin{align*}
        \sum_{k=K'+1}^{K} [ V^{\pi_{k}}( \ct_k, \pt_{k}) - V^{\pi^*}(c, p) ]_{+} \leq& \sum_{k=K'+1}^{K} \bigg( \frac{(O(\sigma k) + \sum_{t=K'+1}^{k} \sqrt{2\eta_t\epsilon_t})^2}{2\eta_k} + \epsilon_k \bigg) \leq O(\sqrt{K}), \\
        \max_{i \in [I]} \sum_{k=K'+1}^{K} [ V^{\pi_{k}}(\dt_{i,k}, \pt_{k}) - \alpha_i ]_{+} \leq& \sum_{k=K'+1}^{K} \frac{O(\sigma k) + \sum_{t=K'+1}^{k}    \sqrt{2\eta_t \epsilon_k}}{\eta_k} \leq O(\sqrt{K}),
    \end{align*}
    where $\sigma = \frac{H}{\nu \gamma}$ and in fact $O(\sigma k)$ can be replaced by $(2+2(k-K'))\sigma$.
\end{restatable}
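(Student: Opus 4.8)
The plan is to assemble the three structural bounds already in hand—the per-episode optimization-error bounds of \cref{lemma:fdiff}, the dual-iterate bound of \cref{cor:dual-bound}, and the uniform estimate $\|\lambda_k^*\| \leq \sigma := H/(\nu\gamma)$ of \cref{thm:lstar-bound}—and then evaluate the resulting sums under the prescribed step-size schedule. I would first pass to positive parts episode by episode. For the objective, \cref{lemma:fdiff} gives $V^{\pi_{k}}(\ct_k, \pt_{k}) - V^{\pi^*}(c, p) \leq \epsilon_k + (\|\lambda_{k}\|^2 - \|\lambda_{k+1}\|^2)/(2\eta_k)$; using monotonicity and subadditivity of $[\cdot]_+$ together with $\epsilon_k > 0$ and $\|\lambda_{k+1}\|^2 \geq 0$, I would drop the negative term to get $[V^{\pi_{k}}(\ct_k, \pt_{k}) - V^{\pi^*}(c, p)]_+ \leq \epsilon_k + \|\lambda_{k}\|^2/(2\eta_k)$. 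For the constraint, since $\lambda_k$ is produced by a $[\cdot]_+$ update and hence has non-negative entries, $\lambda_{k+1}(i) - \lambda_k(i) \leq \lambda_{k+1}(i) \leq \|\lambda_{k+1}\|$, so the right-hand side of the second bound in \cref{lemma:fdiff} is already non-negative and $[V^{\pi_{k}}(\dt_{i,k}, \pt_{k}) - \alpha_i]_+ \leq \|\lambda_{k+1}\|/\eta_k$ uniformly in $i$.

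Next I would insert the dual-norm bound. Applying \cref{cor:dual-bound}—directly for the constraint term, which carries $\|\lambda_{k+1}\|$, and with the index shifted by one for the objective term, which carries $\|\lambda_k\|$—and replacing every $\|\lambda_t^*\|$ by $\sigma$ via \cref{thm:lstar-bound} (valid because \cref{prop:pretraining} makes \cref{eq:opt-KKT} strictly feasible for all $k' \geq K'$), gives $\|\lambda_{k+1}\| \leq (2+2(k-K'))\sigma + \sum_{t=K'+1}^{k} \sqrt{2\eta_t\epsilon_t}$, with the $\sigma$-term $= 2\sum_{t=K'}^{k}\|\lambda_t^*\|$ counting $k-K'+1$ summands. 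Substituting these into the two per-episode bounds and summing over $k$ yields exactly the two middle inequalities in the statement; the $\max$ over $i$ for the constraint regret passes through trivially, since the bound $\|\lambda_{k+1}\|/\eta_k$ does not depend on $i$.

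Finally I would evaluate the sums under $\eta_{K'+k} = \Theta(k^{2.5})$, $\epsilon_{K'+k} = \Theta(1/\eta_{K'+k})$. Re-indexing by $j = k - K'$, the crucial observation is that $\sqrt{2\eta_t\epsilon_t} = \Theta(1)$, so $\sum_{t=K'+1}^{k} \sqrt{2\eta_t\epsilon_t}$ grows linearly in $j$; combined with the $\sigma$-term $O(\sigma j)$ this makes the numerator $\Theta(j)$ with $\sigma$ treated as constant. For the objective, each summand is then $O(j^2)/\Theta(j^{2.5}) + \Theta(j^{-2.5}) = O(j^{-1/2})$, whose partial sums are $O(\sqrt{K})$. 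For the constraint, each summand is $O(j)/\Theta(j^{2.5}) = O(j^{-3/2})$, whose sum is $O(1) \leq O(\sqrt{K})$. This also pins down the exponent $2.5$: writing $\eta_k \sim k^{\alpha}$ and $\epsilon_k \sim k^{-\alpha}$, the objective sum scales like $K^{3-\alpha}$, and $\alpha = 2.5$ is the smallest exponent making this $\sqrt{K}$ while keeping $\sum_k \epsilon_k = \sum_k 1/\eta_k = O(1)$ sublinear.

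I expect the main obstacle to be conceptual rather than computational: because we retain the positive parts (the strong-regret requirement), the telescoping identity $\sum_k (\|\lambda_k\|^2 - \|\lambda_{k+1}\|^2)/(2\eta_k)$ driving the weak-regret analysis of \citet{xu2021iteration,efroni2020exploration} is no longer available, forcing a per-episode bound on $\|\lambda_k\|$ instead. That bound (\cref{cor:dual-bound}) grows linearly in $k$, so the delicate point is choosing the schedule aggressively enough ($\eta_k$ superlinear) to absorb the $\|\lambda_k\|^2/\eta_k \sim j^2/\eta_k$ growth, yet not so aggressively that $\sum_k 1/\eta_k$ or the accumulated Frank--Wolfe errors $\sum_t \sqrt{\eta_t\epsilon_t}$ turn linear; the balance lands exactly at $\eta_k = \Theta(k^{2.5})$.
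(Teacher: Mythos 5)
Your proposal is correct and follows essentially the same route as the paper's proof: drop the negative terms in \cref{lemma:fdiff}, bound $\|\lambda_k\|$ and $\|\lambda_{k+1}\|$ via \cref{cor:dual-bound} together with $\|\lambda_t^*\|\leq\sigma$ from \cref{thm:lstar-bound} (handling the index shift for the objective term exactly as the paper does), and then evaluate the sums under $\epsilon_k=\Theta(1/\eta_k)$ and $\eta_{K'+k}=\Theta(k^{2.5})$, obtaining $O(j^{-1/2})$ per-episode objective terms and $O(j^{-3/2})$ constraint terms. Your added remark on why the exponent $2.5$ is the smallest admissible choice is consistent with the paper's discussion in \cref{rmk-main-param}.
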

\looseness -1\textbf{Remark:} We need to choose $\eta_k$ large enough (increasing) and $\epsilon_k$ small enough (decreasing) to ensure a sublinear error bound. At the same time, we do not want to choose $\eta_k$ larger than necessary or $\epsilon_k$ smaller than necessary for computational reasons (see \cref{lemma:inner-routine}). We refer to \cref{sec:main-res-omit} for a discussion. In the case of an exact subroutine, we can plug in $\epsilon_k = 0$ to achieve an analogous result.

According to our regret decomposition, by adding up the errors of the pre-training phase, the estimation errors in the second phase, and the optimization errors in the second phase, we can indeed deduce our main result (\cref{thm:optaugrlpp-regret}). We showed how to bound the estimation errors using the optimism paradigm (\cref{prop:est-regret}). For the analysis of the optimization errors, we had to generalize the convergence analysis of the inexact augmented Lagrangian method (\cref{lemma:opt-error-k}). We showed that if we have access to a safe baseline policy, the pre-training phase guarantees all assumptions required for this and adds a constant term to the regret (\cref{prop:pretraining}).


\section{Conclusion}

In this work, we showed how to overcome the problem of the \textit{cancellation of errors}, i.e., the oscillation of standard Lagrangian-based algorithms for CMDPs around an optimal safe policy. We leveraged the augmented Lagrangian method to design our algorithm \AlgName. Unlike the related \textsc{OptDual-CMDP} algorithm of \citet{efroni2020exploration}, this requires a subroutine that solves a non-linear optimization problem in each episode. We devised an efficient algorithm for this, avoiding projections or LP. We then provided a regret analysis that, unlike previous works, does not require the cancellation of errors to arrive at sublinear regret guarantees. This means that in contrast to existing Lagrangian-based algorithms, our algorithm is provably safe \textit{while exploring} the unknown CMDP. 

\looseness=-1This first partial answer to the open problem posed by \cite{efroni2020exploration} leads to several further questions: Can we obtain tighter bounds for the inner sub-routine and the regret, as our problem has a richer structure than the general convex optimization setup? While \AlgName~enjoys stronger regret guarantees, the proposed inner subroutine has a higher computational cost than the one in \textsc{OptDual-CMDP}, which may be possible to improve. Moreover, it remains open whether one can remove the requirement of access to a strictly feasible policy. Finally, we aim to extend our approach to the more practical function approximation setup.



\newpage

\bibliographystyle{plainnat}
\bibliography{refs}

\medskip


\newpage

\renewcommand \thepart{}
\renewcommand \partname{}

\appendix
\doparttoc
\faketableofcontents
\part{Appendix}
\parttoc


\section{Background} \label{sec:background-app}


\subsection{Review of the Augmented Lagrangian Method} \label{sec:alm}

In this section, we review the fundamentals of the augmented Lagrangian method \citep{rockafellar1976augmented, bertsekas2014constrained}, which was first introduced by \citet{hestenes1969multiplier, Powell1969AMF}. Our review is partly inspired by the review by \citet{yan2020bregman}.

Consider a closed and convex set $\Xc \subset \R^n$ and a closed convex function $f \colon \Xc \to \R$. Furthermore, let $A \in \R^{m \times n}$ and $b \in \R^m$. With this, consider the constrained problem
\begin{align}
    \min_{x \in \Xc} &~~~f(x) \label{eq:alm-opt}\\
    \text{s.t.} &~~~Ax \leq b. \nonumber
\end{align}
After initializing $x_0 \in \Xc$ and $\lambda_0 \in \R_{\geq 0}^m$, the augmented Lagrangian method performs the following updates in step $k \geq 0$:
\begin{align}
    x_{k+1} \in& \arg\min_{x \in \Xc}  \left( f(x) + \frac{1}{2\eta_k} \|[ \lambda_k + \eta_k (Ax - b)]_+\|^2 \right), \label{eq:alm1}\\
    \lambda_{k+1} =& [\lambda_k + \eta_k (Ax_{k+1} - b)]_+. \label{eq:alm2}
\end{align} 
It can easily be verified that the augmented Lagrangian method is the proximal point method applied to the Lagrangian dual. Thus, with the Lagrangian $\mathcal{L}(x,\lambda) := f(x) + \lambda^T(Ax-b)$, we have 
\begin{align*}
    (x_{k+1}, \lambda_{k+1}) \in \arg \max_{\lambda \geq 0} \min_{x \in \Xc} \left( \mathcal{L}(x,\lambda) - \frac{1}{2\eta_k} \| \lambda - \lambda_k \|^2 \right).
\end{align*}
\citet{xu2021iteration} shows a non-asymptotic convergence result for the augmented Lagrangian method, including the case of an inexact subroutine for solving \cref{eq:alm1}. Notably, the convergence result concerns the last iterate of the method. That is if $f^*$ is the optimal value of \cref{eq:alm-opt}, then $f(x_{k}) \to f^*$ as $k \to \infty$ and the convergence rate is determined by the step sizes $\eta_k$ (and, in the case of an inexact subroutine, by the accuracy parameter $\epsilon_k$). 

Alternatively, we could apply the dual projected gradient method to \cref{eq:alm-opt}. The updates would then read 
\begin{align}
    \tilde{x}_{k+1} \in& \arg\min_{x \in \Xc}  \left( f(x) + \tilde{\lambda}_k^T (Ax - b) \right), \label{eq:pgd1}\\
    \tilde{\lambda}_{k+1} =& [\tilde{\lambda}_k + \tilde{\eta}_k (A \tilde{x}_{k+1} - b)]_+. \label{eq:pgd2}
\end{align}
While \cref{eq:pgd2} coincides with \cref{eq:alm2}, the update of the primal variable differs from the one in the augmented Lagrangian method. We can view the dual projected gradient method as iterative play between a primal player $x_k$ and a dual player $\lambda_k$ in a min-max setup, where the objective is the Lagrangian $\mathcal{L}(x,\lambda)$. The primal player here plays best response, while the dual player plays online projected gradient ascent. While this is similar in spirit to applying the proximal point method to the Lagrangian dual, the known non-asymptotic convergence guarantees for this method are ergodic, i.e., they only concern convergence of the averaged iterates. Indeed, simple simulations show that this is not a weakness in the analysis but that the primal and dual iterates indeed oscillate around an optimal solution pair. This is illustrated by \citet[Chapter 8]{beck2017first}. The iterates of the dual projected gradient method may alternate between satisfying $f(x_k) > f^*$ and $Ax_k < b$ for a couple of iterations and then $f(x_k) < f^*$ and $Ax_k > b$ for a couple of iterations. While the average objective value converges to $f^*$ and the average constraint violation to $0$, this is not true for the individual iterates. Note that these oscillations are not due to estimating the problem but are present even if the optimization problem is fixed, as in the setup above. Apart from the augmented Lagrangian method, other methods, such as extra gradient or optimistic gradient descent-ascent, offer solutions to this issue.


\subsection{Review of \textsc{OptDual-CMDP}} \label{sec:optdual}

In this section, we review the related \textsc{OptDual-CMDP} algorithm of \citet{efroni2020exploration}. This model-based dual algorithm is based on the dual projected gradient method (see \cref{sec:alm}) rather than on the augmented Lagrangian method (but builds the model with the same notion of optimism).
\begin{algorithm}[H]
	\begin{algorithmic}
        \STATE{Set $\eta_k := \sqrt{\frac{\rho^2}{H^2 I K}}$ and $\lambda_1 := 0 \in \R^I$}
        \FOR{$k=1,\dots, K$}
            \STATE{Update policy:}
            \begin{align}
                \pi_{k}, \tilde{p}_{k} := \arg\min_{\substack{\pi \in \Pi \\ p' \in B_k^p}} \left( V^{\pi}(\ct_k, p') + \lambda_k^T ( V^{\pi}((\dt_{i,k})_{i\in[I]}, p') - \alpha )  \right) 
            \end{align}
            \STATE{Update dual variables:}
            \begin{align}
                \lambda_{k+1} := [ \lambda_{k} + \eta_k (  V^{\pi_{k}}((\dt_{i,k})_{i\in[I]}, \pt_{k}) - \alpha ) ]_+
            \end{align}
            \STATE{Play $\pi_{k}$, update estimates of the costs $\ct_{k+1}$, $(\dt_{i,k+1})_{i\in[I]}$ and transitions $B_{k+1}^p$ (\cref{eq:opt-c}).}
        \ENDFOR
		\caption{\textsc{OptDual-CMDP}}
	\end{algorithmic}
	\label{algo:optdual}
\end{algorithm}

Dual approaches like the algorithm above turn the CMDP into a series of linearly regularized, (extended) \textit{unconstrained} MDPs (here with objective $V^{\pi}(\ct_k, p') + \lambda_k^T (V^{\pi}((\dt_{i,k})_{i\in[I]}, p') - \alpha )$), that can be solved efficiently with DP. With the augmented Lagrangian approach, the inner problem (\cref{eq:primal-step}) has a more complicated structure. In \cref{sec:inner}, we show that the inner problem can be reformulated as a convex optimization problem and provide an efficient method based on Frank-Wolfe and DP.

For \textsc{OptDual-CMDP}, we have the following guarantee \citep[Theorem 5]{efroni2020exploration}.
\begin{restatable}[]{theorem}{thm5efroni}
    Suppose there exists a strictly feasible policy $\pi$ such that for all $i \in [I]$ we have $V^{\pi}(d_i, p) < \alpha_i $. Set
    \begin{align*}
        \rho := \frac{V^{\pi}(c,p) - V^{\pi^*}(c,p)}{\min_{i \in [I]} (\alpha_i - V^{\pi}(d_i, p))}.
    \end{align*}
    Then, for any $\delta\in (0,1)$, with probability at least $1-\delta$, \textsc{OptDual-CMDP} achieves the following regret bounds:
    \begin{align*}
        \Reg_{\pm}(K; c) &= \tilde{O} \bigg( \sqrt{S \mathcal{N} H^4 K} + \rho \sqrt{H^2 I K} + (\sqrt{\mathcal{N}} +H ) H^2 SA \bigg),\\
        \Reg_{\pm}(K;d) &= \tilde{O} \bigg( (1 + \frac{1}{\rho})(\sqrt{I S\mathcal{N} H^4 K} + (H\sqrt{ \mathcal{N}} + S) \sqrt{I}H^2S A) \bigg).
    \end{align*}
\end{restatable}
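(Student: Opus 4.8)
The statement reproduces the \textsc{OptDual-CMDP} guarantee of \citet{efroni2020exploration}, so the plan is to reconstruct it through the dual projected gradient analysis reviewed in \cref{sec:alm}, crucially exploiting that here the regret is the \emph{weak} (signed) regret: cancellations are allowed, so the oscillation of the dual iterates is harmless and no last-iterate control is needed. Conditioning on a success event as in \cref{lem:g-whp} (so that $\ct_k \leq c$, $\dt_{i,k}\leq d_i$, and $p\in B_k^p$ hold simultaneously for all $k$ with probability $\geq 1-\delta$), I first split each signed per-episode error exactly as in \cref{sec:regret} but \emph{without} positive parts:
\begin{align*}
V^{\pi_k}(c,p) - V^{\pi^*}(c,p) &= \big(V^{\pi_k}(c,p) - V^{\pi_k}(\ct_k,\pt_k)\big) + \big(V^{\pi_k}(\ct_k,\pt_k) - V^{\pi^*}(c,p)\big),\\
V^{\pi_k}(d_i,p) - \alpha_i &= \big(V^{\pi_k}(d_i,p) - V^{\pi_k}(\dt_{i,k},\pt_k)\big) + \big(V^{\pi_k}(\dt_{i,k},\pt_k) - \alpha_i\big).
\end{align*}
The first bracket in each line is an estimation error, which I bound with the on-policy optimistic-exploration machinery behind \cref{prop:est-regret}; this contributes the $\sqrt{S\mathcal{N}H^4K}$ leading term and the additive lower-order terms. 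The whole difficulty lies in the second bracket, the optimization error, which I control via the dual dynamics.

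For the objective optimization error, set $g_k := V^{\pi_k}((\dt_{i,k})_{i\in[I]},\pt_k) - \alpha$. Since $(\pi_k,\pt_k)$ jointly minimize the linearized Lagrangian and $(\pi^*,p)$ is a feasible competitor ($p\in B_k^p$ on the success event), I obtain
\begin{align*}
V^{\pi_k}(\ct_k,\pt_k) + \lambda_k^T g_k \leq V^{\pi^*}(\ct_k,p) + \lambda_k^T\big(V^{\pi^*}((\dt_{i,k})_{i\in[I]},p) - \alpha\big).
\end{align*}
Optimism gives $V^{\pi^*}(\ct_k,p)\leq V^{\pi^*}(c,p)$ and $V^{\pi^*}(\dt_{i,k},p)\leq V^{\pi^*}(d_i,p)\leq\alpha_i$, and as $\lambda_k\geq 0$ the penalty on the right is nonpositive; hence $V^{\pi_k}(\ct_k,\pt_k) - V^{\pi^*}(c,p)\leq -\lambda_k^T g_k$. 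Summing and applying the standard online-gradient-ascent regret bound for the dual update with comparator $\lambda=0$ yields $\sum_k(-\lambda_k^T g_k)\leq \tfrac{\eta}{2}\sum_k\|g_k\|^2 \leq \tfrac{\eta}{2}KIH^2$ (each coordinate of $g_k$ lies in $[-H,H]$, so $\|g_k\|^2\leq IH^2$); substituting $\eta=\sqrt{\rho^2/(H^2IK)}$ produces the $\rho\sqrt{H^2IK}$ term.

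For the constraint optimization error I use the one-sided telescoping built into the projection: from $\lambda_{k+1}(i)=[\lambda_k(i)+\eta_k g_k(i)]_+\geq \lambda_k(i)+\eta_k g_k(i)$ I get $g_k(i)\leq(\lambda_{k+1}(i)-\lambda_k(i))/\eta_k$, so the sum telescopes to $\sum_k (V^{\pi_k}(\dt_{i,k},\pt_k)-\alpha_i)\leq \|\lambda_{K+1}\|/\eta$. Everything then reduces to an a priori bound on the dual iterates, and this is the main obstacle. The plan is a Slater argument: whenever a coordinate $\lambda_k(i)$ becomes large, the heavily penalized best response $\pi_k$ must strictly satisfy constraint $i$ (because the strictly feasible competitor $\pi$ does and the penalty dominates the objective), which forces $g_k(i)<0$ and drives that coordinate back down. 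Turning this into the quantitative bound $\|\lambda_{K+1}\| = O\big((1+1/\rho)(\ldots)\big)$ is where the slack appears in the denominator of $\rho$, and where the changing optimistic model across episodes (unlike the fixed problem of \cref{sec:alm}) must be tracked carefully; combined with the choice of $\eta$ this yields the $(1+1/\rho)$ prefactor and the $\sqrt{IS\mathcal{N}H^4K}$ scaling.

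Finally, adding the estimation and optimization contributions in each line gives the two stated bounds. The two delicate points, both to be handled with care, are (i) that the success event must hold \emph{uniformly} over all $K$ episodes with the prescribed bonuses so that the optimism inequalities above are simultaneously valid, and (ii) the dual-iterate bound, which is exactly where the weak-regret analysis can get away with the comparator-$0$ and telescoping arguments above instead of the more elaborate last-iterate dual control required for \AlgName.
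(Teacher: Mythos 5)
First, a point of reference: the paper does not prove this statement at all. It is quoted verbatim as \citet[Theorem 5]{efroni2020exploration} in the appendix review of \textsc{OptDual-CMDP} (\cref{sec:optdual}), purely as background for contrasting weak versus strong regret; the ``proof'' in the paper is the citation. So your attempt has to be judged against the standard analysis of that algorithm rather than against anything in this manuscript. Your regret decomposition, the use of the success event, and the objective-side argument are sound and do match that standard route: the inequality $V^{\pi_k}(\ct_k,\pt_k)-V^{\pi^*}(c,p)\leq -\lambda_k^T g_k$ from optimism plus best response, followed by the online-gradient-ascent regret bound with comparator $\lambda=0$, correctly produces the $\rho\sqrt{H^2IK}$ term with the stated $\eta$.

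The genuine gap is in the constraint bound, and it sits exactly where you say the ``main obstacle'' is: you never establish the quantitative bound on the dual iterates (or, equivalently, the comparator argument that produces the $(1+1/\rho)$ prefactor). The heuristic that a large $\lambda_k(i)$ forces $g_k(i)<0$ is qualitatively right but does not by itself yield $\|\lambda_{K+1}\|=O(\cdot)$; the standard way to close this is either an explicit induction bounding $\|\lambda_k\|$ via the strictly feasible competitor, or an OGA regret bound against a shifted comparator $\lambda=Ce_i$ with $C\asymp\rho$ combined with the bound on the optimal multiplier (the analogue of \cref{thm:conv-dual}), after which dividing by $C$ produces the $1/\rho$. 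Neither is carried out. Two further loose ends: (i) the Slater slack you need is that of the \emph{optimistic} problem at episode $k$, not of the true CMDP, and relating the two is nontrivial when the model changes every episode --- this is precisely the difficulty the main paper introduces its pre-training phase to handle for \AlgName, so it cannot be waved away here; and (ii) the extra $\sqrt{I}$ in the constraint bound relative to the objective bound (the $\sqrt{IS\mathcal{N}H^4K}$ scaling) is asserted but never derived from your telescoping step. As written, the proposal proves the objective bound modulo the estimation machinery, but only outlines the constraint bound.
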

Notably, this bound only covers the \textit{weak} regret. This is because the dual projected gradient method does not allow for a non-ergodic convergence analysis, and its iterates will generally oscillate around an optimal feasible solution. It is worth mentioning that unlike LP-based approaches (\textsc{OptCMDP} and \textsc{OptCMDP-bonus} of \citet{efroni2020exploration}), the related primal-dual method \textsc{OptPrimalDual-CMDP} of \citet{efroni2020exploration} suffers from the same problem as \textsc{OptDual-CMDP}. Moreover, we remark that this is not just a hypothetical issue but that Lagrangian-based algorithms indeed suffer from the mentioned oscillations in practical applications \citep{stooke2020responsive,moskovitz2023reload}.


\subsection{CMDPs and Occupancy Measures} \label{sec:occ-notation}

We summarize the relevant quantities of the CMDP $\mathcal{M}$ as follows.
{{\renewcommand{\arraystretch}{1.6}  
\begin{table}[H]
    \fontsize{8pt}{8pt}\selectfont
    \centering
    \begin{tabular}{|>{\bfseries}l|l|}
        \hline
        Discrete state space& $\St$, with cardinality $S$\\
        Discrete action space& $\A$, with cardinality $A$\\
        \# of constraints & $I$ \\
        Initial state & $s_1$, same for each episode\\
        Time horizon &$H$, same for each episode\\
        Transition probability & $p_h(s' | s,a) = P[s_{h+1}=s' \mid s_h=s, a_h=a]$\\
        Max. \# transitions & $\N = \max_{s,a,h} |\{ s' \mid p_h(s' | s,a) > 0 \}|$ \\ \hline
        Objective cost & Random variable $C_h(s,a) \in [0,1]$,
        with $\E[C_h(s,a)]=c_h(s,a)$\\
        Constraint cost ($i\in [I]$) & Random variable $D_{i,h}(s,a)$ with $\E[D_{i,h}(s,a)]=d_{i,h}(s,a)$ \\
        Constraint encoding & $(d_i, \alpha_i )_{i\in[I]}$, with $d_{i} = (d_{i,h})_{h\in[H]}$ and $\alpha_i \in [0,H]$ \\ \hline
        Policy & $\pi = (\pi_1, \dots, \pi_H) \in \Pi$ with $\pi_h \colon \St \to \Simplex{\A}$ (non-stationary)\\ \hline
        Value function & $V^{\pi}(c,p) = \E[\sum_{h=1}^H c_{h}(s_{h}, a_{h}) \mid s_1, p, \pi]$ \\
        Constraint value function ($i\in [I]$) & $V^{\pi}(d_i,p)=\E[ \sum_{h=1}^H d_{i,h}(s_{h}, a_{h}) \mid s_1, p, \pi]$ \\\hline
    \end{tabular}
    \vspace{0.2cm}
    \caption{Summary of CMDP notation}
    \label{tab:mdp}
\end{table}
} \vspace{-0.6cm}
To view the CMDP as a convex optimization problem, we will express it via the common notion of occupancy measures \citep{borkar1988convex}.
\begin{definition}
    The state-action \textit{occupancy measure} $q^{\pi}$ of a policy $\pi$ for a CMDP $\M$ is defined as 
    \begin{align*}
        q_h^{\pi}(s,a;p) := \E \left[ \indic_{\{ s_h = s, a_h = a \}} \mid s_1, p, \pi \right] = P[ s_h=s, a_h=a \mid s_1, p, \pi ],
    \end{align*}
    for $s\in\St$, $a\in\A$, $h\in[H]$. We denote the stacked vector of these values as $q^{\pi}(p) \in \R^{SAH}$, with the element at index $(s,a,h)$ being $q_h^{\pi}(s,a;p)$.
\end{definition}
For transition probabilities $p'$, we can now define
\begin{align*}
    Q(p') := \left\{ q^{\pi}(p') \in \R^{SAH} \mid \pi \in \Pi \right\}
\end{align*}
as the \textit{state-action occupancy measure polytope}. Note that $Q(p')$ is indeed a polytope \citep{altman1999constrained,efroni2020exploration}. Moreover, for any $p'$ we have a surjective map $\pi \mapsto q^{\pi}(p')$ between $\Pi$ and $Q(p')$, for which we can explicitly compute an element in the pre-image of $q \in Q(p')$ via $\pi_h(a|s) = q_h(s,a) / (\sum_{a'} q_h(s,a'))$.

We can stack the expected costs $c_h(s,a)$ and constraint costs $d_{ih}(s,a)$ in the same way as $q_h(s,a)$ to obtain vectors $c \in \R^{SAH}$ and $d_i \in \R^{SAH}$. Note that we then have $V^{\pi}(c,p) = \sum_{h,s,a} q^{\pi}_h (s,a;p) c_h(s,a) = c^T q^{\pi}(p)$ by linearity of expectation. Similarly, for all $i\in [I]$, we have $V^{\pi}(d_i, p) = d_i^T q^{\pi}(p)$. Moreover, if we stack $D = (d_i)_{i\in[I]} \in \R^{I \times SAH}$ and $\alpha = (\alpha_i)_{i\in[I]}\in \R^I$ as
\begin{align*}
    D := \left(\begin{matrix}
    d_1^T\\
    \vdots\\
    d_I^T
    \end{matrix}\right),\hspace{1cm}
    \alpha := \left( \begin{matrix}
        \alpha_1\\
        \vdots \\
        \alpha_I
    \end{matrix} \right),
\end{align*}
we obtain $V^{\pi}(D, p) = D q^{\pi}(p) \in [0,H]^I$ for the vector of the constraint value functions. We can thus write
\begin{align*}
    \pi^* \in \arg \min_{\pi \in \Pi} ~~~~ V^{\pi}(c, p) ~~~~ \text{s.t.} ~~~~ V^{\pi}(d_i, p) \leq \alpha_i ~~~~(\forall i \in [I]) 
\end{align*}
equivalently as
\begin{align*}
    q^{\pi^*} \in \arg \min_{q^{\pi} \in Q(p)} ~~~~ c^T q^{\pi}(p) ~~~~ \text{s.t.} ~~~~ D q^{\pi}(p) \leq \alpha, \nonumber
\end{align*}
which is an LP. In particular, if we assume feasibility, then by compactness of the state-action occupancy polytope and continuity of the objective, there is an optimal solution $\pi^*$.


\subsection{Convex Optimization Preliminaries} \label{sec:conv-prelim}

We state some well-known results from constrained convex optimization that will be useful to bound the dual iterates $\lambda_k$ appearing in \cref{lemma:fdiff}. The results are standard, and we refer, for example, to the work by \citet{beck2017first}. 

Consider the (primal) optimization problem
\begin{align}
    f^* := \min ~~~ & f(x) \nonumber\\
    \text{s.t.} ~~~ & g(x) \leq 0 \label{eq:opt-P}\\
    & x \in X \nonumber
\end{align}
with the following assumptions.

\begin{assumption}[Assumption 8.41, \citet{beck2017first}] \label{ass:841}
    In \cref{eq:opt-P},
    \begin{itemize}
        \item[(a)] $X \subset \R^n $ is convex
        \item[(b)] $f\colon \R^n \to \R$ is convex
        \item[(c)] $g(\cdot) := (g_1(\cdot), \dots, g_m(\cdot)) ^T$ with $g_i \colon \R^n \to \R$ convex
        \item[(d)] \cref{eq:opt-P} has a finite optimal value $f^*$, which is attained by exactly the elements of $X^* \neq \emptyset$ 
        \item[(e)] There exists $\xbar \in X$ such that $g(\xbar) < 0$
        \item[(f)] For all $\lambda \in \R_{\geq 0}^m$, $\min_{x\in X} (f(x) + \lambda^Tg(x))$ has an optimal solution
    \end{itemize}
\end{assumption}

In this setup, we define the \textit{dual objective} as
\begin{align*}
    q(\lambda) := \min_{x \in X} \left( f(x) + \lambda^T g(x) \right),
\end{align*}
where $\mathcal{L} \colon \R^n \times \R^m \to \R$, $\mathcal{L}(x;\lambda) := f(x) + \lambda^T g(x)$ is the \textit{Lagrangian} of the problem in \cref{eq:opt-P}. The \textit{dual problem} is then defined as 
\begin{align*}
    q^* := \max ~~~ & q(\lambda)\\
    \text{s.t.} ~~~ & \lambda \geq 0.
\end{align*}
In this setup, we have the following results connecting the primal and the dual problem.

\begin{restatable}[Theorem A.1, \citet{beck2017first}]{theorem}{thmduality} \label{thm:duality}
    Under Assumption \ref{ass:841}, strong duality holds in the following sense: We have 
    \begin{align*}
        f^* = q^*
    \end{align*}
    and the optimal solution of the dual problem is attained, with the set of optimal solutions $\Lambda^* \neq \emptyset$.
\end{restatable}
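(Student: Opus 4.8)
The statement to be proven is \cref{thm:duality}, which asserts strong duality (that $f^* = q^*$) together with attainment of the dual optimum for the constrained convex program \cref{eq:opt-P} under \cref{ass:841}. My plan is to prove this in the standard two-part structure: first establish weak duality, then close the gap and establish dual attainment via a separating hyperplane argument exploiting the Slater condition \ref{ass:841}(e).

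First I would verify \emph{weak duality}, namely $q^* \leq f^*$. This is immediate and requires no convexity: for any feasible $x$ (so $g(x)\leq 0$) and any $\lambda \geq 0$ we have $\lambda^T g(x) \leq 0$, hence $f(x) + \lambda^T g(x) \leq f(x)$, and therefore $q(\lambda) = \min_{x\in X}\mathcal{L}(x;\lambda) \leq f(x)$ for every feasible $x$. Taking the infimum over feasible $x$ on the right and the supremum over $\lambda\geq 0$ on the left yields $q^* \leq f^*$. The function $q(\cdot)$ is concave (being a pointwise infimum of affine functions of $\lambda$), which will matter for attainment.

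The substantive direction is $f^* \leq q^*$ together with attainment, and here I would use the classical convex-separation technique. Define the set of achievable constraint/objective perturbations
\begin{align*}
    \mathcal{A} := \{ (u, t) \in \R^m \times \R \mid \exists\, x \in X \text{ s.t. } g(x) \leq u,\ f(x) \leq t \}.
\end{align*}
Using convexity of $X$, $f$, and each $g_i$ from \ref{ass:841}(a)--(c), one checks that $\mathcal{A}$ is convex. By definition of $f^*$, the point $(0, f^*)$ lies on the boundary of $\mathcal{A}$ (it is in the closure but not the interior, since no feasible point achieves value strictly below $f^*$). Applying a supporting hyperplane theorem at $(0,f^*)$ produces a nonzero vector $(\mu, \mu_0) \in \R^m \times \R$ such that $\mu^T u + \mu_0 t \geq \mu_0 f^*$ for all $(u,t)\in\mathcal{A}$. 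The geometry of $\mathcal{A}$ (it is closed upward in each coordinate) forces $\mu \geq 0$ and $\mu_0 \geq 0$. The \emph{key obstacle}, and the only place the Slater point is essential, is ruling out the degenerate case $\mu_0 = 0$: if $\mu_0 = 0$ then the separating inequality reads $\mu^T u \geq 0$ on $\mathcal{A}$, yet evaluating at the Slater point $\xbar$ from \ref{ass:841}(e), for which $g(\xbar) < 0$ strictly, gives $\mu^T g(\xbar) \geq 0$ with $\mu \geq 0$ and $g(\xbar)<0$, forcing $\mu = 0$ and contradicting $(\mu,\mu_0)\neq 0$. Hence $\mu_0 > 0$, and I can normalize $\mu_0 = 1$ and set $\lambda := \mu \geq 0$.

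Finally I would extract the conclusion. With $\lambda = \mu \geq 0$ the supporting inequality, specialized to points of the form $(g(x), f(x))$ for $x \in X$, gives $f(x) + \lambda^T g(x) \geq f^*$ for all $x\in X$, so $q(\lambda) = \min_{x\in X}(f(x)+\lambda^T g(x)) \geq f^*$; the minimum is attained by \ref{ass:841}(f). Combined with weak duality $q(\lambda)\leq q^* \leq f^*$, this pins down $q(\lambda) = q^* = f^*$, which establishes both $f^*=q^*$ and that this particular $\lambda$ attains the dual maximum, so $\Lambda^* \neq \emptyset$. Since the paper cites this as Theorem A.1 of \citet{beck2017first}, I expect the actual ``proof'' to simply invoke that reference rather than reproduce the separation argument; the main conceptual point to highlight is that Slater's condition \ref{ass:841}(e) is precisely what guarantees a nonvertical supporting hyperplane and hence both zero duality gap and dual attainment.
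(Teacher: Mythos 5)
Your proposal is correct, and it is genuinely more than what the paper does: the paper's entire proof of \cref{thm:duality} is a citation, deferring to Proposition 6.4.4 of \citet{bertsekas2003convex} (which establishes the more general Theorem A.1 of \citet{beck2017first}), plus a remark that assumption (e) can be dropped when $g$ is affine and $X$ is a polytope. You correctly anticipated this. Your self-contained argument --- weak duality from $\lambda^T g(x)\leq 0$ on the feasible set, convexity of the perturbation set $\mathcal{A}$, a supporting hyperplane at $(0,f^*)$, nonnegativity of the multipliers from the upward-closedness of $\mathcal{A}$, and the Slater point ruling out a vertical hyperplane ($\mu_0=0$) --- is the standard and complete proof of both zero duality gap and dual attainment, and every step checks out (including the use of (d) to place $(0,f^*)$ in $\mathcal{A}$ and of (f) to make $q(\lambda)$ a genuine minimum). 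The one thing your route does not buy is the paper's side remark: since your argument leans essentially on the strict inequality $g(\xbar)<0$ to force $\mu_0>0$, it does not by itself recover the affine/polyhedral refinement in which (e) is dispensable; that refinement is exactly what the Bertsekas reference supplies and why the paper cites it rather than the textbook Slater argument. But as the remark is not part of the theorem statement, your proof fully establishes \cref{thm:duality} as stated.
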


\begin{proof}
     Proposition 6.4.4 of \citet{bertsekas2003convex} proves the more general Theorem A.1 of \citet{beck2017first}. We remark that if we assume affine constraints $g$ and $X$ being a polytope, then we can drop assumption (e) \citep[Theorem A.1]{beck2017first}. 
\end{proof}

\begin{restatable}[]{theorem}{thmconvauxineq}\label{thm:conv-aux-ineq}
    Suppose Assumption \ref{ass:841} holds. Let $x^*\in X^*$, $\lambda^* \in \Lambda^*$ and $x \in X$. Then 
    \begin{align*}
        f(x) - f(x^*) + (\lambda^*)^T g(x) \geq 0.
    \end{align*}
\end{restatable}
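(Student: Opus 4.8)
The plan is to prove the stated inequality directly from the definition of the dual objective and the strong duality guaranteed by \cref{thm:duality}. The key observation is that for any fixed dual-optimal $\lambda^* \in \Lambda^*$, the quantity $f(x) + (\lambda^*)^T g(x)$ is, as a function of $x \in X$, minimized by the value of the dual objective $q(\lambda^*)$. That is, for every $x \in X$ we have the trivial lower bound
\begin{align*}
    f(x) + (\lambda^*)^T g(x) \geq \min_{x' \in X} \left( f(x') + (\lambda^*)^T g(x') \right) = q(\lambda^*).
\end{align*}
Since $\lambda^* \in \Lambda^*$ is dual-optimal, we have $q(\lambda^*) = q^*$, and by strong duality (\cref{thm:duality}) we have $q^* = f^*$. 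Finally, since $x^* \in X^*$ is primal-optimal, $f(x^*) = f^*$. Chaining these gives $f(x) + (\lambda^*)^T g(x) \geq f(x^*)$, which rearranges to the claimed inequality $f(x) - f(x^*) + (\lambda^*)^T g(x) \geq 0$.

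The main steps, in order, are therefore: first, invoke the definition of $q(\lambda^*)$ as an infimum over $x \in X$ to lower-bound $f(x) + (\lambda^*)^T g(x)$ by $q(\lambda^*)$; second, use that $\lambda^*$ attains the dual optimum so $q(\lambda^*) = q^*$; third, apply \cref{thm:duality} to replace $q^*$ with $f^*$; and fourth, use primal optimality of $x^*$ to identify $f^* = f(x^*)$ and rearrange. Each of these is a one-line consequence of a definition or of a result already established in the excerpt, so no substantive computation is required.

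I do not expect any genuine obstacle here, as the statement is essentially a restatement of weak duality applied at an optimal dual point combined with the strong duality equality. The only subtlety worth flagging is to make sure the hypotheses of \cref{thm:duality} are in force, namely Assumption \ref{ass:841}, which is assumed in the statement of \cref{thm:conv-aux-ineq}; this guarantees both that the dual optimum is attained (so that $\Lambda^* \neq \emptyset$ and picking $\lambda^*$ is legitimate) and that $f^* = q^*$. Assumption \ref{ass:841}(f) additionally ensures the inner minimization defining $q(\lambda^*)$ has an attained optimal solution, so the infimum is a genuine minimum, though for the inequality we only need it as a lower bound and attainment is not strictly necessary.
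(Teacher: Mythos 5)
Your proposal is correct and follows essentially the same argument as the paper's proof: lower-bound $f(x) + (\lambda^*)^T g(x)$ by $q(\lambda^*)$ via the definition of the dual objective, then identify $q(\lambda^*) = q^* = f^* = f(x^*)$ using dual optimality, strong duality (\cref{thm:duality}), and primal optimality, and rearrange. Your remarks about attainment and the role of Assumption \ref{ass:841} are accurate but not needed for the inequality itself, exactly as you note.
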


\begin{proof}
    We have 
    \begin{align*}
        f(x) =& f(x) + (\lambda^*)^T g(x) - (\lambda^*)^T g(x)&\\
        \geq& q(\lambda^*)- (\lambda^*)^T g(x) & \text{(definition of } q(\cdot)\text{)}\\
        =& f(x^*) - (\lambda^*)^T g(x),& \text{(since by \cref{thm:duality}, } q^*=f^*\text{)}
    \end{align*}
    and rearranging this proves the claim. Again, we can drop assumption (e) if we consider affine constraints $g$ and a polytope $X$.
\end{proof}

\begin{restatable}[]{theorem}{thmconvdual} \label{thm:conv-dual}
    Under Assumption \ref{ass:841}, for all $\lambda^* \in \Lambda^*$ and $\xbar$ as in (e), we have 
    \begin{align*}
        \| \lambda^* \| \leq \|\lambda^* \|_1 \leq \frac{f(\xbar)  - f^*}{\min_{i\in [m]} (-g_i(\xbar))}.
    \end{align*}
\end{restatable}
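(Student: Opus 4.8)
The plan is to prove the bound $\|\lambda^*\|_1 \leq \frac{f(\xbar) - f^*}{\min_{i\in[m]}(-g_i(\xbar))}$ by exploiting strong duality together with the feasibility of the Slater point $\xbar$. First I would invoke \cref{thm:duality} to guarantee that $\lambda^* \in \Lambda^*$ attains the dual optimum, so that $q(\lambda^*) = q^* = f^*$. Writing out the dual objective, for the Slater point $\xbar \in X$ from assumption (e) we have
\begin{align*}
    f^* = q(\lambda^*) = \min_{x \in X}\left( f(x) + (\lambda^*)^T g(x) \right) \leq f(\xbar) + (\lambda^*)^T g(\xbar).
\end{align*}
Rearranging yields $(\lambda^*)^T(-g(\xbar)) \leq f(\xbar) - f^*$.

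The key observation is that the left-hand side can be lower-bounded using nonnegativity of $\lambda^*$ and the strict feasibility $g(\xbar) < 0$. Since each $\lambda_i^* \geq 0$ and each $-g_i(\xbar) > 0$, I would write
\begin{align*}
    (\lambda^*)^T(-g(\xbar)) = \sum_{i=1}^m \lambda_i^* (-g_i(\xbar)) \geq \left(\min_{i\in[m]}(-g_i(\xbar))\right) \sum_{i=1}^m \lambda_i^* = \left(\min_{i\in[m]}(-g_i(\xbar))\right) \|\lambda^*\|_1,
\end{align*}
where the last equality uses $\|\lambda^*\|_1 = \sum_i \lambda_i^*$ because $\lambda^* \geq 0$. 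Combining this with the previous inequality and dividing by the strictly positive quantity $\min_{i\in[m]}(-g_i(\xbar))$ gives $\|\lambda^*\|_1 \leq \frac{f(\xbar) - f^*}{\min_{i\in[m]}(-g_i(\xbar))}$. The bound $\|\lambda^*\| \leq \|\lambda^*\|_1$ is the standard relation between the Euclidean and $\ell_1$ norms on $\R^m_{\geq 0}$, completing the chain.

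I do not anticipate a serious obstacle here, since this is a classical Slater-point argument; the only points requiring mild care are ensuring the denominator is strictly positive (which follows from $g(\xbar) < 0$ in assumption (e)) and that $\min_i(-g_i(\xbar))$ can be factored out of the weighted sum, which relies crucially on all the weights $\lambda_i^*$ being nonnegative. As in \cref{thm:conv-aux-ineq}, one could additionally remark that assumption (e) may be weakened when the constraints $g$ are affine and $X$ is a polytope, though the bound as stated still presupposes the existence of a strictly feasible $\xbar$ for the right-hand side to be finite and meaningful.
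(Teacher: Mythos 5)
Your proposal is correct and follows essentially the same argument as the paper: both use strong duality to write $f^* = q(\lambda^*) \leq f(\xbar) + (\lambda^*)^T g(\xbar)$, then bound the inner product using $\lambda^* \geq 0$ and the strict negativity of $g(\xbar)$ to extract $\|\lambda^*\|_1 \min_{i\in[m]}(-g_i(\xbar))$, and finish with $\|\lambda^*\| \leq \|\lambda^*\|_1$. Your remark that assumption (e) is genuinely needed here (unlike in \cref{thm:conv-aux-ineq}) matches the paper's own comment.
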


\begin{proof}
    The first relation holds since $\lambda^* \geq 0$. We show the second relation as follows (cf. \citet[Theorem 8.42]{beck2017first}). We have 
    \begin{align*}
        f(x^*) =& q(\lambda^*) &\text{(\cref{thm:duality})}\\
        \leq& f(\xbar) + (\lambda^*)^T g(\xbar) &\text{(definition of } q(\cdot)\text{)}\\
        \leq& f(\xbar) + \|\lambda^*\|_1 \max_{i\in[m]} g_i(\xbar) &\text{(since } \lambda^* \geq 0\text{)}\\
        =& f(\xbar) - \|\lambda^*\|_1 \min_{i\in[m]} (-g_i(\xbar)) & 
    \end{align*}
    and rearranging this proves the claim. We remark that this theorem needs assumption (e), even in the affine case. 
\end{proof}


\section{Solving the Inner Optimization Problem} \label{sec:inner}

There are numerous works leveraging Frank-Wolfe schemes for RL (including planning) with convex objectives, most commonly in the context of pure/active exploration \citep{hazan2019provably, tarbouriech2019active, tarbouriech2020active, mutny2023active}.

To the best of our knowledge, only \citet[Appendix A.3]{tarbouriech2019active} remark that a combination of Frank-Wolfe UCB and planning in extended MDPs would be possible with a convex objective and plausible transitions. \citet{tarbouriech2020active} follow this approach but solve an extended LP rather than an extended MDP in each Frank-Wolfe iteration. Therefore, we make the former idea explicit and show how we can devise an efficient algorithm for our case.


\subsection{Derivation of the Policy Update via Frank-Wolfe} \label{sec:fw-inner}

In the following, we provide an efficient algorithm for solving the inner optimization problem in \cref{eq:primal-step} based on the extended LP formulation of CMDPs, Frank-Wolfe, and DP. Recall the first update in \AlgName while rewriting the value functions in terms of occupancy measures (see \cref{sec:occ-notation}):
\begin{align}
    &\pi_{k}, \tilde{p}_{k} := \arg\min_{\substack{\pi \in \Pi \\ p' \in B_k^p}} \left( \ct_k^T q^{\pi}(p') + \frac{1}{2\eta_k} \left \| [ \lambda_k + \eta_k ( \Dt_k q^{\pi}(p') - \alpha ) ]_+ \right \|^2 \right), \label{eq:step1}
\end{align}
where $q^{\pi}(p') \in \R^{SAH}$, $\Dt_k \in \R^{I \times SAH}$ and $\ct_k \in \R^{SAH}$ are defined as in \cref{sec:occ-notation}.

We first use the extended LP trick \citep{rosenberg2019online,efroni2020exploration} to switch to a convex optimization problem in the state-action-state occupancy measure. That is, in the problem above, substitute
\begin{align}
    z_h(s,a,s') := z_h^{\pi}(s,a,s'; p') :=& p'_h(s' | s,a) q_h^{\pi}(s,a;p')  \label{eq:subst}
\end{align}
and note that
\begin{align}
    q^{\pi}_h(s,a;p') = \sum_{s'} z_h^{\pi}(s,a,s';p'). \label{eq:retrieve-q}
\end{align}
Stacked across $(s,a,h) \in \St \times \A \times [H]$, this simply reads
\begin{align}
    q^{\pi}(p') = \sum_{s'} z^{\pi}(s';p') \in \R^{SAH}, \nonumber
\end{align}
for all $s' \in \St$. The objective of \cref{eq:step1} then reads
\begin{align}
    f(z) := \sum_{s'} \ct_k^T z(s') + \frac{1}{2\eta_k}  \| [ \lambda_k + \eta_k ( \sum_{s'} \Dt_k z(s') - \alpha ) ]_+ \|^2, \nonumber
\end{align}
and we need to minimize it over the set $Z \subset \R^{S^2AH}$ which is given by the constraints
\begin{align}
    \begin{cases} 
    \sum_{a,s'} z_h(s,a,s') = \sum_{s',a'} z_{h-1}(s',a',s) \tab& (\forall h > 1, s )\nonumber\\[7pt]
    \sum_{a,s'} z_1(s,a,s') = \mu(s) \tab& (\forall s)\nonumber\\[7pt]
    z_h(s,a,s') \geq 0 \tab& (\forall s,a,s',h)\nonumber\\[7pt]
    z_h(s,a,s') - (\pb^{k-1}_h(s' | s, a) + \beta_{k,h}^{p}(s,a,s'))\sum_{s''} z_h(s,a,s'') \leq 0\tab& (\forall s,a,s',h)\nonumber\\[7pt]
    - z_h(s,a,s') + (\pb^{k-1}_h(s' | s, a) - \beta_{k,h}^{p}(s,a,s'))\sum_{s''} z_h(s,a,s'') \leq 0 \tab& (\forall s,a,s',h), \nonumber
    \end{cases}
\end{align}
where $\mu(s) = 1$ if $s=s_1$ and $0$ otherwise. Note that $Z$ is a bounded polytope and thus compact and convex. We can thus equivalently solve 
\begin{align}
    \min_{z \in Z} ~f(z), \label{eq:convex-fw}
\end{align}
which is a standard convex optimization problem. Note that, due to the nonlinear objective, even with the convex formulation in \cref{eq:convex-fw}, \cref{eq:primal-step} cannot be written as an LP. It cannot be rewritten as a standard convex RL problem \citep{zahavy2021reward,geist2021concave} either since the transition probabilities are part of the optimization, and we thus had to switch to the extended convex program in the state-action-state occupancy measure $z$. Note that for efficiency reasons, we aim to avoid LP (which is needed for solving CMDPs due to the lack of the Bellman optimality principle) and projections onto the high-dimensional constraint set $Z$. Instead, we will make use of DP and gradient-based methods. 

If we can approximately solve \cref{eq:convex-fw}, we can later retrieve transitions and policy via
\begin{align}
    \pt_{k,h}(s' | s, a) &:= \frac{z_h(s,a,s')}{\sum_{s''} z_h(s,a,s'')}, \label{eq:retrieve-p}\\
    \pi_{k,h}(a | s) &:= \frac{\sum_{s'} z_h(s,a,s')}{\sum_{a',s'} z_h(s,a',s')}. \label{eq:retrieve-pi}
\end{align}
We use Frank-Wolfe to solve \cref{eq:convex-fw}. Frank-Wolfe is a well-known iterative method for constrained nonlinear optimization, which minimizes a smooth convex function over a convex domain and avoids projections. To apply it to \cref{eq:convex-fw}, we need a linear minimization oracle (LMO) that, in step $t$ (of episode $k$), solves
\begin{align}
    \min_{g \in Z} ~g^T \nabla f(z^t)
\end{align}
and then, after finding a minimizer $g$, updates 
\begin{align}
    z^{t+1} := (1-\gamma_t)z^{t} + \gamma_t g, \nonumber
\end{align}
where $\gamma_t := 2/(t+2)$. It is well known that Frank-Wolfe converges with a rate of $O(1/T)$ for smooth objectives. This is given here, but the smoothness parameter depends on $\eta_k$. We refer to \citet{abernethy2017frank,jaggi2013revisiting,frank1956algorithm} for the relevant background.

It remains to show that we can provide an efficient LMO that uses DP instead of LP and avoids projections onto $Z$. Let $ (\nabla f(z))(s, a, s', h) := \frac{\partial f}{\partial z_h(s,a,s')}(z)$ be the gradient of $f$ with respect to $z$ at index $(s,a,s',h)$. Then the $s'$-th component of the gradient of $f$ with respect to $z$ is
\begin{align}
    (\nabla f(z))(\cdot, \cdot, s', \cdot) = \ct_k + \Dt_k^T [\lambda_k + \eta_k ( \Dt_k \sum_{s''} z(s'') - \alpha )]_+ \in \R^{SAH}, \label{eq:nabla}
\end{align}
which we can compute explicitly and efficiently. We note that this gradient does not depend on $s'$, and thus the gradient of $f$ with respect to the whole vector $z \in \R^{S^2 A H}$ is simply the vector above, repeatedly stacked $S$ times.

We now show how switching back to the optimization over $\Pi \times B_k^p$ allows for an efficient LMO via DP. For $g \in Z$, there are $\pi$, $p'$ such that
\begin{align}
    g_h(s,a,s') = g_h^{\pi}(s,a,s'; p') = p_h(s' | a, s) q_h^{\pi}(s,a;p), \nonumber
\end{align}
via \cref{eq:retrieve-pi,eq:retrieve-p}. The LMO then needs to minimize
\begin{align}
    g^T \nabla f(z^t) =& \sum_{s,a,s',h} g_h(s,a,s') (\nabla f(z^t))(s,a,1,h)\nonumber\\
    =& \sum_{s,a,s',h}  p'_h(s' | a, s) q_h^{\pi}(s,a;p') (\nabla f(z^t))(s,a,1,h)\nonumber\\
    =& \sum_{s,a,h} q_h^{\pi}(s,a;p') (\nabla f(z^t))(s,a,1,h) \label{eq:lmo-policy}
\end{align}
over $\pi \in \Pi$ and $p' \in B_k^p$. This corresponds to solving the extended MDP $\M^+ :=\{(M=(\St, \A, r^+, p^+)) \mid \forall s,a,h \colon r^+_h(s,a) := \nabla f(z^t)(s,a,1,h),~ p_h^+(\cdot | s,a) \in B_{k,h}^p(s,a) \}$. We can do so via backward induction (i.e., DP) that optimizes
\begin{align}
    Q_h^k(s,a) := r_h^+(s,a) + \min_{p'(\cdot | s,a) \in B_{k,h}^p(s,a)} \sum_{s'} p'(s' | s,a ) \min_{a'} Q_{h+1}^k (s',a') \label{eq:bw-ind}
\end{align}
and starts with $Q_{H+1}^k(s,a)=0$. We can retrieve the transitions and the policy by storing the minimizers in each step of the DP. To compute the solution $g$ of the LMO from $\pi$, $p'$, one can then use a simple and efficient DP scheme to compute $q_h^{\pi}(p')$, which is explained in \cref{sec:pseudo-inner}. We can use this, in turn, to retrieve $g$ using the substitution in \cref{eq:subst}. Then, we perform the second step of Frank-Wolfe to get a convex combination of $g$ and $z^t$, which concludes the Frank-Wolfe iteration.

We can already see that the computational complexity of the $k$-th step of \AlgName~is larger than the one of \textsc{OptDual-CMDP} by a factor of $O(1 / \epsilon_k)$ (and a dependency on $\eta_k$) because we need this many Frank-Wolfe iterations, which is the price we pay for the stronger regret bound. We provide a complete analysis of the iteration complexity in \cref{sec:iter-compl}.


\subsection{Pseudocode for the Inner Problem} \label{sec:pseudo-inner}

More formally, the algorithm for solving the inner problem (\cref{eq:primal-step}) reads as follows.
\begin{algorithm}[H]
	\textbf{Input:} Current estimates $\ct_k$, $\Dt_k$, $B_k^p$, and $\lambda_k$ \\
	\textbf{Output:} Next policy $\pi_{k}$ and $\pt_{k}$ according to \cref{eq:step1}
	
	\begin{algorithmic}
        \STATE{Set $T = \frac{2 \eta_k I S^2 A H }{\epsilon_k} $, $\gamma_t = 2/(2+t)$}
        \STATE{Initialize $z^0 \in Z$ arbitrarily}
        \FOR{$t = 0, \dots, T-1$}
            \STATE{Compute $\nabla f(z^t)$ according to \cref{eq:nabla}}
            \STATE{Minimize \cref{eq:lmo-policy} via DP algorithm from \cref{eq:bw-ind} to find $\pi^t$, $p^t$}
            \STATE{Retrieve minimizer $g^t$ from $\pi^t$, $p^t$} via DP algorithm from \cref{eq:jin-recurstion} and \cref{eq:subst} \label{line:jin-rec}
            \STATE{Set $z^{t+1} := \gamma_t g^t + (1-\gamma_t) z^t$}
        \ENDFOR
        \STATE{Construct $\pi_{k}$, $\pt_{k}$ from $z^T$ via \cref{eq:retrieve-pi,eq:retrieve-p}}
        \RETURN{$\pi_{k}$, $\pt_{k}$}
		\caption{\textsc{InnerOpt-FW}}
	\end{algorithmic}
	\label{algo:step1}
	
\end{algorithm}

As a final step, we now describe how to retrieve $q := q^{\pi}(p')$ from $\pi$ and $p'$ in Line \ref{line:jin-rec} \citep{jin2019learning}. For $s\in\St$, set $q_h(s) := P[s_h=s | s_1; p', \pi] = \sum_{a' \in \A} q_h(s,a')$. Given a policy $\pi$ and transition probabilities $p'$, we need to compute 
\begin{align}
    q_h(s, a) = q_h(s) \cdot \pi_h(a | s), \label{eq:decomp-occ}
\end{align}
for every $h \in [H]$, $s\in\St $ and $a\in \A$. This is easily achieved via DP. Indeed, we have 
\begin{align*}
    q_1(s) = \begin{cases}
        1 \tab (s=s_1)\\
        0 \tab (\text{else}),
    \end{cases}
\end{align*}
and for $h > 1$
\begin{align}
    q_h(s) =& P[s_h=s | s_1; p', \pi] \nonumber\\
    =& \sum_{s' \in \St } P[s_h=s | s_{h-1} = s'; s_1, p', \pi] P[s_{h-1} = s' | s_1; p',\pi]\nonumber\\
    =& \sum_{s' \in \St } \sum_{a \in \A} \pi_{h-1}(a | s') p_{h-1}(s | s', a) P[s_{h-1} = s' | s_1,; p',\pi] \nonumber\\
    =& \sum_{s' \in \St } \sum_{a \in \A} \pi_{h-1}(a | s') p_{h-1}(s | s', a) q_{h-1}(s'), \label{eq:jin-recurstion}
\end{align}
which together with \cref{eq:decomp-occ} enables us to retrieve the state-action occupancy measure. Clearly, we can perform the above DP scheme efficiently.


\section{On-Policy Error Bounds} \label{sec:onp-bounds}

We consider arbitrary polices $(\pi_k)_{k\in [K]}$. We suppose that in the CMDP $\M$, the agent plays $\pi_k$ in episode $k \in [K]$ and uses it to update the optimistic model according to \cref{eq:opt-c}, with some fixed $\delta = 3 \delta' > 0$.

We first establish \cref{lem:l36-efroni,lem:37efr,lem:onp-error}, which will allow us to bound the estimation errors (\cref{prop:est-regret}). For a definition of the occupancy measure $q^{\pi}(s,a;p)$, see \cref{sec:occ-notation}. We write $\lesssim$ for an inequality up to polylogarithmic factors.

Note that in the following two lemmas, the exponent of $H$ differs from the one in the referenced proofs. This is because the referenced works consider the case of stationary transition probabilities, whereas we consider non-stationary dynamics. See \citet[Lemmas 18, 19]{shani2020optimistic}.

\begin{lemma}[Lemma 36, \citet{efroni2020exploration}] \label{lem:l36-efroni}
    Suppose for all $s$, $a$, $h$, $k \in [K]$ we have 
    \begin{align*}
        n_{h}^{k-1}(s,a) &> \frac{1}{2} \sum_{j<k} q_h^{\pi_j}(s,a;p) - H \log\left( \frac{SAH}{\delta'}\right).
    \end{align*}
    Then for all $K' \leq K$
    \begin{align*}
       \sum_{k'=1}^{K'} \sum_{h=1}^{H} \E \left[ \frac{1}{\sqrt{n^{k'-1}_h(s_h^{k'},a_h^{k'})}} \mid \mathcal{F}_{k'-1} \right] \leq \tilde{O}( \sqrt{SAH^2K'} + SAH ),
    \end{align*}
    where $\mathcal{F}_{k'-1}$ is the $\sigma$-algebra induced by all random variables up to and including episode $k'-1$.
\end{lemma}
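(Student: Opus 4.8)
\textbf{Proof plan for Lemma~\ref{lem:l36-efroni}.}

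The plan is to exploit the given lower bound on the visitation counts $n_h^{k-1}(s,a)$ to replace the random denominators $\sqrt{n_h^{k'-1}(s_h^{k'},a_h^{k'})}$ by the cumulative expected occupancies $\sqrt{\sum_{j<k'} q_h^{\pi_j}(s,a;p)}$, and then to bound the resulting deterministic-looking sum by a Cauchy--Schwarz argument. Concretely, I would first rewrite the conditional expectation inside the sum. Since $(s_h^{k'},a_h^{k'})$ is distributed according to $q_h^{\pi_{k'}}(\cdot,\cdot;p)$ given $\mathcal{F}_{k'-1}$, we have
\begin{align*}
    \E\left[ \frac{1}{\sqrt{n_h^{k'-1}(s_h^{k'},a_h^{k'})}} \mid \mathcal{F}_{k'-1}\right] = \sum_{s,a} q_h^{\pi_{k'}}(s,a;p) \frac{1}{\sqrt{n_h^{k'-1}(s,a) \vee 1}}.
\end{align*}
The first key step is then to substitute the hypothesis $n_h^{k'-1}(s,a) > \tfrac12 \sum_{j<k'} q_h^{\pi_j}(s,a;p) - H\log(SAH/\delta')$ into this denominator. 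After handling the additive logarithmic correction (splitting into state-action pairs whose accumulated occupancy is large compared to the log term versus those where it is small, the latter contributing only a lower-order $SAH$-type term), this reduces the problem to bounding $\sum_{k'}\sum_h \sum_{s,a} q_h^{\pi_{k'}}(s,a;p)/\sqrt{\sum_{j\le k'} q_h^{\pi_j}(s,a;p)}$ up to constants and polylog factors.

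The second key step is a pigeonhole/telescoping estimate on each fixed $(s,a,h)$. Writing $w_{k'} := q_h^{\pi_{k'}}(s,a;p) \in [0,1]$ and $N_{k'} := \sum_{j\le k'} w_j$, one uses the elementary inequality $\sum_{k'} w_{k'}/\sqrt{N_{k'}} \lesssim \sqrt{N_{K'}}$ (the continuous analogue being $\int_0^N dt/\sqrt{t} = 2\sqrt{N}$). Summing this over all $(s,a,h)$ and applying Cauchy--Schwarz across the $SAH$ triples, together with $\sum_{s,a} N_{K'}(s,a,h) = K'$ for each $h$ (since each $q_h^{\pi_j}(\cdot;p)$ sums to one over $(s,a)$), yields
\begin{align*}
    \sum_{h=1}^H \sum_{s,a} \sqrt{\textstyle\sum_{j\le K'} q_h^{\pi_j}(s,a;p)} \leq \sum_{h=1}^H \sqrt{SA \cdot K'} = H\sqrt{SAK'} = \sqrt{SAH^2 K'},
\end{align*}
which matches the leading term. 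The $SAH$ term absorbs the contribution from the logarithmic slack.

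I expect the main obstacle to be the careful bookkeeping around the additive $-H\log(SAH/\delta')$ correction in the count hypothesis: one must argue that pairs with small accumulated occupancy contribute at most a lower-order term, and that after removing the slack the inequality $n_h^{k'-1}(s,a) \gtrsim \sum_{j<k'} q_h^{\pi_j}(s,a;p)$ can be used cleanly inside the square root. Since this is precisely the content of \citet[Lemma~36]{efroni2020exploration} (with the only change being the exponent of $H$ due to non-stationary dynamics, as flagged in the surrounding text via \citet[Lemmas~18,~19]{shani2020optimistic}), I would carry over their argument, adjusting the $H$-dependence in the telescoping step where the per-step occupancies are summed over the horizon rather than reused across a stationary chain.
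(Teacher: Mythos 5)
Your plan is correct: the substitution of counts by cumulative occupancies via the stated hypothesis, the telescoping bound $\sum_{k'} w_{k'}/\sqrt{N_{k'}} \lesssim \sqrt{N_{K'}}$ per $(s,a,h)$, and the Cauchy--Schwarz step using $\sum_{s,a} N_{K'}(s,a,h)=K'$ are exactly the standard argument behind this lemma. The paper itself does not reprove it---its ``proof'' is a one-line pointer to \citet[Lemma 38]{efroni2019tight}---so your sketch reconstructs precisely the argument the paper delegates to that reference, with the only paper-specific adjustment (the extra factor of $H$ from non-stationary dynamics) correctly identified.
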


\begin{proof}
    We refer to \citet[Lemma 38]{efroni2019tight} for a proof of the statement.
\end{proof}

\begin{lemma}[Lemma 37, \citet{efroni2020exploration}] \label{lem:37efr}
    Suppose for all $s$, $a$, $h$, $k \in [K]$ we have 
    \begin{align*}
        n_{h}^{k-1}(s,a) &> \frac{1}{2} \sum_{j<k} q_h^{\pi_j}(s,a;p) - H \log\left( \frac{SAH}{\delta'}\right).
    \end{align*}
    Then for all $K' \leq K$
    \begin{align*}
       \sum_{k'=1}^{K'} \sum_{h=1}^{H} \E \left[ \frac{1}{n^{k'-1}_h(s_h^{k'},a_h^{k'})} \mid \mathcal{F}_{k'-1} \right] \leq \tilde{O}( SAH^2 ),
    \end{align*}
    where $\mathcal{F}_{k'-1}$ is the $\sigma$-algebra induced by all random variables up to and including episode $k'-1$.
\end{lemma}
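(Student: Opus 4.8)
The statement to prove is \cref{lem:37efr}: under the hypothesis that the visitation counts satisfy
\begin{align*}
    n_{h}^{k-1}(s,a) > \tfrac{1}{2} \sum_{j<k} q_h^{\pi_j}(s,a;p) - H \log(SAH/\delta'),
\end{align*}
we want to bound $\sum_{k'=1}^{K'} \sum_{h=1}^H \E[ 1/n_h^{k'-1}(s_h^{k'},a_h^{k'}) \mid \F_{k'-1}] \lesssim SAH^2$.

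The plan is to mirror the proof of the companion \cref{lem:l36-efroni}, but with the reciprocal count $1/n$ in place of $1/\sqrt{n}$, which is precisely what yields a fully $K'$-independent (i.e., logarithmic-in-$K$) bound rather than the $\sqrt{K'}$ term. First I would rewrite the conditional expectation of the per-step quantity as a sum over state-action pairs weighted by the true occupancy: for fixed $k'$,
\begin{align*}
    \E\Big[ \tfrac{1}{n_h^{k'-1}(s_h^{k'},a_h^{k'})} \mid \F_{k'-1}\Big] = \sum_{s,a} q_h^{\pi_{k'}}(s,a;p) \cdot \tfrac{1}{n_h^{k'-1}(s,a) \vee 1},
\end{align*}
since conditioned on $\F_{k'-1}$ the counts $n_h^{k'-1}(s,a)$ are deterministic and the state-action pair at step $h$ is distributed according to $q_h^{\pi_{k'}}(\cdot;p)$. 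Summing over $h$ and $k'$ reduces the goal to bounding $\sum_{h}\sum_{k'}\sum_{s,a} q_h^{\pi_{k'}}(s,a;p)/(n_h^{k'-1}(s,a)\vee 1)$.

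The key step is to invoke the counting hypothesis to replace the true occupancy $q_h^{\pi_{k'}}(s,a;p)$ in the numerator by the empirical counts in the denominator. Concretely, I would use the hypothesis $n_h^{k'-1}(s,a) > \tfrac12 \sum_{j<k'} q_h^{\pi_j}(s,a;p) - H\log(SAH/\delta')$ to lower-bound the running sum of occupancies by (a constant multiple of) the count plus a logarithmic slack, and then apply a standard summation lemma of the form $\sum_{k'} a_{k'}/(\sum_{j<k'} a_j + c) \lesssim \log$-factors, where $a_{k'} = q_h^{\pi_{k'}}(s,a;p)$. Because each $a_{k'}\in[0,1]$ and $\sum_{k'} a_{k'} \le K$, this telescoping/integral-comparison argument produces a factor that is only logarithmic in $K$ for each fixed $(s,a,h)$. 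Summing the resulting $\tilde O(1)$ bound over the $S A H$ triples and accounting for the extra $H$ from the factor structure (the occupancies at step $h$ sum over $H$ and the slack term carries an $H$) gives the claimed $\tilde O(SAH^2)$. I would lift the precise form of this summation lemma and the handling of the $\log$ slack directly from \citet{efroni2019tight}, which \cref{lem:l36-efroni} already cites for the analogous $1/\sqrt{n}$ statement.

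The main obstacle, and the only genuinely delicate point, is handling the additive logarithmic slack $-H\log(SAH/\delta')$ and the $\vee 1$ truncation when passing from $\sum_{j<k'} q_h^{\pi_j}$ to $n_h^{k'-1}$ inside the denominator: one must ensure the denominator stays bounded away from zero (or is absorbed by the $\vee 1$) so that the summation lemma applies cleanly, and track that the slack contributes at most an additive $\tilde O(SAH^2)$ rather than inflating the leading order. Since \cref{lem:l36-efroni} and \cref{lem:37efr} share identical hypotheses and differ only in the exponent of $n$, the cleanest route is to defer to the corresponding lemma of \citet{efroni2019tight}, noting only the adjustment from stationary to non-stationary dynamics (the extra power of $H$, as flagged in the remark preceding the lemma via \citet[Lemmas 18, 19]{shani2020optimistic}); I would not reprove the summation lemma from scratch.
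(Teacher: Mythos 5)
The paper's entire proof of this lemma is a deferral to prior work (\citet[Lemma 13]{zanette2019tighter}), and your proposal does essentially the same thing while additionally sketching, correctly, the standard pigeonhole argument that underlies it: condition on $\mathcal{F}_{k'-1}$ to rewrite the expectation as $\sum_{s,a} q_h^{\pi_{k'}}(s,a;p)/(n_h^{k'-1}(s,a)\vee 1)$, use the counting hypothesis to replace the denominator by the cumulative occupancy up to the $H\log(SAH/\delta')$ slack, and apply the $\sum_{k} a_k/\sum_{j<k}a_j \lesssim \log$ summation bound, with the slack term contributing the dominant $\tilde O(SAH^2)$. The only nit is that the paper cites \citet{zanette2019tighter} for this $1/n$ version (reserving \citet{efroni2019tight} for the companion $1/\sqrt{n}$ lemma), but this does not affect correctness.
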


\begin{proof}
    We refer to \citet[Lemma 13]{zanette2019tighter} for a proof of the statement. 
\end{proof}

The following lemma provides an on-policy error bound based on the value difference lemma. Together with the preliminaries from \cref{sec:regret-prelim-omit}, it allows us to bound the estimation error.

\begin{restatable}[On-policy errors; Lemma 29, \citet{efroni2020exploration}]{lemma}{lem:onp-error} \label{lem:onp-error}
    Consider an MDP with transition dynamics $p$ and arbitrary estimated transition dynamics $\hat{p}_k$ (for $k\in[K]$, each forming a conditional probability measure). Consider policy iterates $(\pi_k)_{k\in[K]}$ and suppose $\pi_k$ is played in episode $k \in [K]$ and used to update the counters. Let $l_h(s,a)$, $\tilde{l}_{k,h}(s,a)$ be the cost and corresponding optimistic cost with $l=c$ or $l = d_i$ as discussed in \cref{eq:opt-c}. For a policy $\pi$, let $V^{\pi}_h(s;l,p)$, $V^{\pi}_h(s; \tilde{l}_k, \hat{p}_k)$ be the values of $\pi$ according to the true resp. estimated model. Assume that for all $s,a,h,k$, we have 
    \begin{align*}
        n_{h}^{k-1}(s,a) &> \frac{1}{2} \sum_{j<k} q_h^{\pi_j}(s,a;p) - H \log\left( \frac{SAH}{\delta'}\right), \tag{a}\\
        |\tilde{l}_{k,h}(s,a) - l_h(s,a)| &\leq \tilde{O} \left( \frac{1}{\sqrt{n_{h}^{k-1}(s,a) \vee 1}} \right), \tag{b}\\
        |\hat{p}_{k,h}(s' | s,a) - p_h(s' | s,a) | &\leq \tilde{O} \left( \sqrt{\frac{p_h(s'| s,a)L_{\delta}^p}{n_{h}^{k-1}(s,a) \vee 1}} + \frac{L_{\delta}^p}{n_{h}^{k-1}(s,a) \vee 1} \right). \tag{c}
    \end{align*}
    Then we have 
    \begin{align}
        \sum_{k=1}^K | V^{\pi_k}( l,p) - V^{\pi_k}(\tilde{l}_k, \hat{p}_k) | \leq \tilde{O} \left( \sqrt{\mathcal{N}SAH^4 K} + S^2 A H^3 \right).
    \end{align}
\end{restatable}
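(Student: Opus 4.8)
The plan is to apply the value difference lemma (also known as the simulation lemma) to rewrite each per-episode error $V^{\pi_k}(l,p) - V^{\pi_k}(\tilde{l}_k, \hat{p}_k)$ as a sum over steps $h$ of the discrepancies in costs and transitions, weighted by the on-policy occupancy measure $q_h^{\pi_k}(s,a;p)$. Concretely, the difference decomposes into two contributions: a \emph{cost-estimation} term driven by $|\tilde{l}_{k,h}(s,a) - l_h(s,a)|$ and a \emph{transition-estimation} term driven by $|\hat{p}_{k,h}(s'|s,a) - p_h(s'|s,a)|$ multiplied by the downstream value functions $V^{\pi_k}_{h+1}(s'; \tilde{l}_k, \hat{p}_k)$, which are bounded by $H$ since the per-step costs lie in $[0,1]$. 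This reduces the problem to controlling, summed over $k$ and $h$, the expected confidence widths from hypotheses (b) and (c).

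First I would handle the cost term. By (b), each summand is $\tilde{O}(1/\sqrt{n_h^{k-1}(s,a) \vee 1})$, so after weighting by the occupancy and summing over $k,h$, I would pass to the trajectory-based quantity $\sum_{k,h} \E[1/\sqrt{n_h^{k-1}(s_h^k, a_h^k)} \mid \F_{k-1}]$ and invoke \cref{lem:l36-efroni} to bound it by $\tilde{O}(\sqrt{SAH^2 K} + SAH)$; multiplying by the extra $H$ from summing the cost over the horizon yields a contribution of order $\tilde{O}(\sqrt{SAH^4 K} + SAH^2)$, which is dominated by the claimed bound. The transition term is the more delicate one: using (c), the leading piece contributes a factor $\sqrt{p_h(s'|s,a) L_\delta^p / (n_h^{k-1}(s,a)\vee 1)}$, and after summing over the $s'$ in the support (at most $\N$ of them) and applying Cauchy--Schwarz to pull out $\sqrt{\sum_{s'} p_h(s'|s,a)} = 1$ against $\sqrt{\N}$, together with the extra $H$ from the downstream value bound, I would again reduce to the $1/\sqrt{n}$ sum controlled by \cref{lem:l36-efroni}. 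This produces the leading term $\tilde{O}(\sqrt{\N S A H^4 K})$. The lower-order piece of (c), namely $L_\delta^p/(n_h^{k-1}(s,a)\vee 1)$, contributes via the $1/n$ sum bounded by \cref{lem:37efr} as $\tilde{O}(SAH^2)$; again accounting for the $\N$ support size, the horizon factors, and the $H$ from the value bound gives the residual $\tilde{O}(S^2 A H^3)$ term.

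A technical point I would be careful about is the relationship between the expected (occupancy-weighted) confidence widths appearing in the value difference decomposition and the trajectory-realized counts $n_h^{k-1}(s_h^k, a_h^k)$ that the counting lemmas \cref{lem:l36-efroni,lem:37efr} control; assumption (a) is precisely the pigeonhole-type condition ensuring the empirical counts track the cumulative occupancies, which licenses replacing $\sum_{s,a} q_h^{\pi_k}(s,a;p)\, (n_h^{k-1}(s,a)\vee 1)^{-1/2}$ by the conditional expectation over the sampled trajectory. I expect the main obstacle to be this transition term: correctly handling the $\sqrt{p_h(s'|s,a)}$ factor so that the sum over successor states $s'$ yields $\sqrt{\N}$ (rather than a crude $\N$) via Cauchy--Schwarz, while simultaneously bounding the downstream value $V^{\pi_k}_{h+1}(s';\tilde{l}_k,\hat{p}_k)$ by $H$ and keeping track of all horizon factors. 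Assembling the cost contribution, the two transition contributions, and absorbing polylogarithmic factors into $\tilde{O}(\cdot)$ then yields the stated bound $\tilde{O}(\sqrt{\N S A H^4 K} + S^2 A H^3)$.
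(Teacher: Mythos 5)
Your proposal is correct and follows essentially the same route as the paper's proof: the value difference lemma splits the error into a cost term and a transition term, the cost term is handled via hypothesis (b) and \cref{lem:l36-efroni}, and the transition term is handled via hypothesis (c), a Cauchy--Schwarz step over the $\N$-sized support to extract $\sqrt{\N}$, and \cref{lem:l36-efroni,lem:37efr}, with assumption (a) licensing the counting lemmas exactly as you describe. The only (harmless) looseness is the extra factor of $H$ you attach to the cost term --- the horizon sum is already absorbed in \cref{lem:l36-efroni}, so the paper gets $\tilde{O}(\sqrt{SAH^2K}+SAH)$ there rather than your $\tilde{O}(\sqrt{SAH^4K}+SAH^2)$ --- but both are dominated by the transition term, so the final bound is unaffected.
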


\begin{proof}
    From the value difference lemma \citep[Lemma E.15]{dann2017unifying}, we get 
    \begin{align*}
        &\sum_{k=1}^K | V^{\pi_k}(l,p) - V^{\pi_k}(\tilde{l}_{k}, \hat{p}_{k}) | \\
        \leq& \sum_{k=1}^K \sum_{h=1}^H \E[ |l_h(s_h,a_h) - \tilde{l}_{k,h}(s_h,a_h) | \mid s_1, p, \pi_k ] \\
        + & \sum_{k=1}^K\sum_{h=1}^H \E\bigg[\sum_{s'} | p_h(s' | s_h, a_h) - \hat{p}_{k}(s' | s_h, a_h) |  V_{h+1}^{\pi_k} (s'; \tilde{l}_k, \hat{p}_{k}) \mid s_1, p, \pi_k \bigg].
    \end{align*}
    We can bound the first of the two terms as follows. 
    \begin{align*}
        &\sum_{k=1}^K \sum_{h=1}^H \E[ |l_h(s_h,a_h) - \tilde{l}_{k,h}(s_h,a_h) | \mid s_1, p, \pi_k ]\\
        \overset{(b)}{\lesssim}& \sum_{k=1}^K \sum_{h=1}^H \E\bigg[ \frac{1}{\sqrt{n^{k-1}_h(s_h,a_h) \vee 1}} \mid s_1, p, \pi_k \bigg]\\
        =& \sum_{k=1}^K \sum_{h=1}^H \E\bigg[ \frac{1}{\sqrt{n^{k-1}_h(s_h^{k},a_h^{k}) \vee 1}} \mid \mathcal{F}_{k-1} \bigg]\\
        \overset{\text{\cref{lem:l36-efroni}}}{\leq}& \tilde{O} \left( \sqrt{ S A H^2 K } + SAH \right),
    \end{align*}
    where the second to last inequality holds since $\pi_k$ is played in episode $k$, and Lemma 8 applies due to assertion (a).
    For the second term, we first note that $|V_{h+1}^{\pi_k} (s'; \tilde{l}_k, \hat{p}_{k})| \lesssim H$ since $|\tilde{l}_{k,h}(s,a) | \leq l_h(s,a) + \frac{1}{\sqrt{n^{k-1}_h(s,a) \vee 1}}$ by (b). Hence
    \begin{align*}
        &\sum_{k=1}^K \sum_{h=1}^H \E\bigg[\sum_{s'} | p_h(s' | s_h, a_h) - \hat{p}_{k}(s' | s_h, a_h)| V_{h+1}^{\pi_k} (s'; \tilde{l}_k, \hat{p}_{k}) \mid s_1, p, \pi_k \bigg]\\
        \overset{(c)}{\lesssim}& H \sum_{k=1}^K \sum_{h=1}^H \E\bigg[ \frac{1}{\sqrt{n^{k-1}_h(s_h,a_h) \vee 1}} \big(\sum_{s'} \sqrt{p_h(s' | s_h,a_h)}\big) + \frac{S}{n^{k-1}_h(s_h,a_h) \vee 1} \mid s_1, p, \pi_k \bigg]\\
        \overset{\text{Jensen}}{\leq}& H \sum_{k=1}^K \sum_{h=1}^H \E\bigg[ \frac{1}{\sqrt{n^{k-1}_h(s_h,a_h) \vee 1}} \sqrt{\mathcal{N}} \sqrt{\sum_{s'} p_h(s' | s_h,a_h)} + \frac{S}{n^{k-1}_h(s_h,a_h) \vee 1} \mid s_1, p, \pi_k \bigg]\\
        =& H \sum_{k=1}^K \sum_{h=1}^H \E\bigg[ \frac{\sqrt{\mathcal{N}}}{\sqrt{n^{k-1}_h(s_h^k,a_h^k) \vee 1}} + \frac{S}{n^{k-1}_h(s_h^k,a_h^k) \vee 1} \mid \mathcal{F}_{k-1} \bigg]\\
        \lesssim& H\sqrt{\mathcal{N}} \cdot \sqrt{SAH^2K} + H\sqrt{\mathcal{N}} \cdot SAH + H S \cdot SAH^2 \\
        \lesssim& \sqrt{\mathcal{N}SAH^4 K} + S^2 A H^3,
    \end{align*}
    where the second to last relation holds due to \cref{lem:l36-efroni} and \cref{lem:37efr} (which in turn apply due to assertion (a)), and the one before holds since $\pi_k$ is played in episode $k$.
\end{proof}

Next, we prove \cref{lem:onp-error-last}, which is a variation of \cref{lem:onp-error} and will allow us to establish the bound on the pre-training duration in \cref{prop:pretraining}. For this, we first need to establish \cref{lem:sqrt-sum,lem:sum}, which are a consequence of \cref{lem:l36-efroni,lem:37efr}, respectively.

\begin{lemma} \label{lem:sqrt-sum}
    Let $K' \in [K]$. Assume that for all $s$, $a$, $h$, $k \in [K]$ we have 
    \begin{align*}
        n_{h}^{k-1}(s,a) &> \frac{1}{2} \sum_{j<k} q_h^{\pi_j}(s,a;p) - H \log\left( \frac{SAH}{\delta'}\right).
    \end{align*}
    Let $\pi$ be any fixed policy and suppose that in episode $k\in [K]$, policy $\pi_k$ is played, with $\pi_{k'} = \pi$ for all $k' \in [K']$. Then, for all $k \geq K'$, we have 
    \begin{align*}
        \sum_{h=1}^H \E \left[ \frac{1}{\sqrt{n^{k-1}_h(s_h,a_h) \vee 1}} \mid s_1, \pi, p \right] \leq \tilde{O} \left( \sqrt{ S A H^2 } (K')^{-1/2} + SAH (K')^{-1} \right).
    \end{align*}
\end{lemma}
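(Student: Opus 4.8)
The plan is to avoid a direct regime-splitting computation on the state-action occupancy and instead \emph{derive} the single-episode bound from the aggregate bound of \cref{lem:l36-efroni} by exploiting a monotonicity property. For $k' \in [K]$ define the per-episode quantity
\[
\Phi(k') := \sum_{h=1}^H \E\left[\frac{1}{\sqrt{n_h^{k'-1}(s_h,a_h) \vee 1}} \;\middle|\; s_1, \pi, p\right] = \sum_{h=1}^H \sum_{s,a} q_h^{\pi}(s,a;p)\,\frac{1}{\sqrt{n_h^{k'-1}(s,a) \vee 1}},
\]
where the second equality is just the definition of the occupancy measure (\cref{sec:occ-notation}); our goal is to bound $\Phi(k)$ for $k \geq K'$.

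First I would observe that, since $\pi_{k'} = \pi$ for all $k' \in [K']$ and the law of the trajectory under the fixed pair $(\pi,p)$ is identical across episodes, the summand of \cref{lem:l36-efroni} in any episode $k' \le K'$ coincides with $\Phi(k')$ (the counts $n_h^{k'-1}$ are $\F_{k'-1}$-measurable, and the $\cdot\vee 1$ conventions agree, since $\tfrac{1}{\sqrt{n\vee 1}} \le \tfrac{1}{\sqrt n}$). Hence the hypothesis of the present lemma is exactly the hypothesis of \cref{lem:l36-efroni}, and invoking it gives
\[
\sum_{k'=1}^{K'} \Phi(k') \;\leq\; \tilde{O}\!\left(\sqrt{SAH^2 K'} + SAH\right).
\]

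Second, and this is the crux, I would argue that $\Phi$ is non-increasing in $k'$ along the realized path: the counts $n_h^{k'-1}(s,a)$ are non-decreasing in $k'$, so each term $q_h^{\pi}(s,a;p)/\sqrt{n_h^{k'-1}(s,a)\vee 1}$ is non-increasing, and crucially the weights $q_h^{\pi}(s,a;p)$ are $k'$-independent precisely because the policy is frozen at $\pi$. Thus $\Phi(K')$ is the smallest of $\Phi(1),\dots,\Phi(K')$, so it is bounded by their average:
\[
\Phi(K') \;\leq\; \frac{1}{K'} \sum_{k'=1}^{K'} \Phi(k') \;\leq\; \frac{1}{K'}\,\tilde{O}\!\left(\sqrt{SAH^2 K'} + SAH\right) = \tilde{O}\!\left(\sqrt{SAH^2}\,(K')^{-1/2} + SAH\,(K')^{-1}\right).
\]
Finally, for any $k \geq K'$ one has $n_h^{k-1}(s,a) \geq n_h^{K'-1}(s,a)$, hence $\Phi(k) \leq \Phi(K')$, which yields the claim.

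The main obstacle here is conceptual rather than computational: recognizing that the per-episode inverse-square-root-count is monotone and can therefore be extracted from the aggregate bound of \cref{lem:l36-efroni} by a simple averaging argument. Once this is seen, the only care required is matching the $\cdot\vee 1$ convention between the two lemmas and verifying that freezing $\pi_{k'}=\pi$ makes the occupancy weights $k'$-independent. A more pedestrian alternative would split the pairs at each $h$ into those with $q_h^{\pi}(s,a;p) \gtrsim H\log(SAH/\delta')/K'$ and the rest, bounding the former via the count lower bound together with Cauchy--Schwarz and the latter crudely; this also works but produces a looser lower-order term, so I would prefer the averaging argument above.
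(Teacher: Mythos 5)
Your proposal is correct and follows essentially the same route as the paper's own proof: both exploit the monotonicity of the counters to bound the term at episode $k \geq K'$ by the average of the first $K'$ per-episode terms, and then invoke \cref{lem:l36-efroni} on that sum (using that the frozen policy $\pi$ makes the per-episode conditional expectations match). Your $\Phi$-notation and the explicit occupancy-measure rewriting are just a cleaner packaging of the identical argument.
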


\begin{proof}
    Note that for any realization of all random variables up to and including episode $K$, we have (for $k \in \curly{0, \dots, K-1}$) 
    \begin{align*}
        n^{k+1}_h(s,a) \geq n^{k}_h(s,a),
    \end{align*}
    since the counters can only increase across episodes, and thus, for all $k' \leq K' < k$, we have
    \begin{align*}
        \frac{1}{\sqrt{n^{k-1}_h(s_h,a_h) \vee 1}} \leq \frac{1}{\sqrt{n^{k'-1}_h(s_h,a_h) \vee 1}}.
    \end{align*}
    For $k \geq K'$, we thus find 
    \begin{align*}
        \sum_{h=1}^{H} \E \bigg[ \frac{1}{\sqrt{n^{k-1}_h(s_h^{k},a_h^{k})}} \mid s_1, \pi, p  \bigg] \leq & \frac{1}{K'} \sum_{k'=1}^{K'} \sum_{h=1}^{H} \E \bigg[ \frac{1}{\sqrt{n^{k'-1}_h(s_h,a_h)}} \mid s_1, \pi, p  \bigg]\\
        =& \frac{1}{K'} \sum_{k'=1}^{K'} \sum_{h=1}^{H} \E \bigg[ \frac{1}{\sqrt{n^{k'-1}_h(s_h^{k'},a_h^{k'})}} \mid \mathcal{F}_{k'-1} \bigg]\\
        \leq& \frac{1}{K'} \tilde{O}( \sqrt{SAH^2 K'} + SAH ),
    \end{align*}
    where $\mathcal{F}_{k'-1}$ is the $\sigma$-algebra induced by all random variables up to and including episode $k'-1$ and where the first relation holds by monotonicity of the counters, the second relation holds since $\pi$ is played in the $k'$-th episode. The final relation holds by \cref{lem:l36-efroni}.
\end{proof}

\begin{lemma} \label{lem:sum}
    Let $K' \in [K]$. Assume that for all $s$, $a$, $h$, $k \in [K]$ we have 
    \begin{align*}
        n_{h}^{k-1}(s,a) > \frac{1}{2} \sum_{j<k} q_h^{\pi_j}(s,a;p) - H \log\left( \frac{SAH}{\delta'}\right).
    \end{align*}
    Let $\pi$ be any fixed policy and suppose that in episode $k\in [K]$, policy $\pi_k$ is played, with $\pi_{k'} = \pi$ for all $k' \in [K']$. Then, for all $k \geq K'$, we have 
    \begin{align*}
        \sum_{h=1}^H \E \left[ \frac{1}{n^{k-1}_h(s_h,a_h) \vee 1} \mid s_1, \pi, p \right] \leq \tilde{O} \left( SAH^2 (K')^{-1}\right).
    \end{align*}
\end{lemma}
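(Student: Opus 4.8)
The plan is to mirror the proof of \cref{lem:sqrt-sum} almost verbatim, replacing the square-root summation bound \cref{lem:l36-efroni} with its reciprocal counterpart \cref{lem:37efr}. The counter lower-bound hypothesis assumed here is exactly the hypothesis of both of those lemmas, so they are available term by term.

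First I would exploit the monotonicity of the visitation counters: since $n_h^{k+1}(s,a) \geq n_h^{k}(s,a)$ for every $s,a,h$, it follows that for any $k' \leq K' \leq k$ we have $\frac{1}{n_h^{k-1}(s_h,a_h) \vee 1} \leq \frac{1}{n_h^{k'-1}(s_h,a_h) \vee 1}$. Averaging this upper bound over the first $K'$ episodes gives, for every $k \geq K'$,
\[
\sum_{h=1}^H \E\!\left[\frac{1}{n_h^{k-1}(s_h,a_h) \vee 1} \,\Big|\, s_1, \pi, p\right] \leq \frac{1}{K'}\sum_{k'=1}^{K'}\sum_{h=1}^H \E\!\left[\frac{1}{n_h^{k'-1}(s_h,a_h)\vee 1} \,\Big|\, s_1, \pi, p\right].
\]
Next I would use the hypothesis that $\pi_{k'} = \pi$ for all $k' \in [K']$: because the fixed policy $\pi$ is executed in each of the first $K'$ episodes, the expectation under the rollout $(\pi,p)$ coincides with the expectation of the realized episode-$k'$ quantity conditioned on the filtration $\mathcal{F}_{k'-1}$, namely
\[
\E\!\left[\frac{1}{n_h^{k'-1}(s_h,a_h)\vee 1} \,\Big|\, s_1,\pi, p\right] = \E\!\left[\frac{1}{n_h^{k'-1}(s_h^{k'},a_h^{k'})\vee 1} \,\Big|\, \mathcal{F}_{k'-1}\right].
\]
This is precisely the rewriting that makes \cref{lem:37efr} applicable, and invoking it yields $\sum_{k'=1}^{K'}\sum_{h=1}^H \E[\,\cdot \mid \mathcal{F}_{k'-1}] \leq \tilde{O}(SAH^2)$. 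Dividing through by $K'$ then produces the claimed bound $\tilde{O}(SAH^2 (K')^{-1})$.

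Since every ingredient is already established, I do not expect a genuine obstacle. The only step requiring a little care is the conditioning argument — verifying that one may replace $(s_h,a_h)$ sampled under $(\pi,p)$ by the realized pair $(s_h^{k'},a_h^{k'})$ under $\mathcal{F}_{k'-1}$ — and this is valid exactly because the same policy $\pi$ is played in each of the first $K'$ episodes, which is what licenses the term-by-term application of \cref{lem:37efr}.
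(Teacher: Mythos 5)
Your proposal is correct and follows essentially the same route as the paper's own proof: monotonicity of the counters to reduce to an average over the first $K'$ episodes, the identification of the rollout expectation under $(\pi,p)$ with the conditional expectation given $\mathcal{F}_{k'-1}$ (valid because $\pi$ is played in each of those episodes), and then an application of \cref{lem:37efr} followed by division by $K'$. This is exactly the paper's adaptation of the argument for \cref{lem:sqrt-sum} with \cref{lem:l36-efroni} replaced by \cref{lem:37efr}.
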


\begin{proof}
    Note that for any realization of all random variables up to and including episode $K$, we have (for $k \in \curly{0, \dots, K-1}$) 
    \begin{align*}
        n^{k+1}_h(s,a) \geq n^{k}_h(s,a),
    \end{align*}
    since the counters can only increase across episodes, and thus, for all $k' \leq K' < k$, we have
    \begin{align*}
        \frac{1}{n^{k-1}_h(s_h,a_h) \vee 1} \leq \frac{1}{n^{k'-1}_h(s_h,a_h) \vee 1}.
    \end{align*}
    For $k \geq K'$, we thus find 
    \begin{align*}
        \sum_{h=1}^{H} \E \bigg[ \frac{1}{n^{k-1}_h(s_h,a_h)} \mid s_1, \pi, p  \bigg] \leq & \frac{1}{K'} \sum_{k'=1}^{K'} \sum_{h=1}^{H} \E \bigg[ \frac{1}{n^{k'-1}_h(s_h,a_h)} \mid s_1, \pi, p  \bigg]\\
        =& \frac{1}{K'} \sum_{k'=1}^{K'} \sum_{h=1}^{H} \E \bigg[ \frac{1}{n^{k'-1}_h(s_h^{k'},a_h^{k'})} \mid \mathcal{F}_{k'-1} \bigg]\\
        \leq& \frac{1}{K'} \tilde{O}( SAH^2 ),
    \end{align*}
    where $\mathcal{F}_{k'-1}$ is the $\sigma$-algebra induced by all random variables up to and including episode $k'-1$ and where the first relation holds by monotonicity of the counters, the second relation holds since $\pi$ is played in the $k'$-th episode. The final relation holds by \cref{lem:37efr}.
\end{proof}

We are now ready to prove the needed variation of \cref{lem:onp-error}.

\begin{lemma} \label{lem:onp-error-last} (Last-iterate fixed policy errors)
    Consider an MDP with transition dynamics $p$ and arbitrary estimated transition dynamics $\hat{p}_k$ (for $k\in[K]$, each forming a probability measure). Consider policy iterates $(\pi_k)_{k\in [K]}$ and suppose $\pi_k$ is played in episode $k \in [K]$ and used to update the counters. Let $l_h(s,a)$, $\tilde{l}_{k,h}(s,a)$ be cost and corresponding optimistic cost with $l=c$ or $l = d_i$ as discussed in \cref{eq:opt-c}. Consider $K' \in [K]$ and a policy $\pi$. Suppose that for all $k' \in [K']$, $\pi_{k'} = \pi$. Let $V^{\pi}_h(s;l,p)$, $V^{\pi}_h(s; \tilde{l}_k, \hat{p}_k)$ be the values of $\pi$ according to the true and estimated model, respectively. Assume that for all $s,a,h,k$ we have 
    \begin{align*}
        n_{h}^{k-1}(s,a) &> \frac{1}{2} \sum_{j<k} q_h^{\pi_j}(s,a;p) - H \log\left( \frac{SAH}{\delta'}\right), \tag{a}\\
        |\tilde{l}_{k,h}(s,a) - l_h(s,a)| &\leq \tilde{O} \left( \frac{1}{\sqrt{n_{h}^{k-1}(s,a) \vee 1}} \right), \tag{b}\\
        |\hat{p}_{k,h}(s' | s,a) - p_h(s' | s,a) | &\leq \tilde{O} \left( \sqrt{\frac{p_h(s'| s,a)L_{\delta}^p}{n_{h}^{k-1}(s,a) \vee 1}} + \frac{L_{\delta}^p}{n_{h}^{k-1}(s,a) \vee 1} \right). \tag{c}
    \end{align*}
    Then, for all $k \geq K'$, we have 
    \begin{align*}
        | V^{\pi}( l,p) - V^{\pi}( \tilde{l}_{k}, \hat{p}_{k}) | \leq \tilde{O} \left( \sqrt{\mathcal{N}SAH^4} (K')^{-1/2} + S^2 A H^3 (K')^{-1} \right).
    \end{align*}
\end{lemma}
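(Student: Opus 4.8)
The plan is to adapt the proof of \cref{lem:onp-error} but, instead of summing the per-episode errors over all $K$ episodes, to extract a \emph{single-episode} bound valid for any fixed $k \geq K'$. The starting point is again the value difference lemma \citep[Lemma E.15]{dann2017unifying}, which decomposes the quantity of interest as
\begin{align*}
    | V^{\pi}(l,p) - V^{\pi}(\tilde{l}_{k}, \hat{p}_{k}) | \leq& \sum_{h=1}^H \E[ |l_h(s_h,a_h) - \tilde{l}_{k,h}(s_h,a_h) | \mid s_1, p, \pi ] \\
    +& \sum_{h=1}^H \E\bigg[\sum_{s'} | p_h(s' | s_h, a_h) - \hat{p}_{k,h}(s' | s_h, a_h) |  V_{h+1}^{\pi} (s'; \tilde{l}_k, \hat{p}_{k}) \mid s_1, p, \pi \bigg].
\end{align*}
The crucial structural difference is that here the expectation is taken under the \emph{fixed} policy $\pi$ (which the agent plays in all episodes $k' \in [K']$), and we bound the error for one particular late episode $k \geq K'$ rather than averaging the signed errors over the run.

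For the first term, I would apply assumption (b) to replace the cost discrepancy by $\tilde{O}(1/\sqrt{n_h^{k-1}(s_h,a_h) \vee 1})$, and then invoke \cref{lem:sqrt-sum} (with this fixed $\pi$ and the same $K'$) to obtain a bound of $\tilde{O}(\sqrt{SAH^2}(K')^{-1/2} + SAH (K')^{-1})$. This is exactly the role that \cref{lem:l36-efroni} played in the proof of \cref{lem:onp-error}, except that the monotonicity-of-counters argument baked into \cref{lem:sqrt-sum} converts the sum over $K'$ pre-training episodes into the inverse-$K'$ decay we want. For the second term, I would use the uniform bound $|V_{h+1}^{\pi}(s';\tilde{l}_k,\hat{p}_k)| \lesssim H$ (which follows from (b) as in the earlier proof), then apply the transition confidence bound (c) together with a Jensen step over the $\mathcal{N}$ possible successor states to split it into a $\sqrt{\mathcal{N}}/\sqrt{n_h^{k-1} \vee 1}$ piece and an $S/(n_h^{k-1} \vee 1)$ piece. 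Bounding the first piece by \cref{lem:sqrt-sum} and the second by \cref{lem:sum} yields the claimed $\tilde{O}(\sqrt{\mathcal{N}SAH^4}(K')^{-1/2} + S^2 A H^3 (K')^{-1})$ after collecting the dominant $H\sqrt{\mathcal{N}}$ and $HS$ factors.

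The main obstacle, and the reason this needs a separate lemma rather than a direct citation of \cref{lem:onp-error}, is establishing that the per-episode inverse-count sums decay like $(K')^{-1/2}$ and $(K')^{-1}$ for \emph{every} $k \geq K'$ uniformly; this is precisely what \cref{lem:sqrt-sum,lem:sum} provide, via the observation that the counters $n_h^{k-1}(s,a)$ are monotone nondecreasing in $k$, so the late-episode count is at least as large as any count during pre-training, and one can bound a single late term by the average of the $K'$ pre-training terms. Once that monotonicity-averaging trick is in place, the calculation mirrors \cref{lem:onp-error} line for line with the global sum over $K$ episodes replaced by the appropriate $1/K'$ factors, so the remaining work is purely a matter of bookkeeping the powers of $H$, $S$, $A$, and $\mathcal{N}$.
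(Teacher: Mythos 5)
Your proposal matches the paper's proof essentially step for step: both start from the value difference lemma, bound the cost term via assumption (b) and \cref{lem:sqrt-sum}, and bound the transition term via the uniform $H$-bound on the optimistic value, assumption (c), a Jensen step over the $\mathcal{N}$ successors, and \cref{lem:sqrt-sum,lem:sum}. You also correctly identify the monotonicity-of-counters averaging argument inside those two lemmas as the key ingredient that converts the pre-training sums into the $(K')^{-1/2}$ and $(K')^{-1}$ decay rates, so nothing is missing.
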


In other words, if the agent plays $\pi$ for the first $K'$ episodes, then for all future episodes, the value differences for $\pi$ are upper bounded by the value on the RHS.

\begin{proof}
    From the value difference lemma \citep[Lemma E.15]{dann2017unifying} we get 
    \begin{align*}
        &| V^{\pi}( l,p) - V^{\pi}( \tilde{l}_{k}, \hat{p}_{k}) | \\
        \leq& \sum_{h=1}^H \E[ |l_h(s_h,a_h) - \tilde{l}_h(s_h,a_h) | \mid s_1, p, \pi ] \\
        + &\sum_{h=1}^H \E\bigg[\sum_{s'} | p_h(s' | s_h, a_h) - \hat{p}_{k}(s' | s_h, a_h) | V_{h+1}^{\pi} (s'; \tilde{l}_k, \hat{p}_{k}) \mid s_1, p, \pi \bigg].
    \end{align*}
    We can bound the first of the two terms as follows. 
    \begin{align*}
        &\sum_{h=1}^H \E[ |l_h(s_h,a_h) - \tilde{l}_h(s_h,a_h) | \mid s_1, p, \pi ]\\
        \overset{(b)}{\lesssim}& \sum_{h=1}^H \E\bigg[ \frac{1}{\sqrt{n^{k-1}_h(s_h,a_h) \vee 1}} \mid s_1, p, \pi \bigg]\\
        \overset{\text{\cref{lem:sqrt-sum}}}{\leq}& \tilde{O} \left( \sqrt{ S A H^2 } (K')^{-1/2} + SAH (K')^{-1} \right),
    \end{align*}
    where \cref{lem:sqrt-sum} applies due to assertion (a). For the second term, we first note that $|V_{h+1}^{\pi} (s'; \tilde{l}_k, \hat{p}_{k})| \lesssim H$ since $|\tilde{l}_{k,h}(s,a) | \leq l_h(s,a) + \frac{1}{\sqrt{n^{k-1}_h(s,a) \vee 1}}$ by (b). Hence
    \begin{align*}
        &\sum_{h=1}^H \E\bigg[\sum_{s'} | p_h(s | s_h, a_h) - \hat{p}_{k}(s | s_h, a_h) | V_{h+1}^{\pi} (s'; \tilde{l}_k, \hat{p}_{k}) \mid s_1, p, \pi \bigg]\\
        \overset{(c)}{\lesssim}& H \sum_{h=1}^H \E\bigg[ \frac{1}{\sqrt{n^{k-1}_h(s_h,a_h) \vee 1}} \big(\sum_{s'} \sqrt{p_h(s' | s_h,a_h)}\big) + \frac{S}{n^{k-1}_h(s_h,a_h) \vee 1} \mid s_1, p, \pi \bigg]\\
        \overset{\text{Jensen}}{\leq}& H \sum_{h=1}^H \E\bigg[ \frac{1}{\sqrt{n^{k-1}_h(s_h,a_h) \vee 1}} \sqrt{\mathcal{N}} \sqrt{\sum_{s'} p_h(s' | s_h,a_h)} + \frac{S}{n^{k-1}_h(s_h,a_h) \vee 1} \mid s_1, p, \pi \bigg]\\
        \lesssim& H\sqrt{\mathcal{N}} \cdot \sqrt{SAH^2} (K')^{-1/2} + H\sqrt{\mathcal{N}} \cdot SAH (K')^{-1} + H S \cdot SAH^2 (K')^{-1}\\
        \lesssim& \sqrt{\mathcal{N}SAH^4} (K')^{-1/2} + S^2 A H^3 (K')^{-1},
    \end{align*}
    where the second to last relation holds due to \cref{lem:sqrt-sum} and \cref{lem:sum} (which in turn apply due to assertion (a)).
\end{proof}


\section{Omitted Proofs for \cref{sec:algos}} 


\subsection{Pre-Training Phase} \label{sec:pretrain-omit}

We now establish which constant duration $K'$ of playing $\pibar$ is sufficient to guarantee the desired strict feasibility of \cref{eq:opt-KKT} (with the choice of $K'$ favoring readability over tightness). Note that the conclusion holds conditioned on the success event $G$.

\proppretraining*

\begin{proof}
    Condition on the success event $G$, which by \cref{lem:g-whp} happens with probability $\geq 1- \delta$. Then by construction of $G$, the assumptions (a) and (b) in \cref{lem:onp-error-last} are met for $l=d_i$ for all constraints $i$. By \cref{lem:p-bound} also the assumption (c) in \cref{lem:onp-error-last} is met with $\hat{p}_k := \pt_{k}$ from the $k$-th iteration of \AlgName. 
    Thus, the conclusion from \cref{lem:onp-error-last} holds for $\pi=\pibar$ and $l_h(s,a)= d_{i,h}(s,a)$ (for all $i \in [I]$) and we have for all $i\in [I]$ and $k \geq K'$
    \begin{align*}
        | V^{\pi}( d_i,p) - V^{\pi}( \dt_{i,k}, \pt_{k}) | \leq b (K')^{-1/2} + a (K')^{-1}, 
    \end{align*}
    where
    \begin{align*}
        b = \tilde{O} \left( \sqrt{\mathcal{N}SAH^4} \right), ~~~~~~~~~~~ a = \tilde{O} \left( S^2 A H^3 \right).
    \end{align*}
    Thus, if $K' \geq \max \left \{  \frac{2S^2 A H^3}{(1-\nu)\gamma}, \frac{4\mathcal{N}SAH^4}{(1-\nu)^2\gamma^2} \right \}$, we have 
    \begin{align*}
        | V^{\pibar}( d_i,p) - V^{\pibar}( \dt_{i,k}, \pt_{k}) | \leq \tilde{O}((1-\nu)\gamma).
    \end{align*}
    By further correcting for the missing polylogarithmic terms, this shows that there exists $K' = \tilde{O} \left( \kPrime \right)$ such that for all $i \in [I]$ and $k \geq K'$, we have
    \begin{align*}
        | V^{\pibar}( d_i,p) - V^{\pibar}( \dt_{i,k}, \tilde{p}_{k}) | \leq (1-\nu)\gamma.
   \end{align*}
   At the same time, Assumption \ref{ass:mild-slater} guarantees 
   \begin{align*}
        V^{\pibar}(d_i, p) \leq \alpha_i - \gamma.
   \end{align*}
   Hence
   \begin{align*}
        V^{\pibar}(\dt_{i,k}, \pt_{k}) =& V^{\pibar}(\dt_{i,k}, \pt_{k}) - V^{\pibar}(d_i, p) + V^{\pibar}(d_i, p)\\
        \leq& (1-\nu)\gamma + \alpha_i - \gamma\\
        =& \alpha_i - \nu \gamma,
   \end{align*}
   which proves the claim.
\end{proof}

\begin{remark} \label{rmk:calc-K}
    Notice that we chose $K'$ such that both the terms $b (K')^{-1/2}$ and $a (K')^{-1} $ are less than $1/2$, which is sufficient. For a tighter but less concise bound, we can alternatively solve the quadratic equation
    \begin{align}
        b (K')^{-1/2} + a (K')^{-1} \leq (1-\nu)\gamma
    \end{align}
    for $K'$ to obtain the smallest possible pre-training time, up to polylogarithmic factors. Furthermore, since $a$ and $b$ are only specified in $\tilde{O}$-notation, we only have an asymptotic bound up to polylogarithmic factors on how large to choose $K'$. An exact bound could be established by carrying along all factors dropped in the $\tilde{O}$-notation, but for brevity, we only specify how large to choose $K'$ asymptotically. In practice, one can take the bound above and choose $K'$ slightly larger.
\end{remark}


\subsection{Iteration Complexity of the Inner Method} \label{sec:iter-compl}

Recall \cref{lemma:inner-routine}:

\lemmainnerroutine*

In \cref{cor:set-T}, we prove \cref{lemma:inner-routine} by showing that the method proposed in \cref{sec:inner} achieves the desired iteration complexity for solving \cref{eq:primal-step}. As seen in \cref{sec:fw-inner}, every iteration of the Frank-Wolfe scheme \textsc{InnerOpt-FW} (\cref{algo:step1}) above can be performed efficiently via DP, i.e., in (low-degree) polynomial time in the parameters defining the CMDP. To analyze the number of such Frank-Wolfe iterations needed to reach an $\epsilon_k$-close solution, we recall the following result due to \citet{jaggi2013revisiting}. 

\begin{restatable}[]{theorem}{thmjaggi} \label{thm:jaggi}
    The iterates $(z_k)_{k\geq 0}$ of the Frank-Wolfe algorithm with an exact LMO applied to a convex objective $f \colon \mathcal{D} \to \R$ (with closed, convex and bounded domain $\mathcal{D} \subset \R^d$) satisfy
    \begin{align}
        f(z_k) - f(z^*) \leq \frac{2 C_f}{k+2},
    \end{align}
    where $z^*$ is a minimizer of $f$ over $\mathcal{D}$ and $C_f$ is the curvature constant of $f$.\\
    Moreover, if $\nabla f$ is $L$-Lipschitz continuous w.r.t. an arbitrary norm $\| \cdot \|_{c}$ on $\R^d$, then
    \begin{align}
        C_f \leq \text{diam}_{\| \cdot \|_{c}}(\mathcal{D})^2 L.
    \end{align}
\end{restatable}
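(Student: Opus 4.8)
The plan is to reproduce the classical Frank-Wolfe argument of \citet{jaggi2013revisiting}, organized around the \emph{curvature constant}
$$C_f := \sup_{\substack{x,s\in\mathcal{D},\ \gamma\in(0,1]\\ y=x+\gamma(s-x)}} \frac{2}{\gamma^2}\Big( f(y) - f(x) - \langle \nabla f(x),\, y-x\rangle \Big),$$
which measures the worst-case gap between $f$ and its linearization along chords of $\mathcal{D}$. First I would derive the one-step progress inequality. Let $g_t$ denote the exact LMO output at $z_t$, so that $z_{t+1} = z_t + \gamma_t(g_t - z_t)$ with $\gamma_t = 2/(t+2)$; applying the definition of $C_f$ with $x=z_t$, $s=g_t$, $\gamma=\gamma_t$ yields $f(z_{t+1}) \le f(z_t) + \gamma_t\langle \nabla f(z_t),\, g_t - z_t\rangle + \tfrac{1}{2}\gamma_t^2 C_f$. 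Since $g_t$ minimizes the linear form $\langle \nabla f(z_t),\,\cdot\,\rangle$ over $\mathcal{D}$ and $z^\ast\in\mathcal{D}$, and since $f$ is convex, $\langle \nabla f(z_t),\, g_t - z_t\rangle \le \langle \nabla f(z_t),\, z^\ast - z_t\rangle \le f(z^\ast) - f(z_t)$. Writing $h_t := f(z_t) - f(z^\ast) \ge 0$, these combine into the recursion $h_{t+1} \le (1-\gamma_t)h_t + \tfrac{1}{2}\gamma_t^2 C_f$.

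Next I would prove $h_t \le 2C_f/(t+2)$ by induction on $t$. The first step uses $\gamma_0 = 1$, which annihilates the $h_0$ term and gives $h_1 \le \tfrac12 C_f \le \tfrac{2}{3}C_f$, matching the claimed bound at $t=1$ and seeding the induction. Assuming $h_t \le 2C_f/(t+2)$, substituting into the recursion with $\gamma_t = 2/(t+2)$ gives $h_{t+1} \le 2C_f(t+1)/(t+2)^2$, and the elementary inequality $(t+1)(t+3) \le (t+2)^2$ upgrades this to $h_{t+1} \le 2C_f/(t+3)$, closing the induction.

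For the second claim, I would invoke the standard quadratic upper bound implied by an $L$-Lipschitz gradient: for $x,y\in\mathcal{D}$, $f(y) - f(x) - \langle \nabla f(x),\, y-x\rangle \le \tfrac{L}{2}\|y-x\|_c^2$, which follows by integrating $\langle \nabla f(x+t(y-x)) - \nabla f(x),\, y-x\rangle$ over $t\in[0,1]$ and pairing the Lipschitz bound on the gradient difference with $\|y-x\|_c$ through the dual norm. Plugging $y - x = \gamma(s-x)$ into the curvature definition, the prefactor $2/\gamma^2$ cancels the $\tfrac{L}{2}\gamma^2$, leaving $L\|s-x\|_c^2 \le L\,\mathrm{diam}_{\|\cdot\|_c}(\mathcal{D})^2$; taking the supremum over admissible $x,s,\gamma$ yields $C_f \le \mathrm{diam}_{\|\cdot\|_c}(\mathcal{D})^2 L$.

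The conceptual work is almost entirely in setting up the right quantities: once the curvature constant is defined and the duality-gap estimate $\langle \nabla f(z_t),\, g_t - z_t\rangle \le f(z^\ast) - f(z_t)$ is in hand, the remaining steps are routine. The main point to handle carefully is the induction bookkeeping — choosing $\gamma_0 = 1$ so that the recursion is seeded correctly, and verifying the telescoping inequality $(t+1)(t+3)\le(t+2)^2$ at each step — together with the cancellation of the $\gamma$-powers when bounding $C_f$ by the diameter.
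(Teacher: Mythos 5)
The paper does not prove this theorem; it is recalled verbatim from \citet{jaggi2013revisiting} and used as a black box, so there is no in-paper proof to compare against. Your argument is the standard one from Jaggi's paper — the curvature-constant one-step descent inequality, the duality-gap bound $\langle \nabla f(z_t), g_t - z_t\rangle \le f(z^*) - f(z_t)$, the induction via $(t+1)(t+3)\le(t+2)^2$, and the smoothness-to-diameter bound on $C_f$ — and it is correct. The only caveat is cosmetic: your induction (correctly) establishes the bound for $k\ge 1$, seeded by the $\gamma_0=1$ step, whereas the theorem as stated quantifies over $k\ge 0$; at $k=0$ the claim $f(z_0)-f(z^*)\le C_f$ need not hold in general, but this is an imprecision in the cited statement rather than a gap in your proof, and it is immaterial to how the result is used in the paper.
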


In our case, this yields the following bound on the iteration complexity of the inner optimization procedure. 

\begin{corollary} \label{cor:set-T}
    After $T = \frac{2 \eta_k I S^2 A H }{\epsilon_k} $ Frank-Wolfe steps, \textsc{InnerOpt-FW} returns an $\epsilon_k$-close solution to the inner problem (\cref{eq:primal-step}).
\end{corollary}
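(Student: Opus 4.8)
The plan is to recognize \textsc{InnerOpt-FW} (\cref{algo:step1}) as exactly the Frank--Wolfe method applied to the convex program $\min_{z\in Z} f(z)$ of \cref{eq:convex-fw}, and then invoke the standard rate from \cref{thm:jaggi}. Since the linear minimization oracle is solved \emph{exactly} by the DP recursion of \cref{eq:bw-ind}, the generated points are genuine Frank--Wolfe iterates and no per-step error is incurred, so \cref{thm:jaggi} gives $f(z^T)-f(z^*)\le \tfrac{2C_f}{T+2}$ with $C_f$ the curvature constant of $f$ over the compact convex polytope $Z$. Hence the whole task reduces to bounding $C_f$: once we show $C_f = O(\eta_k I S^2 A H)$, the requirement $\tfrac{2C_f}{T+2}\le \epsilon_k$ is met as soon as $T\ge 2\eta_k I S^2 A H/\epsilon_k$, which is the stated iteration count.

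By the second part of \cref{thm:jaggi} it suffices to bound $C_f\le \operatorname{diam}(Z)^2\,L$, where $L$ is a Lipschitz constant of $\nabla f$ in the Euclidean norm. For the diameter, I would exploit that every $z\in Z$ is an (extended) occupancy measure: for each layer $h$ one has $\sum_{s,a,s'} z_h(s,a,s')=1$ with all entries in $[0,1]$, so $\|z\|^2=\sum_h\sum_{s,a,s'} z_h(s,a,s')^2\le \sum_h\sum_{s,a,s'} z_h(s,a,s')=H$, giving $\operatorname{diam}(Z)\le 2\sqrt{H}$. For the smoothness, factor $f(z)=\phi(Mz)$, where $M\colon\R^{S^2AH}\to\R^{SAH}$ is the linear map $q=\sum_{s'} z(\cdot,\cdot,s',\cdot)$ and $\phi(q)=\ct_k^T q+\tfrac{1}{2\eta_k}\|[\lambda_k+\eta_k(\Dt_k q-\alpha)]_+\|^2$. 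The rows of $M$ have disjoint supports of size $S$, so $\|M\|_{\mathrm{op}}^2=S$. From \cref{eq:nabla}, $\nabla\phi(q)=\ct_k+\Dt_k^T[\lambda_k+\eta_k(\Dt_k q-\alpha)]_+$; since $[\cdot]_+$ is $1$-Lipschitz and composed with the affine maps $\Dt_k$, the gradient $\nabla\phi$ is Lipschitz with constant $\eta_k\|\Dt_k\|_{\mathrm{op}}^2$, whence $\nabla f=M^T\nabla\phi(M\,\cdot)$ is Lipschitz with $L\le \|M\|_{\mathrm{op}}^2\,\eta_k\|\Dt_k\|_{\mathrm{op}}^2=S\,\eta_k\|\Dt_k\|_{\mathrm{op}}^2$. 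Finally I would bound $\|\Dt_k\|_{\mathrm{op}}^2\le \|\Dt_k\|_F^2=\sum_{i,s,a,h}\dt_{i,k,h}(s,a)^2$ using that the optimistic constraint costs are bounded, and combine with $\operatorname{diam}(Z)^2\le 4H$ to read off the claimed count after substitution into \cref{thm:jaggi}.

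The main obstacle is the smoothness estimate together with the bookkeeping of the factors. Because $f$ is only piecewise quadratic (owing to $[\cdot]_+$), I would avoid invoking a Hessian and instead bound the Lipschitz constant of $\nabla f$ directly through the $1$-Lipschitzness of the projection composed with $M$ and $\Dt_k$, as above. The delicate part is tracking the two sources of $S$ — one from $\|M\|_{\mathrm{op}}^2=S$ (the broadcast structure of the state--action--state occupancy measure, which is why the problem is posed in the variable $z$ rather than $q$) and one from $\|\Dt_k\|_{\mathrm{op}}^2$ — so that the product lands on the $S^2$ of the stated complexity, while distributing the powers of $H$ between $\|\Dt_k\|_{\mathrm{op}}^2$ and $\operatorname{diam}(Z)^2$. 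Using the tighter Euclidean diameter $\sqrt{H}$ (rather than the crude $\ell_1$ bound $\|z\|_1=H$) and the boundedness of $\dt_{i,k}$ is precisely what keeps the stated dependence $\eta_k I S^2 A H$ from blowing up by further factors of $H$.
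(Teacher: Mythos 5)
Your overall strategy is the same as the paper's: observe that the DP in \cref{eq:bw-ind} is an \emph{exact} LMO, so \textsc{InnerOpt-FW} produces genuine Frank--Wolfe iterates, and then bound the curvature constant via $C_f \le \operatorname{diam}_{\|\cdot\|}(Z)^2 L$ from \cref{thm:jaggi}. The exactness of the LMO and the identification of the algorithm with Frank--Wolfe on \cref{eq:convex-fw} are handled correctly.

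However, your instantiation in the Euclidean norm does not deliver the stated iteration count: it loses a factor of $H$. Multiply out your own three ingredients: $\operatorname{diam}(Z)^2 \le 4H$, $\|M\|_{\mathrm{op}}^2 = S$, and $\|\Dt_k\|_{\mathrm{op}}^2 \le \|\Dt_k\|_F^2 \le I\,SAH$ (which is tight up to constants, e.g.\ for the all-ones cost matrix, where $\Dt_k$ is rank one with squared singular value $I\,SAH$). This gives $C_f \le 4H \cdot S \cdot \eta_k \cdot I SAH = 4\,\eta_k I S^2 A H^2$, hence $T = O(\eta_k I S^2 A H^2/\epsilon_k)$ rather than $T = 2\eta_k I S^2 A H/\epsilon_k$. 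Your closing claim that the Euclidean diameter $\sqrt{H}$ together with the boundedness of $\dt_{i,k}$ ``keeps the stated dependence from blowing up by further factors of $H$'' is exactly where the argument fails: there is no slack elsewhere in your bound to absorb the extra $H$ coming from $\operatorname{diam}(Z)^2$. The paper avoids this by running the same argument in the $\|\cdot\|_\infty$ norm, where $\operatorname{diam}_{\|\cdot\|_\infty}(Z)^2 \le 1$ (entries of $z$ lie in $[0,1]$) while the Lipschitz constant of $\nabla f$ is still $\eta_k I S^2 A H$ (via $\|\Dt_k\|_{\infty\to\infty} \le SAH$, $\|\Dt_k^T\|_{\infty\to\infty} \le I$, and a factor $S$ from the sum over $s''$); the diameter factor is thus absorbed entirely. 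To repair your proof, either switch to the $\ell_\infty$ analysis, or accept the weaker $T = O(\eta_k I S^2 A H^2/\epsilon_k)$, which proves a correspondingly weaker corollary.
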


\begin{proof}
    Consider the problem of minimizing $f$ over $Z$, as defined in \cref{sec:fw-inner}. Let $y, z \in Z$. With respect to $\|\cdot\|_c = \| \cdot \|_{\infty}$ we have, using the expression of the gradients from \cref{eq:nabla},
    \begin{align*}
        &\| \nabla f(y) - \nabla f(z) \|_{\infty} \\
        =&  \| (\nabla f(y))(\cdot,\cdot,1,\cdot) - (\nabla f(z))(\cdot,\cdot,1,\cdot) \|_{\infty}\\
        =& \| \Dt_k^T [\lambda_k + \eta_k ( \Dt_k \sum_{s''} y(s'') - \alpha )]_+ - \Dt_k^T [\lambda_k + \eta_k ( \Dt_k \sum_{s''} z(s'') - \alpha )]_+\|_{\infty} \\
        \leq& \|\Dt_k^T\|_{\infty} \cdot \|[\lambda_k + \eta_k ( \Dt_k \sum_{s''} y(s'') - \alpha )]_+ - [\lambda_k + \eta_k ( \Dt_k \sum_{s''} z(s'') - \alpha )]_+ \|_{\infty}\\
        &\text{and using} [a]_+ - [b]_+ \leq [a-b]_+ \leq |a-b| \\[0.2cm]
        \leq& \|\Dt_k^T\|_1 \cdot \|[\lambda_k + \eta_k ( \Dt_k \sum_{s''} y(s'') - \alpha )] - [\lambda_k + \eta_k ( \Dt_k \sum_{s''} z(s'') - \alpha )] \|_{\infty}\\
        =& \|\Dt_k^T\|_{\infty} \cdot \eta_k \| \Dt_k \sum_{s''} (y(s'')-z(s''))\|_{\infty} \\
        \leq& \|\Dt_k^T\|_{\infty} \cdot \eta_k \| \Dt_k \|_{\infty} \sum_{s''} \|y(s'')-z(s'')\|_{\infty} \\
        \leq& SAH \cdot \eta_k \cdot I \cdot S\|y-z\|_{\infty},
    \end{align*}
    where the final bound holds since the entries of $\Dt_k$ are in $[0,1]$. Thus, $\nabla f (\cdot)$ is $(S^2AH \cdot \eta_k \cdot I)$-Lipschitz continuous and thus $f$ is $(S^2AH \cdot \eta_k \cdot I)$-smooth. Moreover, for all $z\in Z$ we have $0 \leq z_h(s,a,s') \leq 1$ in every component, so for $y,z \in Z$ we have $ \| z - y \|_{\infty} \leq 1$. Thus $\text{diam}_{\|\cdot\|_{\infty}}(Z)^2 \leq 1 $. Plugging both into \cref{thm:jaggi} yields the result. 
\end{proof} 
It may be possible to improve this bound using a norm different from $\| \cdot \|_c = \| \cdot \|_\infty$ for the analysis.


\subsection{Main Result} \label{sec:main-res-omit}

Our main result shows that with probability $1- \delta$, \AlgName~achieves a sublinear regret in the number of episodes $K$ and polylogarithmic in $1/\delta$.

\thmoptaugrlppregret*

\begin{proof}
    According to our regret decomposition (Observation \ref{obs:regret-dec}), we have 
    \begin{align*}
        \Reg (K; c) &\leq K'H {+} \underbrace{\sum_{k=K'+1}^{K} [ V^{\pi_{k}}(c, p) {-} V^{\pi_{k}}( \ct_k, \pt_{k}) ]_{+}}_{\text{Estimation Error}} {+} \underbrace{\sum_{k=K'+1}^{K} [ V^{\pi_{k}}( \ct_k, \pt_{k}) {-} V^{\pi^*}(c, p) ]_{+}}_{\text{Optimization Error}}, \\
        \Reg (K; d) &\leq \underbrace{\max_{i \in [I]} \sum_{k=K'+1}^{K} [ V^{\pi_{k}}( d_i, p) {-} V^{\pi_{k}}(\dt_{i,k}, \pt_{k}) ]_{+}}_{\text{Estimation Error}} {+} \underbrace{\max_{i \in [I]} \sum_{k=K'+1}^{K} [ V^{\pi_{k}}(\dt_{i,k}, \pt_{k}) {-} \alpha_i ]_{+}}_{\text{Optimization Error}}.
    \end{align*}
    Suppose the success event $G$ occurs, which happens with probability at least $1-\delta$ as shown in \cref{lem:g-whp}. By \cref{prop:est-regret}, the estimation errors satisfy
    \begin{align*}
        \sum_{k=K'+1}^{K} [ V^{\pi_{k}}(c, p) - V^{\pi_{k}}( \ct_k, \pt_{k}) ]_{+} \leq& \tilde{O} \left(  \sqrt{\mathcal{N}SAH^4 K} + S^2 A H^3 \right),\\
        \max_{i \in [I]} \sum_{k=K'+1}^{K} \left[ V^{\pi_{k}}( d_i, p) - V^{\pi_{k}}(\dt_{i,k}, \pt_{k}) \right]_{+} \leq&\tilde{O} \left( \sqrt{\mathcal{N}SAH^4 K} + S^2 A H^3 \right).
    \end{align*}
    By \cref{lemma:opt-error-k}, the optimization errors satisfy
    \begin{align*}
        \sum_{k=K'+1}^{K} [ V^{\pi_{k}}( \ct_k, \pt_{k}) - V^{\pi^*}(c, p) ]_{+} \leq& \sum_{k=K'+1}^{K} \bigg( \frac{(O(\sigma k) + \sum_{t=K'+1}^{k} \sqrt{2\eta_t\epsilon_t})^2}{2\eta_k} + \epsilon_k \bigg) \leq O(\sqrt{K}), \\
        \max_{i \in [I]} \sum_{k=K'+1}^{K} [ V^{\pi_{k}}(\dt_{i,k}, \pt_{k}) - \alpha_i ]_{+} \leq& \sum_{k=K'+1}^{K} \frac{O(\sigma k) + \sum_{t=K'+1}^{k}    \sqrt{2\eta_t \epsilon_k}}{\eta_k} \leq O(\sqrt{K}), 
    \end{align*}
    where $\sigma = \frac{H}{\nu\gamma}$ and the term $O(\sigma k)$ can be replaced by $(2+2(k-K'))\sigma$. As shown in \cref{lemma:opt-error-k}, we can choose step sizes $\eta_{K'+k} = \Theta(k^{2.5})$ and $\epsilon_{K'+k} = \Theta(1/\eta_{K'+k})$ such that both of the above bounds are $O(\sqrt{K})$ up to constant factors. Summing up concludes the proof.
\end{proof}

\begin{remark} \label{rmk-main-param}
    Note that while the bound for the estimation errors does not depend on the choices for $\eta_k$ and $\epsilon_k$, the one for the optimization errors does. For any such choice, the general regret bound we obtain reads
    \begin{align}
        \Reg(K; c) &\leq \tilde{O} \left( \sqrt{\mathcal{N}SAH^4 K} + S^2 A H^3  + K' H \right) \nonumber\\
        &+ \sum_{k=K'+1}^{K} \bigg( \frac{(O(\sigma k) + \sum_{t=K'+1}^{k} \sqrt{2\eta_t\epsilon_t})^2}{2\eta_k} + \epsilon_k \bigg), \label{eq:main-param}\\
        \Reg(K;d) &\leq \tilde{O} \left( \sqrt{\mathcal{N}SAH^4 K} + S^2 A H^3 \right) + \sum_{k=K'+1}^{K} \frac{O(\sigma k) + \sum_{t=K'+1}^{k} \sqrt{2\eta_t \epsilon_k}}{\eta_k}, \nonumber
    \end{align}
    where $\sigma = \frac{H}{\nu\gamma}$ and the term $O(\sigma k)$ can be replaced by $ (2+2(k-K'))\sigma$.

    We remark that there are multiple ways to choose $\eta_k$, $\epsilon_k$ in the bound above that yield different regret guarantees and complexities of the inner algorithm \textsc{InnerOpt-FW}. One possible way is to choose a decreasing approximation error $\epsilon_k$ in every episode and $\eta_k$ large enough so that the parameter-dependent terms above are of order $O(\sqrt{K})$. This yields vanishing average regret for both the objective and the constraints. One such choice is\footnote{Note that if $\gamma$ is not known, having an estimate of $\sigma = \frac{H}{\nu \gamma}$ that is larger than the true value is clearly sufficient.}
    \begin{align*}
        \epsilon_{K'+k} &:= \frac{1}{2\eta_{K'+k}} \tab (k \geq 1),\\
        \eta_{K'+k} &:= ((2+3k)\sigma)^{2.5} \tab (k \geq 1),
    \end{align*}
    as we discuss in \cref{lemma:opt-error-k}. 
    
    It is worth noting that when choosing $\eta_k$ very large and $\epsilon_k$ very small, the bound we get on the parameter-dependent terms in \cref{eq:main-param} gets better and even becomes constant when choosing $\eta_k$, $\epsilon_k$ exponential in $k-K'$. However, the inner problem (\cref{eq:primal-step}) we solve in every episode becomes computationally harder. This is due to the changing smoothness of the objective (with respect to the occupancy measure). Indeed, \cref{lemma:inner-routine} quantifies the resulting iteration complexity, formalizing this tradeoff between statistical guarantees and computational efficiency. Thus, in practice, it is advised to choose $\eta_k$ as large ($\epsilon_k$ as small) as needed to ensure sublinear regret but as small as possible to avoid a higher computational cost.
\end{remark}


\section{Omitted Proofs for \cref{sec:regret}} \label{sec:regret-omit}


\subsection{Preliminaries} \label{sec:regret-prelim-omit}


\subsubsection{Preliminary Confidence Bounds} \label{sec:regret-conf-omit} 
Fix an arbitrary number of episodes $K$ and $0 < \delta < 1$, corresponding to a confidence of $1-\delta$. Moreover, let $(\pi_k)_{k=1}^K$ be the policies according to which the optimistic estimates are updated (\cref{sec:algos}).

Define the following confidence sets 
\begin{align*}
    B_{k,h}^{p}(s,a) &:= \left\{ \pt_{h}(\cdot | s,a) \in \Simplex{\St}S \mid \forall s' \in \St \colon |\pt_h(s' | s,a) - \bar{p}_h^{k-1}(s' | s,a) | \leq \beta_{k,h}^{p}(s,a,s') \right\},\\
    B_{k,h}^{c}(s,a) &:= \left[ \bar{c}_h^{k-1}(s,a) - \beta_{k,h}^{c}(s,a), \bar{c}_h^{k-1}(s,a) + \beta_{k,h}^{c}(s,a) \right],\\
    B_{i,k,h}^{d}(s,a) &:= \left[ \bar{d}_{i,h}^{k-1}(s,a) - \beta_{i,k,h}^{d}(s,a), \bar{d}_{i,h}^{k-1}(s,a) + \beta_{i,k,h}^{d}(s,a) \right],
\end{align*}
where (derived using Hoeffding for the costs and empirical Bernstein for the transitions)
\begin{align*}
    \beta_{k,h}^{p}(s,a,s') :=& 2 \sqrt{\frac{\bar{p}_h^{k-1}(s' | s,a)(1-\bar{p}_h^{k-1}(s' | s,a))L_{\delta}^p}{n_{h}^{k-1}(s,a) \vee 1}} + \frac{\frac{14}{3} L_{\delta}^p }{n_{h}^{k-1}(s,a) \vee 1},\\
    \beta_{k,h}^{c}(s,a) := \beta_{i,k,h}^{d}(s,a) :=& \sqrt{\frac{L_{\delta}}{n_{h}^{k-1}(s,a) \vee 1}},
\end{align*}
with 
\begin{align*}
    L_{\delta}^{p} := \log\left( \frac{6SAHK}{\delta} \right) &\tab  L_{\delta} := \log\left( \frac{6SAH(I+1)K}{\delta} \right).
\end{align*}
We want that, with probability at least $1-\delta$, the true MDP is contained within these bounds across all $k \in [K]$. This will be needed for bounding the optimization error.

With the same thresholds $\beta_{k,h}^{p}$, $\beta_{k,h}^{c}$, $\beta_{i,k,h}^{d}$ as above we define the following \textit{failure events}
\begin{align*}
    F_k^p &:= \left\{ \exists s,a,s',h \colon |p_{h}(s' | s,a) - \bar{p}_h^{k-1}(\cdot | s,a) | > \beta_{k,h}^{p}(s,a,s') \right\}, \\
    F_k^c &:= \left\{ \exists s,a,h \colon |\bar{c}_h^{k-1}(s,a) - c_h(s,a)| > \beta_{k,h}^{c}(s,a) \right\}, \\
    F_k^d &:= \left\{ \exists s,a,i,h \colon |\bar{d}_{i,h}^{k-1}(s,a) - d_{i,h}(s,a)| > \beta_{i,k,h}^{d}(s,a) \right\}, \\
    F_k^N &:= \left\{ \exists s,a,h \colon n_h^{k-1}(s,a) \leq \sum_{j<k} q_{h}^{\pi_j}(s,a;p) - H \log\left( \frac{SAH}{\delta'} \right) \right\},
\end{align*}
where we set $\delta' := \delta / 3$. With this notation, set $F^p := \cup_{k=1}^K F^p_k$, $F^c := \cup_{k=1}^K F^c_k$, $F^d := \cup_{k=1}^K F^d_k$, $F^N := \cup_{k=1}^K F^N_k$ and finally denote the event that none of the failure events ever occurs by
\begin{align*}
    G := \overline{ \left( F^p \bigcup F^c \bigcup F^d \bigcup F^N \right) },
\end{align*}
which we will refer to as the \textit{success event}.

We immediately have the following by construction $G$ and the optimistic estimates.
\begin{restatable}[]{observation}{obsincluded} \label{obs:included}
    Conditioned on the success event $G$, for every $k \in [K]$ the CMDP with cost $\ct_k$ and constraint costs $(\dt_{i,k})_{i\in[I]}$ is optimistic and all $B_k^p(s,a)$ contain the true respective transition probability distribution, i.e.,
    \begin{align*}
        \ct_k \leq c, \tab \dt_{i,k} \leq d_{i}, \tab \text{and} \tab p \in B_k^p,
    \end{align*}
    where $B_k^p := \{ \pt \mid \forall s,a,h \colon \pt_h(\cdot | s,a) \in B_{k,h}^p(s,a) \}$ is the set of plausible transition probabilities.
\end{restatable}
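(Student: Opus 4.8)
The statement is essentially a bookkeeping consequence of how $G$ and the optimistic estimates are defined, so the plan is to unwind the definitions one failure event at a time. First I would recall that conditioning on $G = \overline{F^p \cup F^c \cup F^d \cup F^N}$ means none of the four failure events occurs; in particular the complements $\overline{F^c}$, $\overline{F^d}$, and $\overline{F^p}$ hold simultaneously for every episode $k \in [K]$, every $(s,a,h)$, and every constraint index $i \in [I]$.

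For the objective cost, I would invoke $\overline{F^c}$, which gives the two-sided bound $|\cb_h^{k-1}(s,a) - c_h(s,a)| \leq \beta_{k,h}^c(s,a)$ for all $s,a,h,k$. Keeping only the lower direction and substituting the definition $\ct_{k,h}(s,a) = \cb_h^{k-1}(s,a) - \beta_{k,h}^c(s,a)$ from \cref{eq:opt-c} yields $\ct_{k,h}(s,a) \leq c_h(s,a)$ pointwise; stacking over $h$ gives the vector inequality $\ct_k \leq c$. The constraint costs are handled identically: $\overline{F^d}$ gives $|\db_{i,h}^{k-1}(s,a) - d_{i,h}(s,a)| \leq \beta_{i,k,h}^d(s,a)$, and the definition of $\dt_{i,k,h}$ then forces $\dt_{i,k} \leq d_i$ for every $i \in [I]$.

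For the transitions, I would observe that $\overline{F^p}$ is precisely the statement that $|p_h(s'|s,a) - \pb_h^{k-1}(s'|s,a)| \leq \beta_{k,h}^p(s,a,s')$ holds for all $s,a,s',h$, which is exactly the membership condition defining $B_{k,h}^p(s,a)$ in \cref{eq:opt-c}. Hence $p_h(\cdot|s,a) \in B_{k,h}^p(s,a)$ for all $s,a,h$, i.e.\ $p \in B_k^p$ by definition of the latter. There is no genuine obstacle here; the only point requiring care is to check that the exploration bonuses $\beta_{k,h}^c$, $\beta_{i,k,h}^d$, $\beta_{k,h}^p$ appearing in the definition of the optimistic estimates coincide with the thresholds used in the failure events $F_k^c, F_k^d, F_k^p$ (which they do, by the specification in \cref{sec:regret-conf-omit}), so that the one-sided inequalities line up with the confidence radii exactly. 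The fourth complement $\overline{F^N}$ plays no role in this observation and may be ignored.
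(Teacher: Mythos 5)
Your proposal is correct and matches the paper exactly: the paper states this observation as following ``immediately by construction of $G$ and the optimistic estimates,'' and your unwinding of $\overline{F^c}$, $\overline{F^d}$, and $\overline{F^p}$ against the definitions in \cref{eq:opt-c} is precisely that intended argument (the true $p_h(\cdot|s,a)$ also trivially lies in $\Simplex{\St}$, so membership in $B_{k,h}^p(s,a)$ follows). Your remark that $\overline{F^N}$ is not needed here is likewise accurate.
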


Concentration bounds on the individual random variables and a union bound over all indices now allow us to establish the following result. In \cref{sec:main-res-omit}, we show that conditioned on $G$, the regrets are sublinear in $K$ and polylogarithmic in $1/\delta$.

\lemgwhp*

\begin{proof}
    \citet[Appendix A.1]{efroni2020exploration} give a proof of this. Note that the proof does not require any specific properties of the used policy iterates $(\pi_k)_{k\in [K]}$ but only uses that the collected costs and transitions are i.i.d. across episodes.
\end{proof}

Moreover, we have the following result, allowing us to bound the estimation error. The absolute constants could be specified via a short calculation, but we omit this for brevity.

\begin{restatable}[Lemma 8, \citet{jin2019learning}]{lemma}{lempbound} \label{lem:p-bound}
    In the setup above, conditioned on $G$, there exist absolute constants $C_1, C_2 > 0$ such that for all indices $k, h, s, a, s'$ we have 
    \begin{align*}
        |p_{h}(s' | s,a) - \bar{p}_h^{k-1}(s' | s,a) | \leq C_1 \sqrt{\frac{p_h(s'| s,a)L_{\delta}^p}{n_{h}^{k-1}(s,a) \vee 1}} + C_2 \frac{L_{\delta}^p}{n_{h}^{k-1}(s,a) \vee 1},
    \end{align*}
    with $L_{\delta}^{p} = \log\left( \frac{6SAHK}{\delta} \right)$ as before. 
\end{restatable}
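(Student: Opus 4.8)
The plan is to obtain the stated ``true-variance'' Bernstein bound directly from the empirical Bernstein confidence interval that is already encoded in the success event $G$, converting the empirical variance proxy into the true one via a standard self-bounding argument. This is precisely Lemma~8 of \citet{jin2019learning}, so one option is to cite it outright; since the relevant concentration is already built into the definition of $G$ (through the failure event $F^p$ and the threshold $\beta^p_{k,h}$), I would instead give the short self-contained derivation. Conditioning on $G$ means $F^p$ does not occur, so by the definition of $\beta^p_{k,h}$ we have, for every index $(k,h,s,a,s')$,
\begin{align*}
    |p_h(s' | s,a) - \bar p_h^{k-1}(s' | s,a)| \leq 2\sqrt{\frac{\bar p_h^{k-1}(s' | s,a)(1-\bar p_h^{k-1}(s' | s,a))L_{\delta}^p}{\nkm}} + \frac{\tfrac{14}{3}L_{\delta}^p}{\nkm}.
\end{align*}
Writing $p := p_h(s'|s,a)$, $\bar p := \bar p_h^{k-1}(s'|s,a)$, $n := \nkm$, and $L := L_{\delta}^p$ for brevity, and bounding the empirical variance proxy by $\bar p(1-\bar p) \leq \bar p$, this reads $|p - \bar p| \leq 2\sqrt{\bar p L/n} + \tfrac{14}{3}L/n$.

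The key step is to replace the empirical $\bar p$ appearing under the square root by the true $p$. I would rearrange the one-sided consequence $\bar p \leq p + 2\sqrt{\bar p L/n} + \tfrac{14}{3}L/n$ as a quadratic inequality in $x := \sqrt{\bar p}$, namely $x^2 - 2\sqrt{L/n}\,x - (p + \tfrac{14}{3}L/n) \leq 0$. Taking the larger admissible root and applying $\sqrt{a+b} \leq \sqrt a + \sqrt b$ yields $\sqrt{\bar p} \leq \sqrt{L/n} + \sqrt{p + \tfrac{17}{3}L/n} \leq \sqrt p + (1 + \sqrt{17/3})\sqrt{L/n}$. Multiplying through by $\sqrt{L/n}$ converts the empirical term into $\sqrt{\bar p L/n} \leq \sqrt{pL/n} + (1+\sqrt{17/3})\,L/n$. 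Substituting this back into the bound on $|p - \bar p|$ gives exactly the claim, with $C_1 = 2$ and $C_2 = 2(1+\sqrt{17/3}) + \tfrac{14}{3}$.

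I expect the only genuinely delicate point to be the self-bounding quadratic step; everything else is bookkeeping of absolute constants, which the lemma lets us leave unspecified. One must be mildly careful to apply the argument to the one-sided inequality $\bar p \leq p + \beta^p_{k,h}$ (the direction that upper-bounds $\bar p$ in terms of $p$), and to note that the $\sqrt{a+b}\leq\sqrt a+\sqrt b$ relaxation only introduces additive multiples of $L/n$, which are absorbed into $C_2$. Since the bound is claimed for \emph{all} indices simultaneously and $G$ already controls all of them at once, no additional union bound is needed here beyond the one used to establish $P[G]\geq 1-\delta$ in \cref{lem:g-whp}.
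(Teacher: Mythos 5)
Your proposal is correct, and the constants check out: conditioning on $G$ gives $|p-\bar p|\le 2\sqrt{\bar p(1-\bar p)L/n}+\tfrac{14}{3}L/n$, the relaxation $\bar p(1-\bar p)\le\bar p$ is valid, the quadratic $x^2-2\sqrt{L/n}\,x-(p+\tfrac{14}{3}L/n)\le 0$ in $x=\sqrt{\bar p}$ indeed has larger root $\sqrt{L/n}+\sqrt{p+\tfrac{17}{3}L/n}$, and back-substitution yields the claim with $C_1=2$ and $C_2=2(1+\sqrt{17/3})+\tfrac{14}{3}$. The difference from the paper is one of exposition rather than substance: the paper's entire proof is the single line ``We refer to \citet[Lemma 8]{jin2019learning} for a proof,'' i.e.\ it takes exactly the option you mention and then decline. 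What your version buys is self-containedness and a cleaner logical dependency --- you derive the true-variance bound directly from the empirical-Bernstein radius $\beta^p_{k,h}$ that is hard-coded into the failure event $F^p$, so there is no need to verify that the confidence set of \citet{jin2019learning} matches the one defined in \cref{sec:regret-conf-omit}. The cost is carrying explicit constants through the self-bounding step, which the lemma statement deliberately avoids by leaving $C_1,C_2$ unspecified. Your two cautionary remarks (using the one-sided inequality $\bar p\le p+\beta^p_{k,h}$, and no extra union bound beyond \cref{lem:g-whp}) are both apt; there is no gap.
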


\begin{proof}
    We refer to \citet[Lemma 8]{jin2019learning} for a proof.
\end{proof}


\subsubsection{Regret Decomposition}
The simple observation that we can decompose the regrets can be seen as follows.

\begin{restatable}[Regret decomposition]{observation}{obsregretdecomp} \label{obs:regret-dec}
    \begin{align*}
        \Reg (K; c) &\leq K'H {+} \underbrace{\sum_{k=K'+1}^{K} [ V^{\pi_{k}}(c, p) {-} V^{\pi_{k}}( \ct_k, \pt_{k}) ]_{+}}_{\text{Estimation Error}} {+} \underbrace{\sum_{k=K'+1}^{K} [ V^{\pi_{k}}( \ct_k, \pt_{k}) {-} V^{\pi^*}(c, p) ]_{+}}_{\text{Optimization Error}} \\
        \Reg (K; d) &\leq \underbrace{\max_{i \in [I]} \sum_{k=K'+1}^{K} [ V^{\pi_{k}}( d_i, p) {-} V^{\pi_{k}}(\dt_{i,k}, \pt_{k}) ]_{+}}_{\text{Estimation Error}} {+} \underbrace{\max_{i \in [I]} \sum_{k=K'+1}^{K} [ V^{\pi_{k}}(\dt_{i,k}, \pt_{k}) {-} \alpha_i ]_{+}}_{\text{Optimization Error}}
    \end{align*}
\end{restatable}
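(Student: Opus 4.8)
The plan is to establish both inequalities by purely elementary manipulations of the positive-part operator, with no CMDP-specific or probabilistic machinery required. The two facts that do all the work are the subadditivity of the positive part, $[a+b]_+ \le [a]_+ + [b]_+$ for all reals $a,b$ (which follows from $\max\{0,a+b\}\le\max\{0,a\}+\max\{0,b\}$), and the subadditivity of the maximum, $\max_i (a_i+b_i) \le \max_i a_i + \max_i b_i$. First I would split each of the two strong regrets at the phase boundary $K'$, writing $\Reg(K;c) = \sum_{k=1}^{K'}[\,\cdot\,]_+ + \sum_{k=K'+1}^{K}[\,\cdot\,]_+$, and for $\Reg(K;d)$ splitting the inner sum as $\sum_{k=1}^{K} = \sum_{k=1}^{K'} + \sum_{k=K'+1}^{K}$ and then using subadditivity of $\max_i$ to peel the pre-training block off the exploration block.

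For the pre-training block I would use that $\pi_k=\pibar$ for every $k\le K'$. Since the per-step costs lie in $[0,1]$ and the horizon is $H$, every value function lies in $[0,H]$, so each objective summand satisfies $[V^{\pibar}(c,p)-V^{\pi^*}(c,p)]_+ \le V^{\pibar}(c,p)\le H$; summing over the $K'$ episodes yields the $K'H$ term. For the constraint block, strict feasibility of $\pibar$ (\cref{ass:mild-slater}, as exploited in \cref{prop:pretraining}) gives $V^{\pibar}(d_i,p)<\alpha_i$ for every $i$, hence $[V^{\pibar}(d_i,p)-\alpha_i]_+=0$ and the entire pre-training constraint block vanishes, which is why no $K'H$ term appears in the bound for $\Reg(K;d)$.

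For the optimistic-exploration block I would insert $\pm V^{\pi_{k}}(\ct_k,\pt_{k})$ into each objective summand and $\pm V^{\pi_{k}}(\dt_{i,k},\pt_{k})$ into each constraint summand, then apply subadditivity of $[\,\cdot\,]_+$ to split each term into an \emph{estimation error} piece (comparing the true model against the optimistic one along the played policy $\pi_k$) and an \emph{optimization error} piece (comparing the optimistic value against the optimal cost $V^{\pi^*}(c,p)$, resp. against the threshold $\alpha_i$). Summing over $k\in\{K'+1,\dots,K\}$ gives the objective decomposition directly; for the constraints I would additionally apply subadditivity of the maximum once more to separate the two per-$i$ sums, producing exactly the stated estimation and optimization error terms.

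There is no substantive obstacle here, as the statement is a bookkeeping step; the only points warranting care are that the decomposition is an \emph{inequality} rather than an identity (each subadditivity step can be strict), and that attributing the $K'H$ term relies only on boundedness of the value functions, whereas the vanishing of the pre-training constraint contribution genuinely requires the \emph{strict} feasibility of $\pibar$ and not merely its feasibility.
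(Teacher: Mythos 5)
Your proof is correct and follows essentially the same route as the paper: split at the phase boundary $K'$, bound the pre-training objective block by $K'H$ via $V^{\pi}(c,p)\in[0,H]$, kill the pre-training constraint block via feasibility of $\pibar$, and then insert $\pm V^{\pi_k}(\ct_k,\pt_k)$ (resp.\ $\pm V^{\pi_k}(\dt_{i,k},\pt_k)$) and apply subadditivity of $[\cdot]_+$ and of $\max_i$. One small quibble: your closing remark that the vanishing of the pre-training constraint block \emph{genuinely requires strict} feasibility is an over-claim --- weak feasibility $V^{\pibar}(d_i,p)\leq\alpha_i$ already gives $[V^{\pibar}(d_i,p)-\alpha_i]_+=0$; strictness is needed elsewhere in the paper (for \cref{prop:pretraining} and the dual bounds), not for this observation.
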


\begin{proof}
    We first split the regrets between the two phases of the algorithm.
    \begin{align*}
        \Reg (K; c) &= \underbrace{\sum_{k=1}^{K'} [ V^{\pi_{k}}( c, p) - V^{\pi^*}(c,p) ]_+ }_{\text{Pre-Training, } \leq K'H} + \underbrace{\sum_{k=K'+1}^{K} [ V^{\pi_{k}}( c, p) - V^{\pi^*}(c,p) ]_+ }_{\text{Optimistic Exploration}} \\
        \Reg (K; d) &\leq  \underbrace{\max_{i \in [I]} \sum_{k=1}^{K'} [V^{\pi_{k}}( d_i, p) - \alpha_i]_+}_{\text{Pre-Training, } = 0} + \underbrace{\max_{i \in [I]} \sum_{k=K'+1}^{K} [V^{\pi_{k}}( d_i, p) - \alpha_i]_+}_{\text{Optimistic Exploration}}
    \end{align*}
     We can trivially bound the objective regret during the pre-training phase by $K' H$ (since the expected costs are in $[0,1]$ and the time horizon is $H$, so the value functions are in $[0,H]$). Since $\pibar$ is strictly feasible, there is no constraint regret during pre-training. 
     
     We now focus on the regrets incurred in the optimistic exploration phase. We split the sum and use that $[a+b]_+ \leq [a]_++[b]_+$. For the objective, regret we have 
    \begin{align*}
        \Reg (K; c) =& K' H + \sum_{k=K'+1}^{K} [ V^{\pi_{k}}(c, p) - V^{\pi^*}(c, p) ]_{+}   \\
        =& K' H + \sum_{k=K'+1}^{K} [ V^{\pi_{k}}(c, p) - V^{\pi_{k}}( \ct_k, \pt_{k}) + V^{\pi_{k}}( \ct_k, \pt_{k}) - V^{\pi^*}(c, p) ]_{+}  \\
        \leq& K' H + \sum_{k=K'+1}^{K} [ V^{\pi_{k}}(c, p) - V^{\pi_{k}}( \ct_k, \pt_{k}) ]_{+} + \sum_{k=K'+1}^{K} [ V^{\pi_{k}}( \ct_k, \pt_{k}) - V^{\pi^*}(c, p) ]_{+}  
    \end{align*}
    and similarly, for the constraint regret, we get 
    \begin{align*}
        \Reg (K; d) =& 0 + \max_{i \in [I]} \left(\sum_{k=K'+1}^{K}\left[ V^{\pi_{k}}( d_i, p) - \alpha_i \right]_{+} \right)\\
        =& \max_{i \in [I]} \sum_{k=K'+1}^{K} \left[ V^{\pi_{k}}( d_i, p) - \alpha_i - (V^{\pi_{k}}(\dt_{i,k}, \pt_{k}) - \alpha_i) + (V^{\pi_{k}}(\dt_{i,k}, \pt_{k}) - \alpha_i) \right]_{+} \\
        \leq& \max_{i \in [I]} \left(\sum_{k=K'+1}^{K} \left[ V^{\pi_{k}}( d_i, p)  - V^{\pi_{k}}(\dt_{i,k}, \pt_{k}) \right]_{+} + \sum_{k=K'+1}^{K} \left[ V^{\pi_{k}}(\dt_{i,k}, \pt_{k}) - \alpha_i \right]_{+}  \right)\\
        \leq& \max_{i \in [I]} \sum_{k=K'+1}^{K} \left[ V^{\pi_{k}}( d_i, p)  - V^{\pi_{k}}(\dt_{i,k}, \pt_{k}) \right]_{+} + \max_{i \in [I]} \sum_{k=K'+1}^{K} \left[ V^{\pi_{k}}(\dt_{i,k}, \pt_{k}) - \alpha_i \right]_{+}.  
    \end{align*}
\end{proof}


\subsection{Omitted Proofs for \cref{sec:term1}} \label{sec:term1-omit}

We leverage the on-policy error bound from \cref{lem:onp-error} and the preliminaries from \cref{sec:regret-conf-omit} to establish the desired bound on the estimation errors.

\propestregret*

\begin{proof}
    Condition on the success event $G$. Then by construction of $G$, the assumptions (a) and (b) in \cref{lem:onp-error} are met for $l=c$ and $l=d_i$ for all constraints $i\in[I]$. By \cref{lem:p-bound}, also assumption (c) in \cref{lem:onp-error} is met with $\hat{p}_k := \pt_{k}$ from the $k$-th iteration of \AlgName. 

    First, consider the terms $[ V^{\pi_{k}}(c, p) - V^{\pi_{k}}( \ct_k, \pt_{k}) ]_{+}$. For the cost $l=c$, the values of the true and the estimated MDPs are $V^{\pi_{k}}(c,p)$ and $V^{\pi_{k}}(\ct_k,\pt_{k}) $, respectively. Thus, invoking \cref{lem:onp-error} we find
    \begin{align*}
        \sum_{k=K'+1}^{K} [ V^{\pi_{k}}(c, p) - V^{\pi_{k}}( \ct_k, \pt_{k}) ]_{+} \leq& \sum_{k=K'+1}^{K} | V^{\pi_{k}}(c, p) - V^{\pi_{k}}( \ct_k, \pt_{k}) |\\
        \leq& \sum_{k=1}^{K} | V^{\pi_{k}}(c,p) - V^{\pi_{k}}(\ct_k,\pt_{k}) |\\ 
        \leq& \tilde{O} \left( \sqrt{\mathcal{N}SAH^4 K} + S^2 A H^3 \right).
    \end{align*}
    Similarly, let $i\in [I]$ and consider the terms $[ V^{\pi_{k}}( d_i, p) - V^{\pi_{k}}(\dt_{i,k}, \pt_{k}) ]_{+}$. For the cost $l=d_i$, the values of the true and the estimated MDPs are $V^{\pi_{k}}(d_i,p) $ and $V^{\pi_{k}}(\dt_{i,k},\pt_{k})$, respectively. Thus, invoking \cref{lem:onp-error} we find
    \begin{align*}
        \sum_{k=K'+1}^{K} [ V^{\pi_{k}}( d_i, p) - V^{\pi_{k}}(\dt_{i,k}, \pt_{k}) ]_{+} \leq& \sum_{k=1}^{K} | V^{\pi_{k}}( d_i, p) - V^{\pi_{k}}(\dt_{i,k}, \pt_{k}) |\\
        \leq& \tilde{O} \left( \sqrt{\mathcal{N}SAH^4 K} + S^2 A H^3 \right).
    \end{align*}
\end{proof}


\subsection{Omitted Proofs for \cref{sec:term2}} \label{sec:term2-omit}

For notational convenience, we write $D = (d_i)_{i\in[I]}$ and $\Dt_k = (\dt_{i,k})_{i\in[I]}$ throughout this section.


\subsubsection{Preliminary Bounds}

Recall that in \AlgName, we find an $\epsilon_k$-close solution to the inner problem \cref{eq:primal-step} in every episode $k\in \curly{K'+1, \dots, K}$.

\lemmafdiff*

\begin{proof}
    We first bound the error for the objective cost. Let $k \in \curly{K'+1, \dots, K}$. Conditioned on the success event, we have (componentwise) $c \geq \ct_{k}$, $D \geq \Dt_{k}$ and $p \in B^p_k$ (Observation \ref{obs:included}), and thus 
    \begin{align*}
        &V^{\pi^*}(c, p) \\
        =& \left( V^{\pi^*}(c,p) + \frac{1}{2\eta_k} \| [ \lambda_k + \eta_k (V^{\pi^*}(D,p) - \alpha) ]_+ \|^2 \right)
        -\frac{1}{2\eta_k} \| [ \lambda_k + \eta_k (V^{\pi^*}(D,p) - \alpha) ]_+ \|^2 \\
        \geq& \left( V^{\pi^*}(\ct_k,p) + \frac{1}{2\eta_k} \| [ \lambda_k + \eta_k (V^{\pi^*}(\Dt_k,p) - \alpha) ]_+ \|^2 \right)
        -\frac{1}{2\eta_k} \| [ \lambda_k + \eta_k (V^{\pi^*}(D,p) - \alpha) ]_+ \|^2 \\ 
        \geq& \min_{\substack{\pi \in \Pi \\ p' \in B_k^p}} \left( V^{\pi}(\ct_k, p') + \frac{1}{2\eta_k} \| [ \lambda_k + \eta_k ( V^{\pi}(\Dt_k,p') - \alpha ) ]_+ \|^2 \right) \\
        &-\frac{1}{2\eta_k} \| [ \lambda_k + \eta_k (V^{\pi^*}(D,p) - \alpha) ]_+ \|^2 \\ 
        \geq& \left( V^{\pi_{k}}(\ct_k, \pt_{k}) + \frac{1}{2\eta_k} \| [ \lambda_k + \eta_k (  V^{\pi_{k}}(\Dt_k, \pt_{k}) - \alpha ) ]_+ \|^2 - \epsilon_k \right)\\
        &-\frac{1}{2\eta_k} \| [ \lambda_k + \eta_k (V^{\pi^*}(D,p) - \alpha) ]_+ \|^2 \\ 
        =& V^{\pi_{k}}( \ct_k, \pt_{k}) - \epsilon_k + \frac{1}{2\eta_k} \| [ \lambda_k + \eta_k ( V^{\pi_{k}}(\Dt_k, \pt_{k}) - \alpha ) ]_+ \|^2\\
        &-\frac{1}{2\eta_k} \| [ \lambda_k + \eta_k (V^{\pi^*}(D, p) - \alpha) ]_+ \|^2,
    \end{align*}
    where the first inequality is due to optimism and monotonicity of $\|[\cdot]_+\|^2$, the second due to $p \in B^p_k$, and the third due to the $\epsilon_k$-closeness of $\pi_{k}$, $\pt_{k}$. Now since $\pi^*$ is primal-feasible we have $V^{\pi^*}(D, p) - \alpha \leq 0$ and thus $\| [ \lambda_k + \eta_k (V^{\pi^*}(D, p) - \alpha) ]_+ \|^2 \leq \|\lambda_k \|^2 $ by monotonicity of $\|[\cdot]_+\|^2$. Plugging in the update for $\lambda_{k+1}$ thus shows
    \begin{align*}
        &\frac{1}{2\eta_k} \| [ \lambda_k + \eta_k ( V^{\pi_{k}}(\Dt_k, \pt_{k}) - \alpha ) ]_+ \|^2 -\frac{1}{2\eta_k} \| [ \lambda_k + \eta_k ( V^{\pi^*}(D, p) - \alpha) ]_+ \|^2\\
        \geq&\frac{1}{2\eta_k} \| [ \lambda_k + \eta_k (  V^{\pi_{k}}(\Dt_k, \pt_{k}) - \alpha ) ]_+ \|^2 - \frac{1}{2\eta_k} \| \lambda_k \|^2 \\
        =& \frac{1}{2\eta_k} \| \lambda_{k+1}\|^2 - \frac{1}{2\eta_k} \| \lambda_k \|^2 ,
    \end{align*}
    concluding the proof after plugging this into the previous inequality and rearranging. 

    We now proceed by proving the bound on the constraint cost.
    Let $k \in \curly{K'+1, \dots, K}$. Let $i \in [I]$ and recall that $\lambda_{k+1}=[ \lambda_k + \eta_k ( V^{\pi_{k}}(\Dt_k, \pt_{k}) - \alpha ) ]_+ $. If $\lambda_{k+1}(i) > 0$, then we have $\lambda_{k+1}(i) - \lambda_k(i) = \eta_k (V^{\pi_{k}}(\dt_{i,k}, \pt_{k}) - \alpha_i) $ with equality. On the other hand, if $\lambda_{k+1}(i) = 0$, then $\lambda_k(i) + \eta_k (V^{\pi_{k}}(\dt_{i,k}, \pt_{k}) - \alpha_i) \leq 0$ and thus $\lambda_{k+1}(i) - \lambda_k(i) = - \lambda_k(i) \geq (\lambda_k(i) + \eta_k (V^{\pi_{k}}(\dt_{i,k}, \pt_{k}) - \alpha_i)) - \lambda_k(i) = \eta_k (V^{\pi_{k}}(\dt_{i,k}, \pt_{k}) - \alpha_i)$. Thus in both cases $\lambda_{k+1}(i) - \lambda_k(i) \geq \eta_k (V^{\pi_{k}}(\dt_{i,k}, \pt_{k}) - \alpha_i) $, which proves the claim.
\end{proof}


\subsubsection{Bounding the Dual Iterates} \label{sec:dual-iter-bound}

In order to prove \cref{prop:dual-bound}, which implies \cref{cor:dual-bound}, we first establish (\cref{lemma:1,lemma:2,lemma:ialm-step}), following the analysis of \citet{xu2021iteration}.

\begin{lemma} \label{lemma:1}
    Let $k\in\curly{K'+1, \dots, K}$. In the $k$-th iteration of \AlgName~we have, for all $\lambda \in \R_{\geq 0}^I$,
    \begin{align*}
        &\frac{1}{2\eta_k} \left( \| \lambda_{k+1} - \lambda\|^2 - \| \lambda_{k} - \lambda \|^2 + \| \lambda_{k+1} - \lambda_{k}\|^2 \right) \\
        &= \sum_{i=1}^{I} (\lambda_{k+1}(i) - \lambda(i)) \max\{-\frac{\lambda_k(i)}{\eta_k}, (V^{\pi_{k}}(\Dt_k, \pt_{k}) - \alpha)_i\}.
    \end{align*}
\end{lemma}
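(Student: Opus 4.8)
The plan is to reduce the vector identity to a componentwise statement about the projection $[\,\cdot\,]_+$ and then apply a standard polarization identity. Write $g := V^{\pi_{k}}(\Dt_k, \pt_{k}) - \alpha \in \R^I$, so that the dual update in \cref{eq:dual-step} reads componentwise as $\lambda_{k+1}(i) = [\lambda_k(i) + \eta_k g_i]_+$ for each $i \in [I]$.

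The first key step is to establish the componentwise identity
\begin{align*}
    \lambda_{k+1}(i) - \lambda_k(i) = \eta_k \max\left\{ -\frac{\lambda_k(i)}{\eta_k},\, g_i \right\}.
\end{align*}
I would prove this by splitting into two cases according to the sign of $\lambda_k(i) + \eta_k g_i$. If $\lambda_k(i) + \eta_k g_i \geq 0$, the projection is inactive, so $\lambda_{k+1}(i) = \lambda_k(i) + \eta_k g_i$ and the left side equals $\eta_k g_i$; since this case is exactly $g_i \geq -\lambda_k(i)/\eta_k$, the max equals $g_i$, matching. If instead $\lambda_k(i) + \eta_k g_i < 0$, then $\lambda_{k+1}(i) = 0$ and the left side equals $-\lambda_k(i)$; since this case is exactly $g_i < -\lambda_k(i)/\eta_k$, the max equals $-\lambda_k(i)/\eta_k$, so $\eta_k$ times it equals $-\lambda_k(i)$, again matching. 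Both branches agree, giving the identity. This is precisely the mechanism by which the truncated (projected) gradient step gets encoded as the max expression.

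The second step is purely algebraic: for any vectors $a, b, c$ one has
\begin{align*}
    \|a - c\|^2 - \|b - c\|^2 + \|a - b\|^2 = 2\langle a - b,\, a - c \rangle,
\end{align*}
which follows by expanding the three squared norms and collecting terms. Applying this with $a = \lambda_{k+1}$, $b = \lambda_k$, $c = \lambda$ rewrites the left-hand side of the claim as $\frac{1}{\eta_k}\langle \lambda_{k+1} - \lambda_k,\, \lambda_{k+1} - \lambda \rangle$.

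Finally, I would substitute the componentwise identity from the first step into this inner product, using $\lambda_{k+1}(i) - \lambda_k(i) = \eta_k \max\{-\lambda_k(i)/\eta_k,\, g_i\}$: the factor of $\eta_k$ cancels against $1/\eta_k$, and the inner product collapses into $\sum_{i=1}^I (\lambda_{k+1}(i) - \lambda(i)) \max\{-\lambda_k(i)/\eta_k,\, g_i\}$, which is exactly the right-hand side. I do not expect a genuine obstacle here; the only point requiring care is keeping the case analysis for the projection consistent, so that the active branch of the max lines up correctly with the sign condition on $\lambda_k(i) + \eta_k g_i$ in each of the two cases.
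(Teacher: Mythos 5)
Your proposal is correct and follows essentially the same route as the paper's proof: the paper likewise rewrites the dual update as $\lambda_{k+1}(i) - \lambda_k(i) = \eta_k \max\{-\lambda_k(i)/\eta_k, (V^{\pi_k}(\Dt_k,\pt_k)-\alpha)_i\}$ and applies the identity $2u^Tv = \|u\|^2 + \|v\|^2 - \|u-v\|^2$ with $u=\lambda_{k+1}-\lambda_k$, $v=\lambda_{k+1}-\lambda$, which is your polarization step. The only difference is that you spell out the two-case verification of the max identity, which the paper leaves implicit.
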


\begin{proof}
    Using $2u^Tv = \|u\|^2 + \|v\|^2 - \|u-v\|^2$ with $u = \lambda_{k+1}-\lambda_k$, $v=\lambda_{k+1}-\lambda$ we find that the LHS in the Lemma reads
    \begin{align*}
        \frac{1}{2\eta_k} \left( \| \lambda_{k+1} - \lambda\|^2 - \| \lambda_{k} - \lambda \|^2 + \| \lambda_{k+1} - \lambda_{k}\|^2 \right) = \frac{1}{2\eta_k} 2(\lambda_{k+1}-\lambda)^T(\lambda_{k+1}-\lambda_k).
    \end{align*}
    The update rule of for $\lambda_{k+1}$ can equivalently be written as
    \begin{align*}
        \lambda_{k+1}(i) = \lambda_k(i) + \eta_k \max\{ -\frac{\lambda_k(i)}{\eta_k}, (V^{\pi_{k}}(\Dt_{k}, \pt_{k}) - \alpha)_i \},    
    \end{align*}
    for all $i \in [I]$, from which we obtain $\lambda_{k+1}(i) - \lambda_k(i) =  \eta_k \max\{ -\frac{\lambda_k(i)}{\eta_k}, (V^{\pi_{k}}(\Dt_k, \pt_{k}) - \alpha)_i \}$. Plugging this into the equation above proves the Lemma.
\end{proof}

\begin{lemma} \label{lemma:2}
    Let $k\in\curly{K'+1, \dots, K}$. In the $k$-th iteration of \AlgName~we have for, any $\lambda \in \R_{\geq 0}^I$,
    \begin{align*}
        &\sum_{i=1}^{I} (\lambda_{k+1}(i) - \lambda(i)) \max\{-\frac{\lambda_k(i)}{\eta_k}, (V^{\pi_{k}}(\Dt_k, \pt_{k}) - \alpha)_i\}\\
        &\leq \sum_{i=1}^{I} (\lambda_{k+1}(i) - \lambda(i)) (V^{\pi_{k}}(\Dt_k, \pt_{k}) - \alpha)_i.
    \end{align*}
\end{lemma}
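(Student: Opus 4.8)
The plan is to establish the inequality coordinatewise in $i \in [I]$, since both sides are sums that decouple across coordinates. Abbreviating $g_i := (V^{\pi_{k}}(\Dt_k, \pt_{k}) - \alpha)_i$, the claim is equivalent to
\[
    \sum_{i=1}^{I} (\lambda_{k+1}(i) - \lambda(i)) \left( \max\Bigl\{ -\tfrac{\lambda_k(i)}{\eta_k},\, g_i \Bigr\} - g_i \right) \leq 0,
\]
so it suffices to show that each individual summand is nonpositive.

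First I would record the elementary identity $\max\{a, b\} - b = [a - b]_+$, which gives that the factor $\max\{-\lambda_k(i)/\eta_k,\, g_i\} - g_i = [-\lambda_k(i)/\eta_k - g_i]_+ \geq 0$ is always nonnegative. Hence the sign of the $i$-th summand is governed entirely by $\lambda_{k+1}(i) - \lambda(i)$, and I would split into two cases according to which argument attains the maximum. If $-\lambda_k(i)/\eta_k \leq g_i$, the maximum equals $g_i$, the nonnegative factor vanishes, and the summand is exactly zero.

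The remaining case is $-\lambda_k(i)/\eta_k > g_i$, equivalently $\lambda_k(i) + \eta_k g_i < 0$. The key step is to invoke the dual update \cref{eq:dual-step}, namely $\lambda_{k+1}(i) = [\lambda_k(i) + \eta_k g_i]_+$, which clips a strictly negative argument to zero and thus forces $\lambda_{k+1}(i) = 0$ in precisely this case. Then $\lambda_{k+1}(i) - \lambda(i) = -\lambda(i) \leq 0$ since $\lambda \in \R_{\geq 0}^I$, and multiplying by the nonnegative factor makes the summand nonpositive. Summing the per-coordinate bounds over $i \in [I]$ yields the claim. I do not anticipate a genuine obstacle here; the only thing to notice is that the coordinates where the maximum is active at its first argument are exactly the coordinates that the positive-part projection in \cref{eq:dual-step} sends to zero, which is what lets the two cases combine into the stated bound.
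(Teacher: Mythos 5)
Your proof is correct and follows essentially the same route as the paper: the paper likewise splits the index set into $I_+ = \{i : \lambda_k(i) + \eta_k g_i > 0\}$ and its complement, observes that the difference of the two sides vanishes on $I_+$ (where the max is attained at $g_i$), and reduces to $\sum_{i\in I_-}\lambda(i)(g_i + \lambda_k(i)/\eta_k) \leq 0$ using $\lambda \geq 0$ and the clipping in the dual update. Your coordinatewise phrasing via $[\,\cdot\,]_+$ is just a cosmetic repackaging of the same case analysis.
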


\begin{proof}
    Fix $k$ and set $I_+ := \{ i \in [I] \mid \lambda_k(i) + \eta_k (V^{\pi_{k}}(\Dt_k, \pt_{k}) - \alpha)_i > 0 \} = \{ i \in [I] \mid -\frac{\lambda_k(i)}{\eta_k} < (V^{\pi_{k}}(\Dt_k, \pt_{k}) - \alpha)_i \}$ and $I_- := [I] \setminus I_+$. Then subtracting the RHS from the LHS in the Lemma we get
    \begin{align*}
        & \sum_{i=1}^{I} (\lambda_{k+1}(i) - \lambda(i)) \max\{-\frac{\lambda_k(i)}{\eta_k}, (V^{\pi_{k}}(\Dt_k, \pt_{k}) - \alpha)_i\} \\
        &- \sum_{i=1}^{I} (\lambda_{k+1}(i) - \lambda(i)) (V^{\pi_{k}}(\Dt_k, \pt_{k}) - \alpha)_i\\
        =& \sum_{i \in I_+} (\lambda_{k+1}(i) - \lambda(i))((V^{\pi_{k}}(\Dt_k, \pt_{k}) - \alpha)_i - (V^{\pi_{k}}(\Dt_k, \pt_{k}) - \alpha)_i) \\
        &+ \sum_{i \in I_-} \lambda(i) ((V^{\pi_{k}}(\Dt_k, \pt_{k}) - \alpha)_i + \frac{\lambda_k(i)}{\eta_k})\\
        =& \sum_{i \in I_-} \lambda(i) ((V^{\pi_{k}}(\Dt_k, \pt_{k}) - \alpha)_i + \frac{\lambda_k(i)}{\eta_k}),
    \end{align*}
    and using that $\lambda \geq 0$ as well as $(V^{\pi_{k}}(\Dt_k, \pt_{k}) - \alpha)_i + \frac{\lambda_k(i)}{\eta_k} \leq 0$ for $i \in I_-$ shows that this is $\leq 0$, which proves the claim.
\end{proof}

Combining the previous two lemmas, we find the analogous result of the one-step progress inequality of the standard inexact augmented Lagrangian method analysis:
\begin{lemma}[One-step progress of iALM] \label{lemma:ialm-step}
    Let $k\in\curly{K'+1, \dots, K}$. In step $k$ of \AlgName, for all $\pi \in \Pi$ and all $\lambda \geq 0$ we have 
    \begin{align*}
        V^{\pi_{k}}( \ct_k, \pt_{k}) + \lambda^T (V^{\pi_{k}}(\Dt_k, \pt_{k}) - \alpha) + \frac{1}{2\eta_k} \| \lambda_{k+1} - \lambda \|^2 \\
        \leq \Ltk (\pi, \lambda_k) + \frac{1}{2\eta_k} \| \lambda_k - \lambda \|^2 + \epsilon_k,
    \end{align*}
    where for notational convenience
    \begin{align*}
        \Ltk(\pi,\lambda) := V^{\pi}(\ct_k, \pt_{k}) + \frac{1}{2\eta_k} \| [\lambda + \eta_k (V^{\pi}(\Dt_k, \pt_{k}) - \alpha)]_+ \|^2 - \frac{1}{2\eta_k} \|\lambda\|^2.
    \end{align*}
\end{lemma}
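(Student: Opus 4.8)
The plan is to combine the two preceding lemmas with the complementarity structure of the projection $[\cdot]_+$ and the $\epsilon_k$-optimality of the inner solver. Writing $g_k := V^{\pi_{k}}(\Dt_k, \pt_{k}) - \alpha$ for the computed constraint slack, \cref{lemma:1} and \cref{lemma:2} together give, for every $\lambda \in \R_{\geq 0}^I$, the inequality $\frac{1}{2\eta_k}\big(\|\lambda_{k+1}-\lambda\|^2 - \|\lambda_k-\lambda\|^2 + \|\lambda_{k+1}-\lambda_k\|^2\big) \leq (\lambda_{k+1}-\lambda)^T g_k$. First I would rearrange this to isolate $\frac{1}{2\eta_k}\|\lambda_{k+1}-\lambda\|^2$ on the left, and then add $V^{\pi_{k}}(\ct_k,\pt_{k}) + \lambda^T g_k$ to both sides. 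On the left this produces exactly the quantity appearing in the lemma statement.

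On the right, the cross terms collapse via $\lambda^T g_k + (\lambda_{k+1}-\lambda)^T g_k = \lambda_{k+1}^T g_k$, leaving the expression $V^{\pi_{k}}(\ct_k,\pt_{k}) + \lambda_{k+1}^T g_k - \frac{1}{2\eta_k}\|\lambda_{k+1}-\lambda_k\|^2 + \frac{1}{2\eta_k}\|\lambda_k-\lambda\|^2$. The identity I would invoke next is that the dual update $\lambda_{k+1} = [\lambda_k + \eta_k g_k]_+$ is the Euclidean projection of $\lambda_k + \eta_k g_k$ onto $\R_{\geq 0}^I$; componentwise, either $\lambda_{k+1}(i)=0$ or $\lambda_{k+1}(i) = (\lambda_k + \eta_k g_k)(i)$, so in every coordinate $\lambda_{k+1}(i)\big((\lambda_k + \eta_k g_k)(i) - \lambda_{k+1}(i)\big) = 0$, i.e.\ $\lambda_{k+1}^T(\lambda_k + \eta_k g_k - \lambda_{k+1}) = 0$. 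Expanding $\|\lambda_{k+1}-\lambda_k\|^2$ and using this complementarity identity turns the expression above into $V^{\pi_{k}}(\ct_k,\pt_{k}) + \frac{1}{2\eta_k}\|\lambda_{k+1}\|^2 - \frac{1}{2\eta_k}\|\lambda_k\|^2 + \frac{1}{2\eta_k}\|\lambda_k-\lambda\|^2$, and since $\lambda_{k+1} = [\lambda_k + \eta_k(V^{\pi_k}(\Dt_k,\pt_k)-\alpha)]_+$ this first part is precisely $\Ltk(\pi_{k},\lambda_k)$ by definition of $\Ltk$.

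It then remains to replace $\Ltk(\pi_{k},\lambda_k)$ by $\Ltk(\pi,\lambda_k) + \epsilon_k$ for arbitrary $\pi \in \Pi$. Here I would use the $\epsilon_k$-closeness guarantee of the inner routine: since $\pt_{k} \in B_k^p$, the pair $(\pi,\pt_{k})$ is admissible in the joint minimization over $\Pi \times B_k^p$, hence $\mathcal{L}_k(\pi_{k},\pt_{k}) \leq \min_{\pi' \in \Pi,\, p' \in B_k^p} \mathcal{L}_k(\pi',p') + \epsilon_k \leq \mathcal{L}_k(\pi,\pt_{k}) + \epsilon_k$. Because $\mathcal{L}_k(\cdot,\pt_{k}) = \Ltk(\cdot,\lambda_k) + \frac{1}{2\eta_k}\|\lambda_k\|^2$, the additive constant cancels and gives $\Ltk(\pi_{k},\lambda_k) \leq \Ltk(\pi,\lambda_k) + \epsilon_k$. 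Substituting this into the chain above yields the claim.

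The algebra is routine; the one step I expect to require care—and the main obstacle to state cleanly—is the transfer of the $\epsilon_k$-optimality from the \emph{joint} minimizer over $\Pi \times B_k^p$ to the objective $\Ltk(\cdot,\lambda_k)$ with the transition frozen at $\pt_{k}$. The point to notice is that because the computed $\pt_{k}$ is itself admissible, $(\pi,\pt_{k})$ is a feasible competitor for \emph{every} $\pi$, so the joint minimum is no larger than $\mathcal{L}_k(\pi,\pt_{k})$; this one-sided comparison is exactly what makes the bound go through without controlling how the optimal transition varies with $\pi$.
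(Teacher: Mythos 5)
Your proposal is correct and follows essentially the same route as the paper's proof: combine \cref{lemma:1} and \cref{lemma:2}, convert $\lambda_{k+1}^T(V^{\pi_k}(\Dt_k,\pt_k)-\alpha) - \frac{1}{2\eta_k}\|\lambda_{k+1}-\lambda_k\|^2$ into $\frac{1}{2\eta_k}(\|\lambda_{k+1}\|^2-\|\lambda_k\|^2)$, and invoke $\epsilon_k$-closeness with the transition frozen at $\pt_k$ exactly as the paper does in its first display. The only (cosmetic) difference is that you establish the key algebraic identity via the projection complementarity $\lambda_{k+1}^T(\lambda_k+\eta_k g_k-\lambda_{k+1})=0$, whereas the paper verifies the same identity by an explicit case split over the index sets $I_+$ and $I_-$.
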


\begin{proof}
    By $\epsilon_k$-closeness, for all $\pi \in \Pi$ we have
    \begin{align}
        V^{\pi_{k}}( \ct_k, \pt_{k}) + \frac{1}{2\eta_k} \|\lambda_{k+1}\| \leq& V^{\pi}(\ct_k, \pt_{k}) + \frac{1}{2\eta_k}  \| \left[ \lambda_k + \eta_k ( V^{\pi}(\Dt_k, \pt_{k}) - \alpha ) ]_+ \right \|^2  + \epsilon_k \nonumber\\
        =& \Ltk (\pi, \lambda_k) + \frac{1}{2\eta_k} \| \lambda_k \|^2 + \epsilon_k. \label{eq:lagr1}
    \end{align}
    In addition, again by distinguishing between indices in $I_+$ and $I_-$ (see proof of \cref{lemma:2}), we see
    \begin{align*}
        & \frac{1}{2\eta_k} \| [\lambda_k + \eta_k (V^{\pi_{k}}(\Dt_k, \pt_{k}) - \alpha)]_+ \|^2 - \frac{1}{2\eta_k}\|\lambda_{k}\|^2 \\
        &- \sum_{i\in [I]} [\lambda_k(i) + \eta_k (V^{\pi_{k}}(\Dt_k, \pt_{k}) - \alpha)_i]_+ (V^{\pi_{k}}(\Dt_k, \pt_{k}) - \alpha)_i \\
        =& \frac{1}{2\eta_k} \sum_{i \in I_+} \bigg( (\lambda_k(i) + \eta_k(V^{\pi_{k}}(\Dt_k, \pt_{k}) - \alpha)_i)^2 - \lambda_k(i)^2 \\
        &- 2(\lambda_k(i) + \eta_k(V^{\pi_{k}}(\Dt_k, \pt_{k}) - \alpha)_i) \eta_k (V^{\pi_{k}}(\Dt_k, \pt_{k}) - \alpha)_i \bigg) - \frac{1}{2\eta_k} \sum_{i\in I_{-}} \lambda_k(i)^2\\
        =& \frac{1}{2\eta_k} \sum_{i \in I_+}  -(\eta_k(V^{\pi_{k}}(\Dt_k, \pt_{k}) - \alpha)_i)^2 - \frac{1}{2\eta_k} \sum_{i\in I_{-}} \lambda_k(i)^2 \\
        =& \frac{1}{2\eta_k} \sum_{i \in I_+}  -(\lambda_{k+1}(i) - \lambda_{k}(i))^2 - \frac{1}{2\eta_k} \sum_{i\in I_{-}} (\lambda_{k+1}(i) - \lambda_{k}(i))^2 \\
        =& - \frac{1}{2\eta_k} \| \lambda_{k+1} - \lambda_k \|^2,
    \end{align*}
    and with \cref{eq:lagr1} this shows
    \begin{align}
        &V^{\pi_{k}}( \ct_k, \pt_{k}) + \sum_{i\in [I]} [\lambda_k(i) + \eta_k (V^{\pi_{k}}(\Dt_k, \pt_{k}) - \alpha)_i]_+ (V^{\pi_{k}}(\Dt_k, \pt_{k}) - \alpha)_i \nonumber\\
        &- \frac{1}{2\eta_k} \| \lambda_{k+1} - \lambda_k \|^2 \nonumber\\
        =& V^{\pi_{k}}( \ct_k, \pt_{k}) + \frac{1}{2\eta_k} \| \lambda_{k+1} \|^2 - \frac{1}{2\eta_k}\|\lambda_{k}\|^2 \nonumber \\
        \overset{\cref{eq:lagr1}}{\leq}& \Ltk (\pi, \lambda_k) + \epsilon_k. \label{eq:lagr2}
    \end{align}
    Now combining \cref{lemma:1} and \cref{lemma:2}, we also have
    \begin{align*}
        &\frac{\| \lambda_{k+1} - \lambda\|^2 - \| \lambda_{k} - \lambda \|^2 + \| \lambda_{k+1} - \lambda_{k}\|^2}{2\eta_k} \\
        \leq &\sum_{i\in [I]} ( [\lambda_k(i) + \eta_k (V^{\pi_{k}}(\Dt_k, \pt_{k}) - \alpha)_i]_+ - \lambda(i) )(V^{\pi_{k}}(\Dt_k, \pt_{k}) - \alpha)_i.
    \end{align*}
    Hence 
    \begin{align*}
        &V^{\pi_{k}}( \ct_k, \pt_{k}) + \frac{1}{2\eta_k}\|\lambda_{k+1}-\lambda\|^2 - \frac{1}{2\eta_k}\|\lambda_k-\lambda\|^2\\
        \leq& V^{\pi_{k}}( \ct_k, \pt_{k}) \\
        &+ \sum_{i\in [I]} ( [\lambda_k(i) + \eta_k (V^{\pi_{k}}(\Dt_k, \pt_{k}) - \alpha)_i]_+ - \lambda(i) )(V^{\pi_{k}}(\Dt_k, \pt_{k}) - \alpha)_i \\
        &- \frac{1}{2\eta_k} \| \lambda_{k+1} - \lambda_{k}\|^2\\
        \overset{\cref{eq:lagr2}}{\leq}& -\sum_{i\in [I]} \lambda(i) (V^{\pi_{k}}(\Dt_k, \pt_{k}) - \alpha)_i + \Ltk (\pi, \lambda_k) + \epsilon_k,
    \end{align*}
    and rearranging this yields the desired inequality.
\end{proof}

Before proving \cref{cor:dual-bound}, we must finally establish \cref{lemma:aux-ineq}. This result is based on standard properties from constrained convex optimization (\cref{sec:conv-prelim}) and the LP formulation of CMDPs (\cref{sec:occ-notation}). Recall that we have $V^{\pi}(\ct_k, \pt_{k}) = \ct_k^T q^{\pi}(\pt_{k})$ and $V^{\pi}(\Dt_k, \pt_{k}) = \Dt_k q^{\pi}(\pt_{k})$, where $q^{\pi}(\pt_{k}) \in \R^{SAH}$, $\ct_k \in \R^{SAH}$, $\dt_{i,k} \in \R^{SAH}$, and $\Dt_{k} \in \R^{I \times SAH}$ are defined as in \cref{sec:occ-notation}. Thus, when switching to occupancy measures, the optimistic problem (\cref{eq:opt-KKT}) equivalently reads
\begin{align}
     \min_{\substack{q^\pi \in Q}(\pt_{k})} ~~~~ \ct_k^T q^{\pi} ~~~~ \text{s.t.} ~~~~ \Dt_k q^{\pi} - \alpha \leq 0, \label{eq:opt-KKT-occ}
\end{align}
where $Q(\pt_{k}) \subset \R^{SAH}$ is defined as in \cref{sec:occ-notation}. This is a convex optimization problem over the set of occupancy measures $Q(\pt_{k})$. 

Formally, we make the following assumption. By \cref{prop:pretraining}, the assumption will be guaranteed to hold with (with $\pi^0 = \pibar$ and $\sigma = \frac{H}{\nu \gamma}$), conditioned on $G$.
\begin{restatable}[Slater points]{assumption}{assslater} \label{ass:slater}
    There exists $\pi^{0} \in \Pi$ such that for all $k \in \curly{K', \dots, K}$ and all $i\in [I]$ we have $V^{\pi^{0}}( \dt_{i,k}, \pt_{k}) < \alpha_i$. In particular, \cref{eq:opt-KKT} is feasible. Let $\pi^*_k$ be an optimal solution for \cref{eq:opt-KKT} and suppose there is a fixed constant $\sigma > 0$ such that, for all $k \in \curly{K', \dots, K}$,
     \begin{align*}
        \frac{V^{\pi^0}(\ct_k, \pt_{k}) - V^{\pi^*_k}(\ct_k, \pt_{k})}{\min_{i \in [I]} (\alpha_i - V^{\pi^0}( \dt_{i,k}, \pt_{k}))} \leq \sigma.
    \end{align*}
\end{restatable}

\begin{remark} \label{rmk:match-convex}
    Note that, under Assumption \ref{ass:slater}, we can view \cref{eq:opt-KKT} as the convex optimization problem in \cref{eq:opt-KKT-occ} over $Q(\pt_{k})$ that satisfies all parts of Assumption \ref{ass:841} from \cref{sec:conv-prelim}. Indeed, 
    \begin{itemize}
        \item[(a)] $X:=Q(\pt_{k})$ is a polytope and thus convex
        \item[(b)] the objective $f(\cdot) := \ct_{k}^T (\cdot)$ is affine and thus convex
        \item[(c)] the constraints $g_i(\cdot) := \dt_{i,k}^T (\cdot) - \alpha_i$ are affine and thus convex
        \item[(d)] by Assumption $\ref{ass:slater}$, \cref{eq:opt-KKT-occ} is feasible, and thus its minimum is attained (since the domain is compact and the objective continuous)
        \item[(e)] a Slater point exists by Assumption \ref{ass:slater}, namely $q^{\pi^0}(\pt_{k})$
        \item[(f)] all dual problems have an optimal solution since the domain $X$ is compact and the objective $f(\cdot) + \lambda^Tg(\cdot)$ is continuous,
    \end{itemize} 
    where $Q(\pt_{k}) \subset \R^{SAH}$, $\ct_k \in \R^{SAH}$ and $\dt_{i,k} \in \R^{SAH}$ are defined as in \cref{sec:occ-notation}, and the setup is in line with the general convex optimization setup described in \cref{sec:conv-prelim}.
\end{remark}

\begin{lemma} \label{lemma:aux-ineq}
     If \cref{eq:opt-KKT-occ} is stricly feasible, then there exists a point $(\pi_k^*, \lambda_k^*)$ with $q^{\pi_k^*} = q^{\pi_k^*}(\pt_{k})$ optimal for \cref{eq:opt-KKT-occ} and $\lambda^*_k$ optimal for its dual problem. For any such pair and any $\pi\in \Pi$ we have
     \begin{align*}
        V^{\pi}(\ct_k, \pt_{k}) - V^{\pi_k^*}(\ct_k, \pt_{k}) + (\lambda_k^*)^T (V^{\pi}(\Dt_k, \pt_{k}) - \alpha) \geq 0.
     \end{align*}
\end{lemma}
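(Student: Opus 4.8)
The plan is to recognize \cref{eq:opt-KKT-occ} as a concrete instance of the abstract convex program \cref{eq:opt-P} and then simply invoke the generic auxiliary inequality \cref{thm:conv-aux-ineq}. Concretely, I would set $X := Q(\pt_{k})$, $f(x) := \ct_k^T x$, and $g(x) := \Dt_k x - \alpha$ (so that $g_i(x) = \dt_{i,k}^T x - \alpha_i$), which translates the optimistic CMDP into the occupancy-measure formulation exactly as in \cref{sec:occ-notation}.

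First I would verify that Assumption \ref{ass:841} holds for this instance. This is precisely the content of \cref{rmk:match-convex}: $Q(\pt_{k})$ is a polytope (hence convex and compact), $f$ and the $g_i$ are affine (hence convex), strict feasibility of \cref{eq:opt-KKT-occ} supplies the Slater point required in part (e), and compactness of $X$ together with continuity of the objective yields attainment of the optima in parts (d) and (f). In particular, the optimal primal value is attained at some $q^{\pi_k^*}(\pt_{k}) \in Q(\pt_{k})$; using the surjectivity of the map $\pi \mapsto q^{\pi}(\pt_{k})$ described in \cref{sec:occ-notation}, I can extract a policy $\pi_k^*$ realizing this occupancy measure. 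By strong duality (\cref{thm:duality}) the dual optimum is likewise attained, giving a dual-optimal $\lambda_k^*$, which establishes the existence of the claimed pair $(\pi_k^*, \lambda_k^*)$.

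With the assumptions in place, I would apply \cref{thm:conv-aux-ineq} taking $x^* := q^{\pi_k^*}(\pt_{k})$, $\lambda^* := \lambda_k^*$, and $x := q^{\pi}(\pt_{k})$ for an arbitrary $\pi \in \Pi$ (which lies in $X = Q(\pt_{k})$ precisely because the occupancy-measure map sends every policy into this polytope). This directly yields
\begin{align*}
    \ct_k^T q^{\pi}(\pt_{k}) - \ct_k^T q^{\pi_k^*}(\pt_{k}) + (\lambda_k^*)^T (\Dt_k q^{\pi}(\pt_{k}) - \alpha) \geq 0.
\end{align*}
Finally I would rewrite the occupancy-measure inner products as value functions via $V^{\pi}(\ct_k, \pt_{k}) = \ct_k^T q^{\pi}(\pt_{k})$ and $V^{\pi}(\Dt_k, \pt_{k}) = \Dt_k q^{\pi}(\pt_{k})$ (again from \cref{sec:occ-notation}) to recover exactly the claimed inequality.

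There is no genuine obstacle here beyond bookkeeping; the only point requiring care is confirming that the abstract hypotheses of \cref{ass:841} are legitimately met by the occupancy-measure reformulation, which \cref{rmk:match-convex} already handles, and that the inequality holds for \emph{every} $\pi \in \Pi$ rather than just the optimal one — this follows because $q^{\pi}(\pt_{k}) \in Q(\pt_{k}) = X$ for all $\pi$, so each such point is an admissible choice of $x$ in \cref{thm:conv-aux-ineq}.
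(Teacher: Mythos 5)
Your proposal is correct and follows exactly the paper's own route: both reduce \cref{eq:opt-KKT-occ} to the abstract program \cref{eq:opt-P} via \cref{rmk:match-convex}, invoke \cref{thm:conv-aux-ineq} with $x = q^{\pi}(\pt_{k})$, and translate back to value functions. Your additional care in spelling out the existence of $(\pi_k^*,\lambda_k^*)$ via strong duality and surjectivity of $\pi \mapsto q^{\pi}(\pt_{k})$ is consistent with, and slightly more explicit than, the paper's one-line argument.
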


\begin{proof}
    By \cref{rmk:match-convex}, we are in the setup of \cref{eq:opt-P} under Assumption \ref{ass:841}. Thus, by applying \cref{thm:conv-aux-ineq}, in the notation of \cref{sec:occ-notation} we immediately get
    \begin{align*}
        \ct_k^T q^{\pi}(\pt_{k}) - \ct_k^T q^{\pi_k^*}(\pt_{k}) + (\lambda_k^*)^T (\Dt_k q^{\pi}(\pt_{k}) - \alpha) \geq 0, 
    \end{align*}
    which proves the claim by plugging in the value functions. 
\end{proof}

The following lemma allows us to deduce \cref{cor:dual-bound}.

\begin{restatable}[]{lemma}{propdualbound}\label{prop:dual-bound}
    Let $k \in \curly{K'+1, \dots, K}$ and suppose \cref{eq:opt-KKT} is strictly feasible. Let $(\pi_k^*, \lambda_k^*)$ be a pair of primal-optimal and dual-optimal solutions for \cref{eq:opt-KKT} (see \cref{lemma:aux-ineq}). Then the iterates of \AlgName~satisfy
    \begin{align*}
        \| \lambda_{k+1} - \lambda_k^* \|^2 \leq& \| \lambda_k - \lambda_k^* \|^2 + 2\eta_k\epsilon_k.
    \end{align*}
\end{restatable}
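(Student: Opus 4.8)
The plan is to instantiate the one-step progress inequality of \cref{lemma:ialm-step} at the primal-dual optimizer $(\pi_k^*, \lambda_k^*)$ of the $k$-th optimistic CMDP \cref{eq:opt-KKT}, and then exploit two facts: the feasibility of $\pi_k^*$ for that CMDP, and the variational inequality of \cref{lemma:aux-ineq}. Concretely, setting $\pi = \pi_k^*$ and $\lambda = \lambda_k^*$ in \cref{lemma:ialm-step} yields
\begin{align*}
    V^{\pi_{k}}(\ct_k, \pt_{k}) + (\lambda_k^*)^T (V^{\pi_{k}}(\Dt_k, \pt_{k}) - \alpha) + \frac{1}{2\eta_k} \| \lambda_{k+1} - \lambda_k^* \|^2 \leq \Ltk(\pi_k^*, \lambda_k) + \frac{1}{2\eta_k} \| \lambda_k - \lambda_k^* \|^2 + \epsilon_k.
\end{align*}
The existence of the optimizer $(\pi_k^*,\lambda_k^*)$ used here is guaranteed by the assumed strict feasibility of \cref{eq:opt-KKT} together with \cref{lemma:aux-ineq}.

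Next I would bound the term $\Ltk(\pi_k^*, \lambda_k)$ from above. Since $\pi_k^*$ is primal-feasible for \cref{eq:opt-KKT}, we have $V^{\pi_k^*}(\Dt_k, \pt_{k}) - \alpha \leq 0$ componentwise, so that $\lambda_k + \eta_k (V^{\pi_k^*}(\Dt_k, \pt_{k}) - \alpha) \leq \lambda_k$. Because $\lambda_k \geq 0$, monotonicity of $x \mapsto \|[x]_+\|^2$ gives $\|[\lambda_k + \eta_k (V^{\pi_k^*}(\Dt_k, \pt_{k}) - \alpha)]_+\|^2 \leq \|\lambda_k\|^2$. Plugging this into the definition of $\Ltk$, the two $\frac{1}{2\eta_k}\|\lambda_k\|^2$ terms cancel, and we obtain the clean bound $\Ltk(\pi_k^*, \lambda_k) \leq V^{\pi_k^*}(\ct_k, \pt_{k})$.

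Substituting this bound into the displayed inequality and rearranging, the desired estimate reduces to showing that the quantity
\begin{align*}
    V^{\pi_{k}}(\ct_k, \pt_{k}) - V^{\pi_k^*}(\ct_k, \pt_{k}) + (\lambda_k^*)^T (V^{\pi_{k}}(\Dt_k, \pt_{k}) - \alpha) \geq 0,
\end{align*}
since a nonnegative quantity may simply be dropped from the right-hand side. But this is exactly \cref{lemma:aux-ineq} applied with the choice $\pi = \pi_k$. After discarding this term, the inequality reads $\frac{1}{2\eta_k}\|\lambda_{k+1} - \lambda_k^*\|^2 \leq \frac{1}{2\eta_k}\|\lambda_k - \lambda_k^*\|^2 + \epsilon_k$, and multiplying through by $2\eta_k$ gives the claim.

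I do not expect a genuine obstacle here: the argument is essentially the standard contraction estimate for the inexact augmented Lagrangian (proximal-point) iteration, adapted to the fact that in our setting the optimistic CMDP—and hence the comparison pair $(\pi_k^*, \lambda_k^*)$—changes with $k$. The one point requiring care is the choice of the comparison pair in \cref{lemma:ialm-step}: one must evaluate it at the optimizer of the \emph{current} episode's optimistic CMDP (rather than at $\pi^*$ or at the iterate), so that feasibility of $\pi_k^*$ annihilates the $\Ltk$ term and \cref{lemma:aux-ineq} supplies the matching nonnegativity. Everything else is algebraic rearrangement.
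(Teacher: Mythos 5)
Your proof is correct and follows essentially the same route as the paper's: instantiate the one-step progress inequality (\cref{lemma:ialm-step}) at $(\pi_k^*,\lambda_k^*)$, use primal feasibility of $\pi_k^*$ to get $\Ltk(\pi_k^*,\lambda_k)\leq V^{\pi_k^*}(\ct_k,\pt_k)$, and discard the nonnegative term supplied by \cref{lemma:aux-ineq} with $\pi=\pi_k$. The only difference is cosmetic ordering of the substitutions, so there is nothing to add.
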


\begin{proof}
    Fix an arbitrary point $(\pi_k^*, \lambda_k^*)$ as in \cref{lemma:aux-ineq}. Then \cref{lemma:aux-ineq} with $\pi = \pi_{k}$ (note that $\pi$ need not satisfy the constraints) we have
    \begin{align*}
        0\leq& V^{\pi_k}(\ct_k, \pt_{k}) - V^{\pi_k^*}(\ct_k, \pt_{k}) + (\lambda_k^*)^T (V^{\pi_k}(\Dt_k, \pt_{k}) - \alpha),
    \end{align*}
    and by feasibility of $\pi_k^*$ we have $\Ltk (\pi_k^*, \lambda_k) \leq V^{\pi_k^*}(\ct_k, \pt_{k})$, with $\Ltk (\pi, \lambda)$ as defined in \cref{lemma:ialm-step}. Thus, the above becomes
    \begin{align}
        0\leq& V^{\pi_{k}}( \ct_k, \pt_{k}) - \Ltk (\pi_k^*, \lambda_k) + (\lambda_k^*)^T(V^{\pi_{k}}(\Dt_k, \pt_{k}) - \alpha). \label{eq:kkt-plugin}
    \end{align}
    Moreover, from \cref{lemma:ialm-step} we know that 
    \begin{align*}
        2\eta_k(V^{\pi_{k}}( \ct_k, \pt_{k}) + \lambda^T (V^{\pi_{k}}(\Dt_k, \pt_{k}) - \alpha)) + \| \lambda_{k+1} - \lambda \|^2 \\
        \leq 2\eta_k \Ltk (\pi, \lambda_k) + \| \lambda_k - \lambda \|^2 + 2\eta_k\epsilon_k,
    \end{align*} 
    and we can choose $(\pi, \lambda) = (\pi_k^*, \lambda_k^*)$ in this inequality to get
    \begin{align*}
        2\eta_k(V^{\pi_{k}}( \ct_k, \pt_{k}) + (\lambda_k^*)^T (V^{\pi_{k}}(\Dt_k, \pt_{k}) - \alpha)) + \| \lambda_{k+1} - \lambda_k^* \|^2 \\
        \leq 2\eta_k \Ltk (\pi_k^*, \lambda_k) + \| \lambda_k - \lambda_k^* \|^2 + 2\eta_k\epsilon_k.
    \end{align*} 
    Adding $2\eta_k$ times \cref{eq:kkt-plugin} to this and cancelling terms yields
    \begin{align*}
        \| \lambda_{k+1} - \lambda_k^* \|^2 \leq& \| \lambda_k - \lambda_k^* \|^2 + 2\eta_k\epsilon_k.
    \end{align*}
\end{proof}

From \cref{prop:dual-bound}, we readily obtain \cref{cor:dual-bound}.

\cordualbound*

\begin{proof}
    Taking square root and using $\sqrt{a+b} \leq \sqrt{a} + \sqrt{b}$ in \cref{prop:dual-bound} yields $\| \lambda_{k+1} - \lambda_k^* \| \leq \| \lambda_k - \lambda_k^* \| + \sqrt{2\eta_k\epsilon_k}$. By triangle inequality we get 
    \begin{align*}
        \| \lambda_{k+1} - \lambda_k^* \| \leq& \| \lambda_k - \lambda_k^* \| + \sqrt{2\eta_k\epsilon_k} \leq \| \lambda_k - \lambda_{k-1}^* \| + \| \lambda_{k-1}^* - \lambda_k^* \| + \sqrt{2\eta_k\epsilon_k},
    \end{align*}
    and thus by induction 
    \begin{align*}
        \| \lambda_{k+1} - \lambda_k^* \| \leq& \|\lambda_{K'+1} - \lambda_{K'}^*\| +  \sum_{t=K'+1}^{k} \| \lambda_{t-1}^* - \lambda_t^* \| + \sum_{t=K'+1}^{k} \sqrt{2\eta_t \epsilon_t}\\
        \leq& \|\lambda_{K'+1}\|  + \| \lambda_{K'}^*\| +  \sum_{t=K'+1}^{k} (\| \lambda_{t-1}^*\| + \|\lambda_t^* \|) + \sum_{t=K'+1}^{k} \sqrt{2\eta_t \epsilon_t}\\
        =& \|\lambda_k^*\| + 2 \sum_{t=K'}^{k-1} \| \lambda_t^*\| + \sum_{t=K'+1}^{k} \sqrt{2\eta_t \epsilon_t},
    \end{align*}
    again using the triangle inequality and that $\lambda_{K'+1} = 0$. The claim now follows by invoking the inverse triangle inequality on the LHS and rearranging. 
\end{proof}


\subsubsection{Bounding the Dual Maximizers} \label{sec:dual-max-bound}

We can now deduce \cref{thm:lstar-bound} from the preliminaries shown in \cref{sec:conv-prelim} as follows. See \cref{lemma:aux-ineq} from the previous section for a formal introduction of the primal-dual pairs $(\pi_k^*,\lambda_k^*)$.

\thmlstarbound*

\begin{proof}
    Conditioned on $G$, the conclusion from \cref{prop:pretraining} holds. That is, for every $k \in \curly{K', \dots, K}$ and every $i\in [I]$ we have 
    \begin{align*}
        V^{\pibar}(\dt_{i,k}, \pt_{k}) \leq \alpha_i-\nu\gamma,
    \end{align*}
    or equivalently, $\min_{i\in[I]} ( \alpha_i - V^{\pibar}(\dt_{i,k}, \pt_{k}) ) \geq \nu\gamma$. Moreover, since the expected costs are in $[0,1]$ and the time horizon is $H$, the value functions are in $[0,H]$, so we have $V^{\pibar}(\ct_k, \pt_{k}) - V^{\pi^*_k}(\ct_k, \pt_{k}) \leq H$. Hence Assumption \ref{ass:slater} holds, with $\pi^0 = \pibar$ and $\sigma = \frac{H}{\nu \gamma}$.

    As discussed in \cref{rmk:match-convex}, under Assumption \ref{ass:slater}, we can apply the general bound on dual maximizers from \cref{thm:conv-dual}. For this, we write \cref{eq:opt-KKT} as a convex optimization problem in the occupancy measure (see \cref{eq:opt-KKT-occ}), with $V^{\pi}(\ct_k, \pt_{k}) = \ct_k^T q^{\pi}(\pt_{k})$ and $V^{\pi}(\Dt_k, \pt_{k}) = \Dt_k q^{\pi}(\pt_{k})$. Then, set $X=Q(\pt_k)$, $\xbar = q^{\pibar}(\pt_{k})$, $f(\cdot) = \ct_{k}^T(\cdot)$ and $g_i(\cdot) = \dt_{i,k}^T(\cdot) - \alpha_i$ as in \cref{rmk:match-convex}. Plugging this into \cref{thm:conv-dual} indeed yields the second of the claimed inequalities since we have shown that Assumption \ref{ass:slater} holds with $\pi^0 = \pibar$, $\sigma = \frac{H}{\nu \gamma}$. The first inequality holds because $\lambda_k^*$ only has non-negative entries.
\end{proof}


\subsubsection{Optimization Error Bound}

We are now ready to prove the desired bound on the optimization errors.

\lemmaopterrork*

\begin{proof}
    Recall that conditioned on $G$, \cref{lemma:fdiff} shows ($k \in \curly{K'+1, \dots, K}$)
    \begin{align}
        [V^{\pi_{k}}( \ct_k, \pt_{k}) - V^{\pi^*}(c, p) ]_+ \leq& \epsilon_k + \frac{\| \lambda_{k} \|^2}{2\eta_k}, \label{eq:obj}\\
        [V^{\pi_{k}}(\dt_{i,k}, \pt_{k}) - \alpha_i]_+ \leq& \frac{\|\lambda_{k+1} \|}{\eta_k}, \label{eq:constr}
    \end{align}
    when dropping the negative terms in \cref{lemma:fdiff}. Moreover, from \cref{cor:dual-bound} and \cref{thm:lstar-bound} we get 
    \begin{align}
        \|\lambda_{k+1}\| \leq& ~2 \sum_{t=K'}^{k} \| \lambda_t^*\| + \sum_{t=K'+1}^{k} \sqrt{2\eta_t \epsilon_t} \nonumber\\
        \leq& ~2\sum_{t=K'}^{k} \sigma + \sum_{t=K'+1}^{k} \sqrt{2\eta_t \epsilon_t} \nonumber\\
        =& (2+2(k-K'))\sigma + \sum_{t=K'+1}^{k} \sqrt{2\eta_t \epsilon_t}, \label{eq:lbd-aux1}
    \end{align}
    where $\sigma = \frac{H}{\nu \gamma}$. Note that in the bound from \cref{eq:obj}, for $k=K'+1$ we have $\| \lambda_{K'+1} \|^2 = 0 \leq ((2+2 \cdot 1)\sigma + \sqrt{2\eta_{K'+1} \epsilon_{K'+1}})^2$ since $\lambda_{K'+1}=0$. And for $k \geq K' + 2$, by \cref{eq:lbd-aux1} we have $\| \lambda_{k} \|^2 \leq ((2+2((k-1)-K'))\sigma + \sum_{t=K'+1}^{k-1} \sqrt{2\eta_t \epsilon_t})^2 \leq ((2+2(k-K'))\sigma + \sum_{t=K'+1}^{k} \sqrt{2\eta_t \epsilon_t})^2$. Hence, the parameter-dependent bound on the objective errors follows by plugging this into \cref{eq:obj} and summing up. For the constraint errors, we can directly plug in the bound from \cref{eq:lbd-aux1} into \cref{eq:constr}, thus obtaining the second parameter-dependent bound.

    From this, we now show how to obtain bounds of order $O(\sqrt{K})$. Note that we can always choose $\epsilon_k := \frac{1}{2\eta_k}$ so that $\sum_{t=K'+1}^k \sqrt{2\eta_t \epsilon_t} = k-K' \leq \sigma (k-K')$, since $\sigma = \frac{H}{\nu \gamma} > 1$. If we do so, then we can loosely bound the term in \cref{eq:lbd-aux1} by 
    \begin{align}
        (2+2(k-K'))\sigma + \sum_{t=K'+1}^{k} \sqrt{2\eta_t \epsilon_t} \leq (2+3(k-K'))\sigma. \nonumber
    \end{align}
    As seen above, with this, \cref{eq:obj} and \cref{eq:constr} become
    \begin{align}
        [V^{\pi_{k}}( \ct_k, \pt_{k}) - V^{\pi^*}(c, p) ]_+ &\leq \epsilon_k + \frac{ ((2+3(k-K'))\sigma)^2 }{2\eta_k} \label{eq:fdiff-c},\\
        [V^{\pi_{k}}(\dt_{i,k}, \pt_{k}) - \alpha_i]_+ &\leq \frac{ (2+3(k-K'))\sigma }{\eta_k}. \label{eq:gdiff-d}
    \end{align}
    We can now choose\footnote{Note that if $\gamma$ is not known, having an estimate of $\sigma = \frac{H}{\nu \gamma}$ that is larger than the true value is clearly sufficient.} 
    \begin{align*}
        \eta_{K'+k} &:= ((2+3k)\sigma)^{2.5} \tab (k>0)
    \end{align*}
    and sum each of the two errors above to bound the regret due to optimization. For the objective cost, we get (recall that we had set $\epsilon_k = 1/(2\eta_k)$)
    \begin{align*}
        &\sum_{k=K'+1}^{K} [V^{\pi_{k}}( \ct_k, \pt_{k}) - V^{\pi^*}(c, p) ]_+ \\
        &\overset{\cref{eq:fdiff-c}}{\leq} \sum_{k=K'+1}^{K} \epsilon_k + \frac{1}{2}  \sum_{k=K'+1}^{K} \frac{( (2+3(k-K'))\sigma)^2}{((2+3(k-K'))\sigma)^{2.5}}\\
        &\leq \frac{1}{2}  \sum_{k=1}^{K} \frac{1}{( (2+3k)\sigma)^{2.5}}  + \frac{1}{2}  \sum_{k=1}^{K} \frac{1}{\sqrt{(2+3k)\sigma}}\\
        &= O(K^{1/2} ), 
    \end{align*}
    up to absolute constants, where the last step follows by considering the dominating series $\sum_{k\geq 1} 1/k^2 = \pi^2 / 6$ and using $\sum_{k=1}^K 1/\sqrt{k} = \Theta(\sqrt{K})$, as well as $\sigma > 1$. Similarly, for the constraint cost, we get
    \begin{align*}
        \sum_{k=K'+1}^{K} [V^{\pi_{k}}(\dt_{i,k}, \pt_{k}) - \alpha_i]_+ \leq&  \sum_{k=K'+1}^{K} \frac{ ((2+3(k-K'))\sigma)}{ ( (2+3k)\sigma)^{2.5} }\\
        \leq& \sum_{k=1}^{K} \frac{ 1}{ ( (2+3k)\sigma)^{1.5} } \\
        =& O(K^{1/2}),
    \end{align*}
    up to absolute constants. 
\end{proof}

We remark that the choices for $\eta_k$ and $\epsilon_k$ are not necessarily optimal, and there is a trade-off between the regret due to the optimization error and the iteration complexity of the inner loop, as we discuss in \cref{rmk-main-param}.

\end{document}